\documentclass[nohyperref]{article}

\usepackage{microtype}
\usepackage{graphicx}
\usepackage{subfigure}
\usepackage{booktabs} 

\usepackage{hyperref}

\usepackage[accepted]{icml2022}

\usepackage{amsmath}
\usepackage{amssymb}
\usepackage{mathtools}
\usepackage{amsthm}

\usepackage[capitalize,noabbrev]{cleveref}

\theoremstyle{plain}
\newtheorem{theorem}{Theorem}[section]

\theoremstyle{definition}

\theoremstyle{remark}

\usepackage[textsize=tiny]{todonotes}

\setlength{\marginparwidth}{1.5cm}

\usepackage{nicefrac}       
\usepackage{microtype}      

\usepackage{amsmath,amsfonts,bm}
\usepackage{algorithm,algorithmic}


















\def\1{\bm{1}}

\def\eps{{\epsilon}}








\def\vxi{{\bm{\xi}}}

\def\vomega{{\bm{\omega}}}
\def\vdelta{{\bm{\delta}}}

\def\vp{{\bm{p}}}

\def\vx{{\bm{x}}}
\def\vy{{\bm{y}}}
\def\vz{{\bm{z}}}






\DeclareMathAlphabet{\mathsfit}{\encodingdefault}{\sfdefault}{m}{sl}
\SetMathAlphabet{\mathsfit}{bold}{\encodingdefault}{\sfdefault}{bx}{n}








\newcommand{\E}{\mathbb{E}}

\newcommand{\R}{\mathbb{R}}


%
\DeclareMathOperator*{\argmax}{arg\,max}

\DeclareMathOperator{\sign}{sign}

\usepackage{amssymb}


\usepackage{xcolor}
\definecolor{mydarkblue}{rgb}{0,0.08,0.45}
\definecolor{mygreen}{rgb}{0.032, 0.6392, 0.2039}
\definecolor{mypurple}{HTML}{B266FF}

\def\LL{{\mathcal L}}

\def\LL{{\mathcal L}}




\newcommand\ignore[1]{}
\newcommand{\norm}[1]{\left\| #1 \right\|}
\usepackage[inline, shortlabels]{enumitem}  
\usepackage{caption}
\usepackage{dsfont}
\usepackage{graphicx}

\usepackage{bm,xcolor}
\usepackage{multirow}
\usepackage{colortbl}

\usepackage{tabularx}
\usepackage{multirow}
\newcolumntype{Y}{>{\centering\arraybackslash}X}

\usepackage{arydshln}  
\usepackage{tikz}
\usetikzlibrary{arrows,positioning,calc}
\usepackage{placeins}

\icmltitlerunning{Improving Generalization via Uncertainty Driven Perturbations}

\begin{document}
\twocolumn[
\icmltitle{Increasing the Classification Margin with Uncertainty Driven Perturbations}


\begin{icmlauthorlist}
\icmlauthor{Matteo Pagliardini}{epfl}
\icmlauthor{Gilberto Manunza}{epfl}
\icmlauthor{Martin Jaggi}{epfl}
\icmlauthor{Michael I. Jordan}{ucb}
\icmlauthor{Tatjana Chavdarova}{ucb}
\end{icmlauthorlist}

\icmlaffiliation{epfl}{EPFL, Switzerland}
\icmlaffiliation{ucb}{UC Berkeley, USA}

\icmlcorrespondingauthor{Matteo Pagliardini}{matteo.pagliardini@epfl.ch}

\icmlkeywords{generalization,robustness,neural networks}

\vskip 0.3in
]



\printAffiliationsAndNotice{\icmlEqualContribution} 

\begin{abstract}
Recently~\citeauthor{shah2020pitfalls},~\citeyear{shah2020pitfalls} pointed out the pitfalls of the \textit{simplicity bias}---the tendency of gradient-based algorithms to learn simple models---which include the model's high sensitivity to small input perturbations, as well as sub-optimal margins. In particular, while Stochastic Gradient Descent yields max-margin boundary on linear models, such guarantee does not extend to non-linear models. To mitigate the simplicity bias, we consider uncertainty-driven perturbations (UDP) of the training data points, obtained iteratively by following the direction that maximizes the model's estimated uncertainty. The uncertainty estimate does not rely on the input's label and it is highest at the decision boundary, and---unlike loss-driven perturbations---it allows for using a larger range of values for the 
perturbation magnitude. 
Furthermore, as real-world datasets have non-isotropic distances between data points of different classes, the above property is particularly appealing for increasing the margin of the decision boundary, which in turn improves the model's generalization. We show that UDP is guaranteed to achieve the maximum margin decision boundary on linear models and that it notably increases it on challenging simulated datasets. For nonlinear models, we show empirically that UDP reduces the simplicity bias and learns more exhaustive features. Interestingly, it also achieves competitive loss-based robustness and generalization trade-off on several datasets.
Our codebase is available here:~\href{https://github.com/mpagli/Uncertainty-Driven-Perturbations}{github.com/mpagli/Uncertainty-Driven-Perturbations}
\end{abstract}

\section{Introduction}\label{sec:intro}

While neural networks can represent any function~\citep{hornik1989universal} and large models perfectly ``fit'' the \textit{training} data, a question arises if these models \textit{generalize well} for the task at hand, i.e., if the accuracy on data samples that were \textit{not} used during the training is high. For safety critical applications, addressing this question is often complex as it involves assessing the biases of the given dataset~\citep{bolukbasi2016,gebru2021datasheets,shah2020pitfalls}, the inductive bias of the predefined model~\citep{fukushima1982,LeCun1998,botev2021priors}, the implicit bias of the optimization method used to train the model~\citep{Neyshabur2015,zhang2017}, and their ``interaction''. In this paper, we limit our focus to that of \textit{given a training dataset and a predefined model}, improving the chances that the model generalizes well.
In this aim, a useful concept is that of the \textit{margin} of a classifier $\mathcal{C}_\vomega: \vx \mapsto \hat{\vy}$, with $\vx\in\R^d$ denoting a data sample of finite dataset $\{ \vx_i,\vy_i\}_{i=1}^N$ drawn from the training data distribution $p_d$, $\hat{\vy} \in \R^c$ its prediction for $c$ possible classes, and $\vomega\in\R^m$ the parameters of the model.
We define the margin as the maximum of the margins of the single points $m=\max(m(\vx_1),\dots,m(\vx_N))$, where the margin $m(\vx_i)$ of a point $\vx_i$ is defined as the smallest distance from it to a decision boundary $ \mathcal{B}$, i.e., $m(\vx_i) = \min_{\tilde{\vx} \in \mathcal{B}} \norm{\vx_i-\tilde{\vx}} $.

\begin{figure*}[ht]
  \centering
  \subfigure[$\epsilon_{train} \leq \epsilon_1$: UDP \& LDP]{
  \label{subfig:illust_c1}
  \includegraphics[width=.24\linewidth]{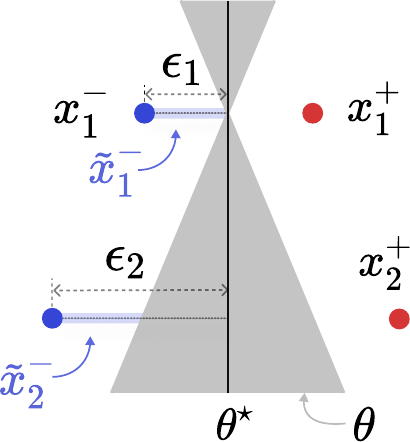}} \hspace{.4cm}
  \subfigure[\ref{eq:ldp} landscape and perturbations]{
  \label{subfig:illust_c2}
  \includegraphics[width=.265\linewidth,trim={0 .0cm 0 .4cm},clip]{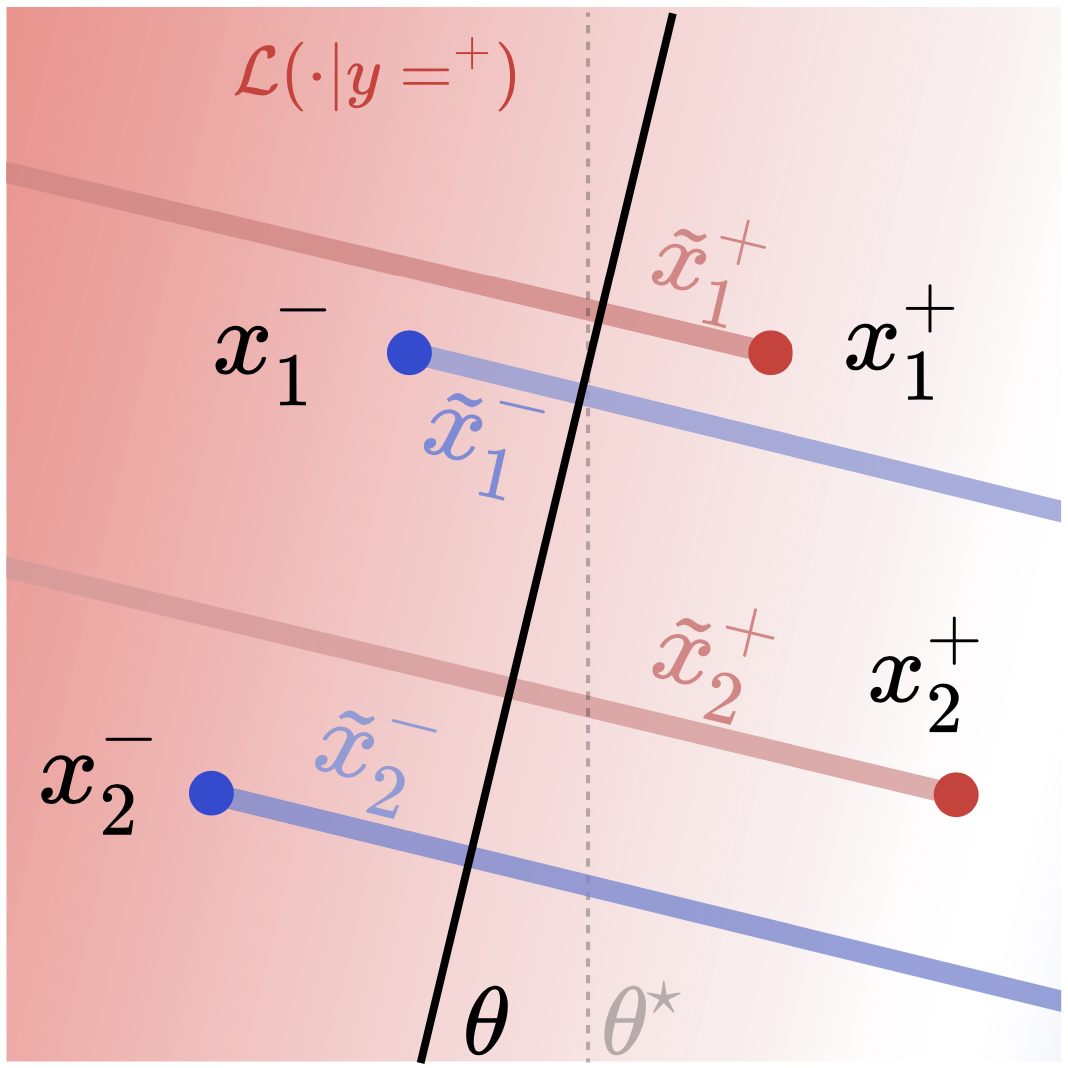}}
  \hspace{.4cm}
  \subfigure[\ref{eq:udp} landscape and perturbations]{
  \label{subfig:illust_c3}
  \includegraphics[width=.255\linewidth,trim={0 .0cm 0 .0cm},clip]{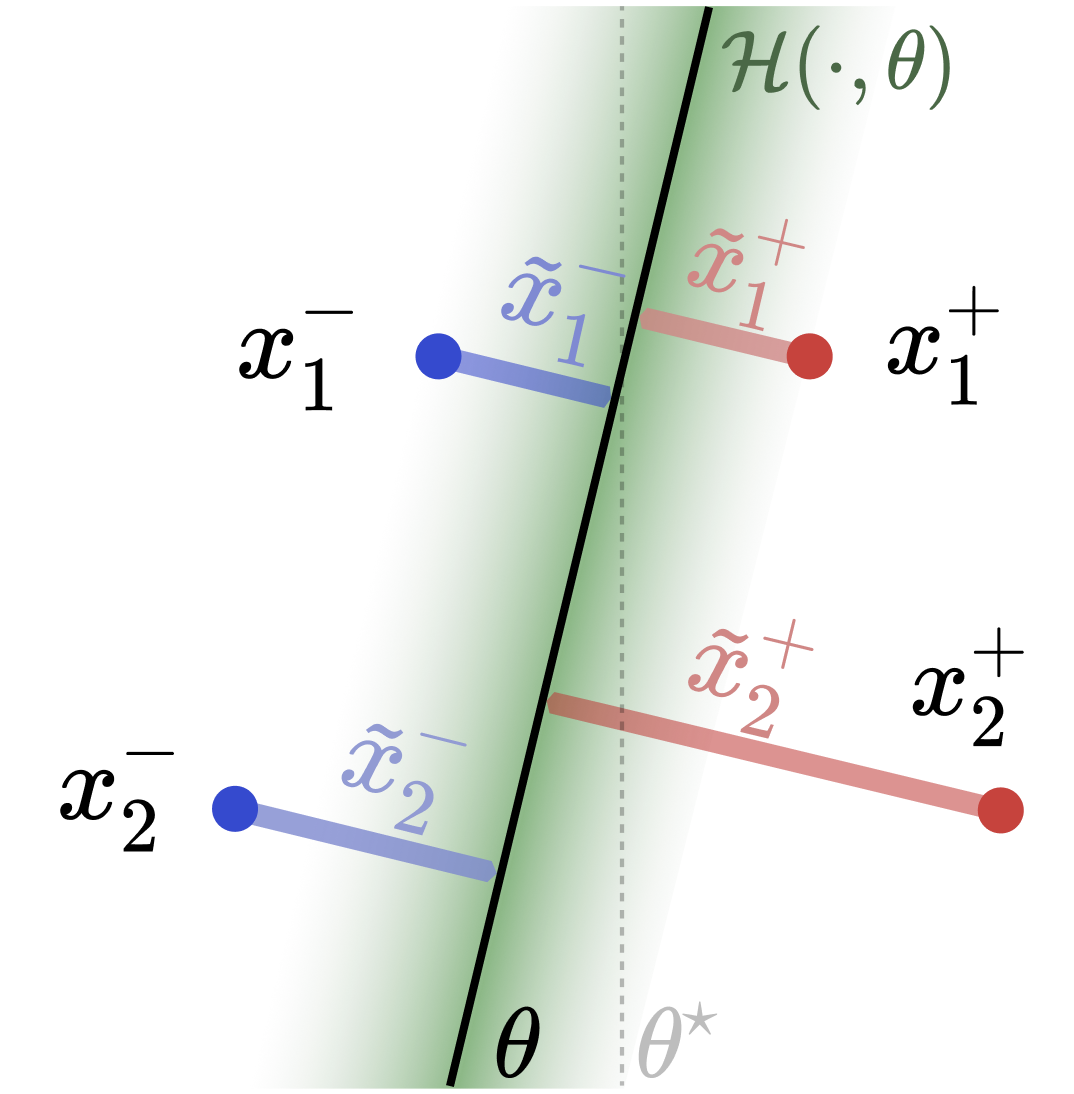}}
\vspace{-.2cm}
\caption{
Pictorial representations of~\ref{eq:ldp} and~\ref{eq:udp}. 
Circles represent training samples, and color their class $y\in\{-,+\}$. 
$\theta$ and $\theta^\star$ denote the current decision boundary and the optimal one, resp.
\textbf{Fig.~\subref{subfig:illust_c1}.}
Given \textit{ideal} data-perturbation training procedure using $\epsilon_{train}<\epsilon_1$ the perturbed data points (either with UDP or LDP) will belong to the shaded blue area (assuming the boundary $\theta$ has $0$ training loss).
Thus, the final decision boundary will belong to the shaded grey area, and we \textit{cannot} guarantee the margin is maximized--see~\S~\ref{sec:intro}.
For clarity, only the perturbations $\tilde\vx^-$ of the negative (blue) class are shown.
\textbf{Fig.~\subref{subfig:illust_c2}} illustrates what is happening for LDP when $\eps$ is potentially large. The loss used for LDP perturbations (of positive samples) keeps increasing the further we go on the left of $\theta$. 
For clarity, only the loss landscape of \textit{positive} samples is shown, where darker is higher.
In \textbf{Fig.~\subref{subfig:illust_c3}}, we see \textit{why UDP's property of not crossing the boundary is advantageous}.
The green shade depicts the uncertainty, 
where darker is higher -- starting from a point with negative label the uncertainty increases, reaches its maximum, and then decreases.
Thus, the LDP and UDP perturbed samples---obtained by iteratively following the direction that \textit{maximizes} these quantities---differ, and their possible regions are shown in red and blue shade, resp. for the positive and negative class.
When the perturbed samples from the positive class pass $\theta^\star$, LDP yields training pair $(\tilde x_i^+, y_i)$ with a semantically wrong label.
This indicates that to avoid oscillations (caused by wrongly labeling input space regions), for LDP we are restricted to using $\eps_{train}\leq \eps_1$, but as Fig.~\subref{subfig:illust_c1} shows, this can be restrictive.
On the other hand, for UDP we can have relatively larger $\eps$ which  ``locally adapts'' and is effectively smaller in some region and larger in others.
See \S~\ref{sec:method} for discussion.
}\label{fig:illustration_udp_ldp}
\end{figure*}

Stochastic Gradient Descent (SGD,~\citet{robbins1951stochastic,kiefer1952sgd,bottou2010SGD})--and variants of it--is the \textit{de facto} optimization method for training neural networks.
Interestingly, on linearly separable datasets, SGD provably converges to the \textit{maximum-margin} linear classifier~\citep{soudry2018implicit}, thereby achieving a superior generalization performance.
Unfortunately,~\citeauthor{shah2020pitfalls}, \citeyear{shah2020pitfalls} rigorously showed that this propriety of SGD does \textit{not} follow on non-linear models such as the commonly used neural network architectures.
In particular, the authors consider a nonlinear neural network with three parameters and a dataset for which we know that achieving max-margin requires using \textit{all} the input coordinates. The authors then show that the model effectively uses \textit{only one} of the input space coordinates, since the two other parameters are close to 0~\citep[see Thm. 1 of][]{shah2020pitfalls}.
This phenomenon is referred to as the ``simplicity bias'', i.e. ``the tendency of standard training procedures such as SGD to find simple models''.
While several works argue that the simplicity bias can act as a form of regularization forcing the boundary hyperplane to be ``simpler'' and thus improve the generalization~\citep{arpit1017,dziugaite2017}, others argue that in general, obtaining the simplest model can cause it to falsely rely on \textit{spurious} features~\citep[see ][ and references therein]{yixin2021jordan} and thus it may generalize poorly.
Other examples where the simplicity bias is adverse include the related fields of \textit{transfer learning}~\citep{yosinski2014,Wang2020Pay,shafahi2020adversarially} and \textit{distributional shift}~\citep{sun2015return,amodei2016concrete,Wang2020Pay} where ``exhaustive'' features are required for good performances on the transfer task.

In addition, the hypothesis that simplicity bias aids generalization does not support the evidence that small perturbations added to the input can easily ``fool'' well-performing DNNs into making wrong predictions~\citep{battista2013evasion,szegedy2014intriguing}--referred to as robustness to perturbations.
This observation motivated an active line of research on \textit{adversarial training}~\citep[see ][ and references therein]{Biggio18adversarial}. AT methods exploit the prior knowledge for some applications, that all data points~$\tilde\vx$ within a small region around a training data point  $\vx_i,  i \in [1, N]$ belong to the same class:
\begin{equation}\tag{AT-Asm}\label{eq:at_assumption}
\begin{aligned}
   &\norm{\tilde\vx - \vx_i } \leq \epsilon \Rightarrow \tilde\vy = \vy_i, \quad \text{and} \quad \nexists j\in[1,N], \\
   &\text{ s.t. } \quad j \neq i,  \vy_j \neq \vy_i, \norm{ \tilde\vx - \vx_j } \leq \epsilon  \,.
\end{aligned}
\end{equation}
Starting from a training datapoint $\vx_i$, AT methods \textit{iteratively} find a perturbation $\tilde \vx_i^{0}\triangleq\vx_i$ using the loss of the classifier $\mathcal{L}(C_\vomega (\vx_i^{j}), \vy_i ), j=0,\dots,k$ as an objective by following the direction that \textit{maximizes} it, see \S~\ref{sec:preliminaries}--herein called \textit{loss-driven perturbations} (LDP).
Popular such methods are \textit{projected gradient descent}~\citep[PGD, ][]{madry2018towards}---where the inner maximization step is implemented with $k$ gradient ascent steps, and its ``fast variants''---which take only one step to compute the perturbation $\vdelta$, e.g., \textit{fast gradient sign method}~\citep[FGSM, ][]{goodfellow2015explaining},  see \S~\ref{sec:preliminaries}.
However, further empirical studies show that~\ref{eq:ldp} methods often reduce the average accuracy on ``clean'' unperturbed test samples, indicating the two objectives---robustness and clean accuracy---might be competing~\citep{tsipras2019robustness,su2018robustness}, and~\citep{bubeck2021law} provide conjecture on the inherent trade-offs between the size of a two-layer neural network and its robustness.
Finally,~\citep{ZhangTRADES} show that learning models are vulnerable to small adversarial perturbations because the probability that data points lie around the decision boundary of the model is high~\citep{ZhangTRADES}.
In this work, we argue that the problems of poor robustness and high simplicity bias are related and that these can be improved by increasing the margin of the decision boundary.

A separate line of work aims at increasing the interpretability of the model by associating to each of its predictions an \textit{uncertainty estimate}~\citep{kim2016,doshivelez2017rigorous}. 
Two main uncertainty types in machine learning are considered:
\begin{enumerate*}[series = tobecont, itemjoin = \quad, label=(\roman*)]
\item \textit{aleatoric}--describing the \textit{noise} inherent in the observations, as well as
\item \textit{epistemic}--uncertainty originating \textit{from the model}.
\end{enumerate*}
While the former \textit{cannot} be reduced, the latter arises due to insufficient data to train the model, and it \textit{can} be explained away given enough data (assuming access to universal approximation function). In the context of classification, apart from capturing high uncertainty due to overlapping regions of different classes, the epistemic uncertainty also captures which regions of the data space are not ``visited'' by the training samples.
Importantly, as \textit{many} functions can perfectly fit the training data--i.e. the function represented by the DNN model is often non-identifiable~\citep{raue2013,martn2010BayesianI,moran2021identifiable}, and given an input $\tilde\vx$ the uncertainty estimate incorporates quantification if possible hypothesis function ``agree'' on the prediction of $\tilde\vx$. 
It is thus natural to consider that training with perturbations that \textit{maximize} this quantity will most efficiently narrow
down the hypothesis set, leaving out the functions that increase the margin.

In summary,
\begin{enumerate*}[series = tobecont, itemjoin = \quad, label=(\roman*)]
\item a promising approach of incorporating inductive bias for improved generalization in training procedure is to use the prior knowledge that the label in the region around the training data points does not change; and on a separate note
\item the epistemic uncertainty quantifies ``how much the modeler sees a possibility to reduce its uncertainty by gathering more data''~\citep{Kiureghian2009}. 
\end{enumerate*}
Moreover, since the former yields infinite training datasets, it is necessary to use efficient ``search'' of perturbations.
Given these considerations, and in aim to improve the generalization of the neural networks, a natural question that arises is: 
\begin{itemize}[leftmargin=*,noitemsep,topsep=0pt,partopsep=0pt,parsep=0pt]
\item \textit{Can we use the direction of maximum uncertainty estimates to guide the search for efficient input-data perturbations?} 
\textit{If yes, are there advantages over loss-driven perturbations?}
\end{itemize}

Since the quantitative methods yielding uncertainty estimates for neural networks are differentiable, we answer the former question affirmatively. In this paper, we consider ``uncertainty driven perturbations'' (UDP)--an inductive bias training framework that uses the model's estimated estimates to find effective perturbations of the training samples.

\noindent\textbf{Contributions.} Our contributions can be summarized as: 
\begin{itemize}[itemsep=-.05em]
    \item
    We first show that LDP suffers from high hyperparameter sensitivity of $\eps$: \textit{(i)} small $\eps$ may lead to reduced generalization, and \textit{(ii)} large $\eps$ to unstable training and reduced test accuracy.   
    \item 
    We propose uncertainty-driven perturbations (UDP) as a general family of algorithms that use the model's uncertainty estimate for finding efficient perturbations of the training data samples.
    We show analytically that UDP converges to the max-margin for a simplistic linear setting, on which LDP does not. We further provide geometrical intuition on the advantages of UDP--namely, UDP reduces the $\eps$-sensitivity relative to LDP. 
    \item Empirically, we show that \textit{(i)} albeit not trained explicitly, UDP is also robust to loss-based perturbations of the test sample, while maintaining a competitive trade-off with generalization on Fashion-MNIST, CIFAR-10, and SVHN \textit{(ii)} it reduces the simplicity bias--as shown by the transferability of the features learned on CIFAR-10 to CIFAR-100, as well as \textit{(iii)} it improves the generalization in low data regime when using latent space perturbations.
\end{itemize}

\noindent\textbf{Structure.}
The background above, along with the preliminaries in \S~\ref{sec:preliminaries} suffice for the remaining of the paper. As this work is related to several relevant lines of works, a more extended overview is deferred to App.~\ref{app:related}, due to space constraints. 
Following the preliminaries, we
\begin{enumerate*}[series = tobecont, itemjoin = \quad, label=(\roman*)]
\item list motivating examples on the failure cases of LDP~\ref{sec:example},
\item formulate the proposed method, discuss its convergence and advantages in \S~\ref{sec:method}, and
\item empirical results are listed in \S~\ref{sec:exp}.
\end{enumerate*}

\begin{figure}
    \centering
    \includegraphics[width=.5\linewidth]{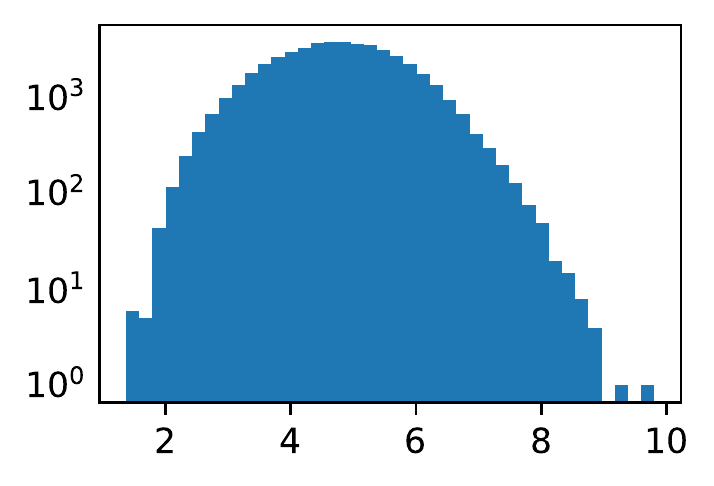}
    \vspace{-.3cm}
    \caption{Histogram of Euclidean distances in input space 
    for CIFAR-10 training samples to their closest neighbour of \textit{opposite class}. That is, for each data sample $\vx_i$, 
    we take the 1-nearest neighbour of different class $\vx_j$, with $\vy_i \neq \vy_j$, and we compute their distance. 
    See \S~\ref{sec:example} for discussion.
    }
    \label{fig:historgram_distances_cifar10}
    \vskip -.2em
\end{figure}

\section{Preliminaries}\label{sec:preliminaries}

\noindent\textbf{Training with perturbations.} 
In this section, we consider the supervised setting for simplicity. The perturbations can be applied in either the input space $\mathcal{X}\triangleq \R ^d$ or the latent space of the model $\mathcal{C}_\vomega(\cdot)$, i.e., $\mathcal{Z}\triangleq \R^l$, with $l$ denoting the dimension of the latent space:
\vspace*{-.1cm}
\begin{equation}\tag{ERM-P}\label{eq:at_erm}
    \min_\vomega \displaystyle \mathop{\mathbb{E}}_{(\vx,\vy)\sim p_d} 
    \big[  \LL( \mathcal{C}_\vomega (\mathcal{E}(\vx)\!+\!\vdelta), \vy)\big] \,,
\end{equation}  
where the encoder $\mathcal{E}$ is the identity when perturbations are applied in the input space, and $\mathcal{L}$ denotes the loss function,. For the latter, with abuse of notation, the model can be seen as having an encoder part $\mathcal{E}:\vx\mapsto\vz$ and a classification part $\mathcal{C}_\vomega:\vz\mapsto\vy$, and in that case $\vdelta \in \R^l$.

\noindent\textbf{Loss Driven Perturbations (LDP).} 
Common adversarial training methods \textit{greedily} target the robustness weakness of DNNs by using \textit{loss}-driven perturbations which  find $\vdelta\triangleq\vdelta_\ell$ as follows:
\vspace*{-.23cm}
\begin{equation}\tag{LDP}\label{eq:ldp}
    \vdelta_\ell = 
    \max_{\vdelta\in \Delta} \LL( \mathcal{C}_\vomega (\mathcal{E}(\vx)\!+\!\vdelta), \vy) \,,
\end{equation}
where the added \textit{worst-case} perturbation $\vdelta\in\R^d$ is constrained to a small region around the sample $\vx$, typically $\Delta \triangleq \{\delta \text{ s.t. } \| \delta\|_k <\epsilon \}$  with $\epsilon > 0$, and $k \in \{1,2,\infty\}$.

\noindent\textbf{Common perturbation-search methods.} 
The~\eqref{eq:ldp} maximization problem can be implemented in several ways. 
As in general the optimization is non-convex,~\citet{lyu2015unified} propose approximating the inner maximization problem with Taylor expansion and then applying a Lagrangian multiplier.
For $\ell_\infty$-bounded attacks, this linearization yields the FGSM~\citep{goodfellow2015explaining} method: 
\vspace*{-.1cm}
\begin{equation}\tag{FGSM}\label{eq:fgsm}
    \vdelta_{\text{FGSM}} \triangleq \varepsilon \cdot \text{Sign}\big(\mathop{\nabla}\limits_{\vx}  \LL(\mathcal{C}_\vomega (\vx), \vy)\big)\,,
\end{equation}
where $\text{Sign}(\cdot)$ denotes the sign function. 
The PGD method~\citep{madry2018towards} applies \ref{eq:fgsm} for $i= 1, \dots, k$ steps (with $\vdelta_{\text{PGD}}^{0} \triangleq \mathbf{0}$): 
\noindent
\begin{equation}\tag{PGD}\label{eq:pgd}
    \vdelta_{\text{PGD}}^{i} \triangleq  \mathop{\Pi}\limits_{\|\cdot\|_{\infty} \leq \epsilon} \Big( \vdelta_{\text{PGD}}^{i-1} + \alpha \cdot \text{Sign}\big(\mathop{\nabla}\limits_{\vx} \LL(\mathcal{C}_\vomega (\vx+ \vdelta_{\text{PGD}}^{i-1}), \vy)\big)
    \Big) \,,
\end{equation}
where $\alpha\in[0,1]$ is selected step size, and $\Pi$ is projection on the  $\ell_\infty$--ball.
PGD with $k$ steps is often referred to as PGD-$k$. See also App.~\ref{app:related_at} for additional methods.

\noindent\textbf{Training with $f(\tilde\vx, \cdot)$ regularizer}. Instead of using solely the perturbed samples during training,~\citep{ZhangTRADES} propose to use these for a regularization term to encourage smoothness of the output around the training samples: 
\begin{equation}\tag{TRADES}\label{eq:trades}
\min_{\omega} 
\displaystyle \mathop{\mathbb{E}}_{(\vx,\vy)\sim p_d} 
\big[ \mathcal{L}(\mathcal{C}_\vomega(\vx), \vy) + \lambda \max_{\delta \in \Delta} \mathcal{L}(\mathcal{C}_\vomega(\vx), \mathcal{C}_\omega(\vx+\delta))  \big] \,,
\end{equation}
where $\lambda \in (0,1)$.
The perturbation $\tilde\vx$ is found such that the model's output for it disagrees with the output for the training point $\vx$ as much as possible, and training aims to smoothen the output of $C_\vomega(\cdot)$ locally around $\vx$.

\noindent\textbf{Uncertainty estimation.} 
For brevity, here we focus on the uncertainty estimation methods relevant for the remaining of the main paper, see App.~\ref{app:related} for a more extensive overview. 
Given an ensemble of $M$ models $\{\mathcal{C}_{\vomega}^{(m)}\}_{m=1}^M$ (for example, trained independently or sampled with MC Dropout~\citep{gal2016mcdropout}, see App.~\ref{app:related}) where each model outputs non-scaled values--``logits'', we define 
$\hat{\vy} \in \R^C$ as the average prediction: 
$
    \hat{\vy} = \frac{1}{M} \sum_{m=1}^M \text{Softmax}\big(\mathcal{C}^{(m)}_\vomega(\vx)\big) \,.
$
Given $\hat{\vy}$, there are several ways to estimate the model's uncertainty, see~\citep[][ \S 3.3]{Gal2016Uncertainty}.
In the remaining, we use the \textit{entropy} of the output distribution (over the classes) to quantify the uncertainty estimate of a given sample $\vx$:
\vspace*{-.1cm}
\begin{equation}\tag{E}\label{eq:entropy}
    \mathcal{H}(\vx, \vomega) = - \sum_{c \in C}  \hat{\vy}_c \log \hat{\vy}_c \,.
\end{equation}

\begin{figure*}[ht] 
  \centering
  \subfigure{ 
  \includegraphics[width=.2\linewidth]{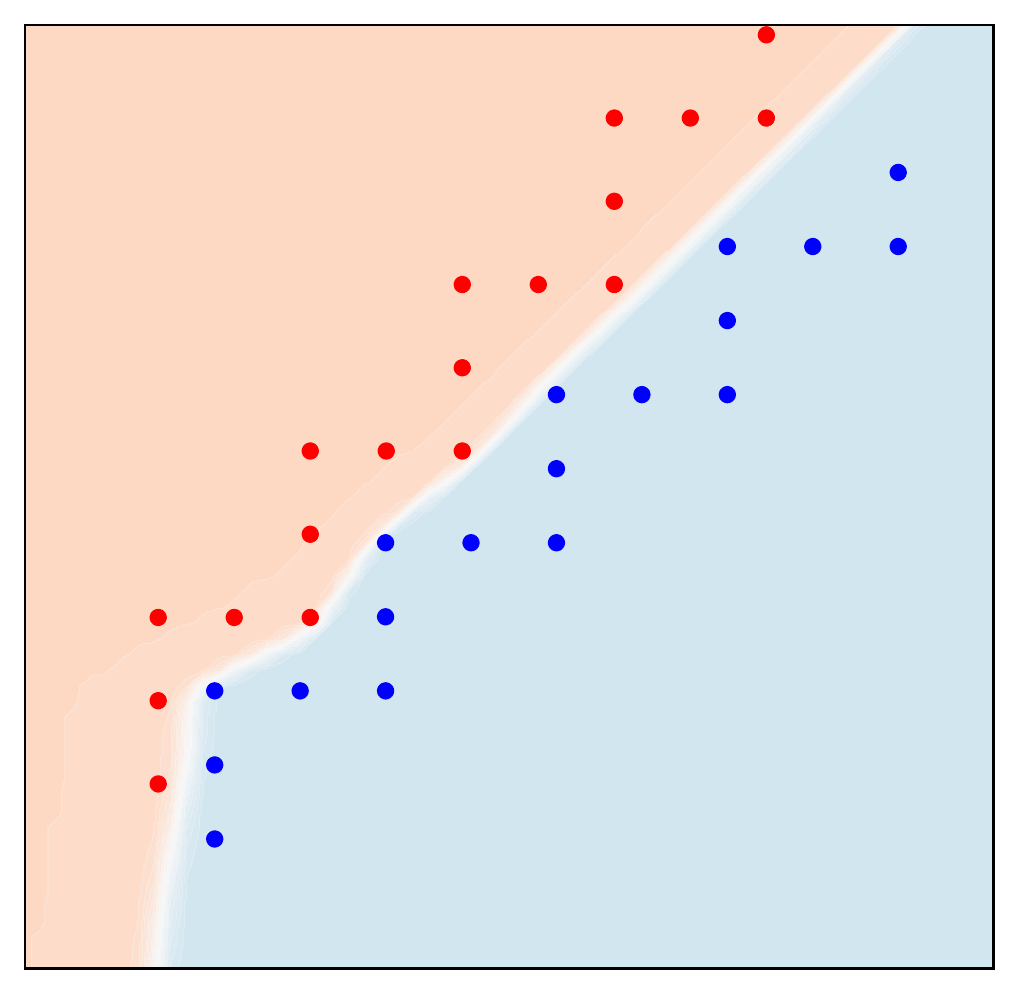}
  }\hspace{.2cm}
  \subfigure{ 
  \includegraphics[width=.2\linewidth,clip]{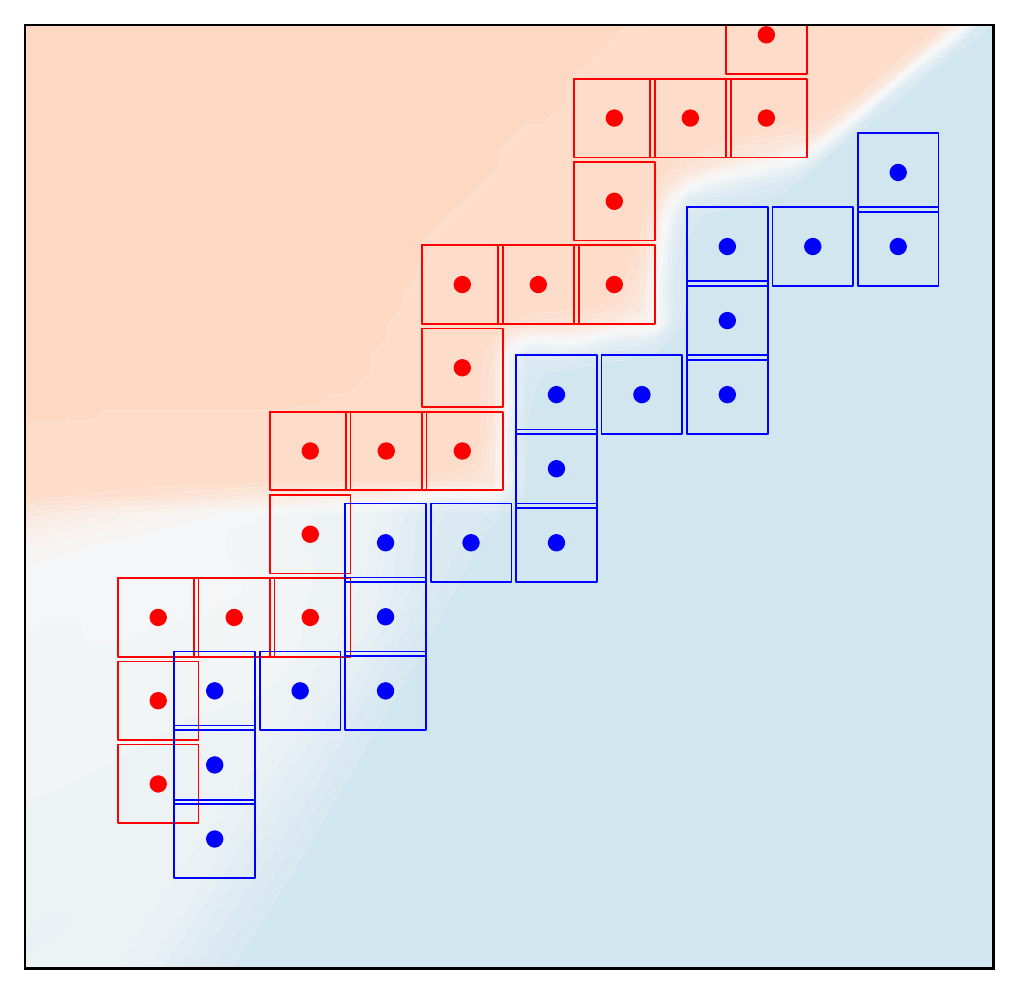}
  }\hspace{.2cm}
  \subfigure{ 
  \includegraphics[width=.2\linewidth,clip]{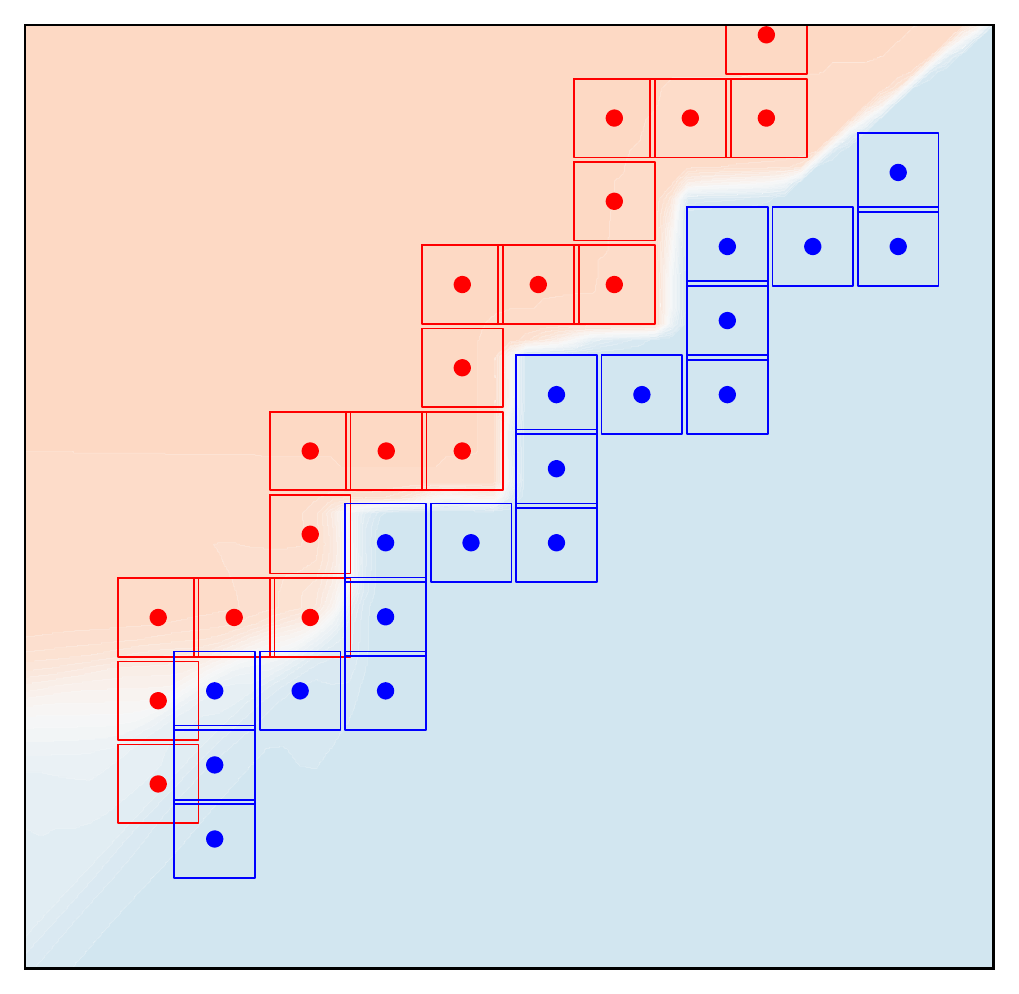}
  }\hspace{.2cm}
  \subfigure{
  \includegraphics[width=.2\linewidth,clip]{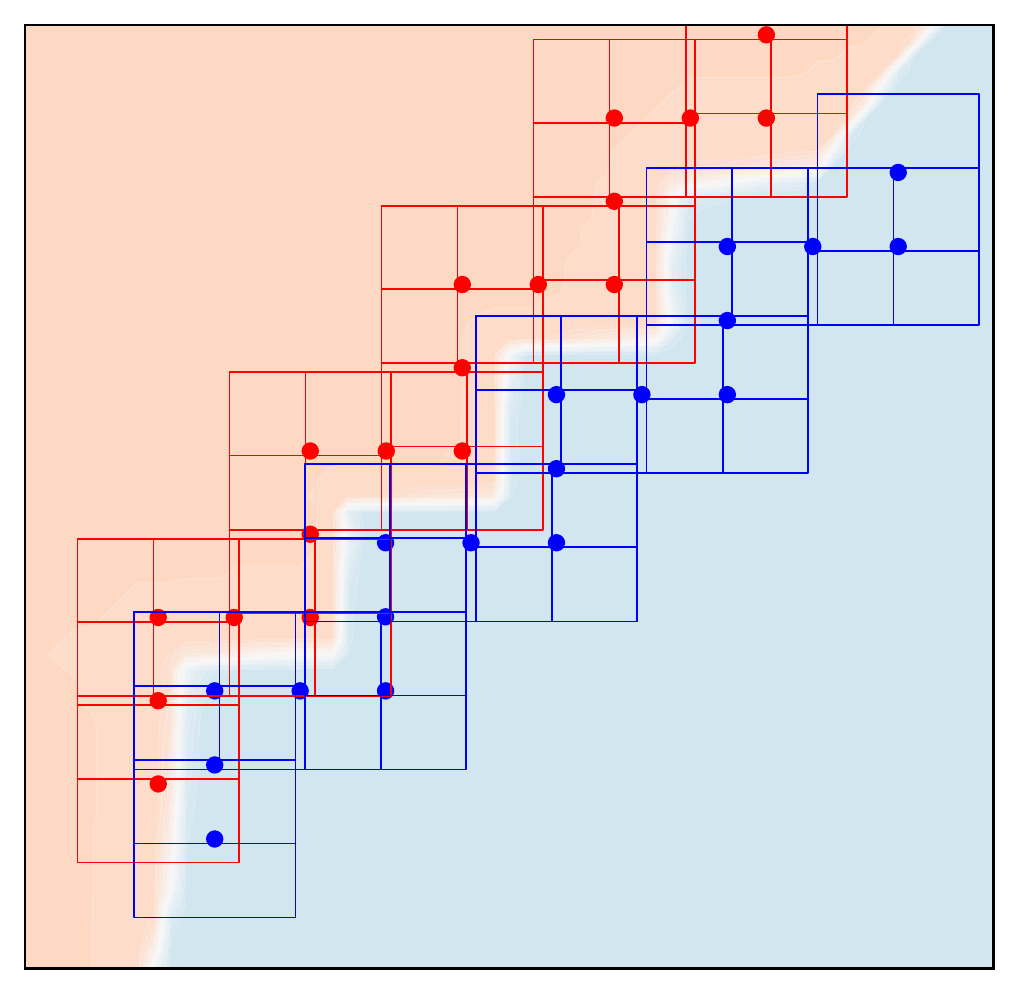}
  }\vspace*{-.4cm}
  \subfigure[Standard training]{ \label{fig:toy_stairs_baseline}
  \includegraphics[width=.2\linewidth]{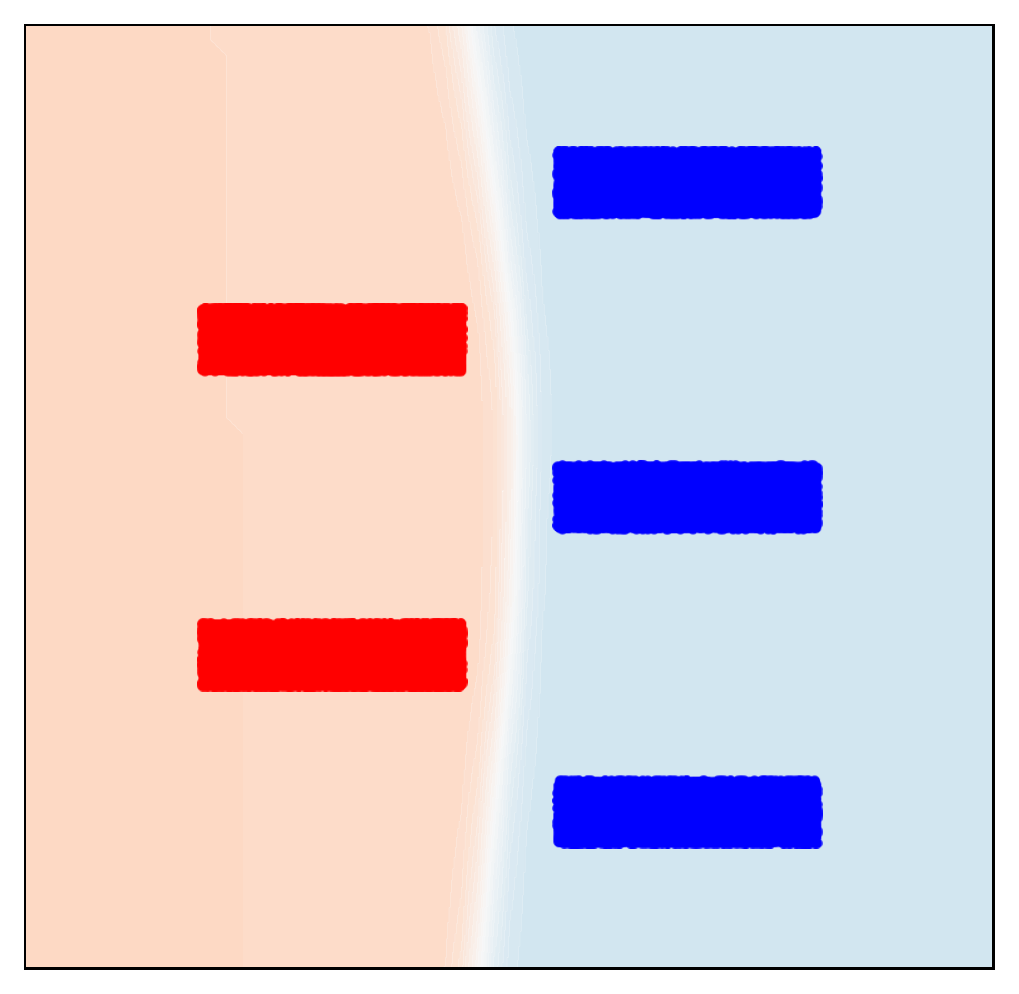}
  \addtocounter{subfigure}{-4} 
  }\hspace{.2cm}
  \subfigure[LDP-PGD]{\label{fig:toy_stairs_pgd}
  \includegraphics[width=.2\linewidth,clip]{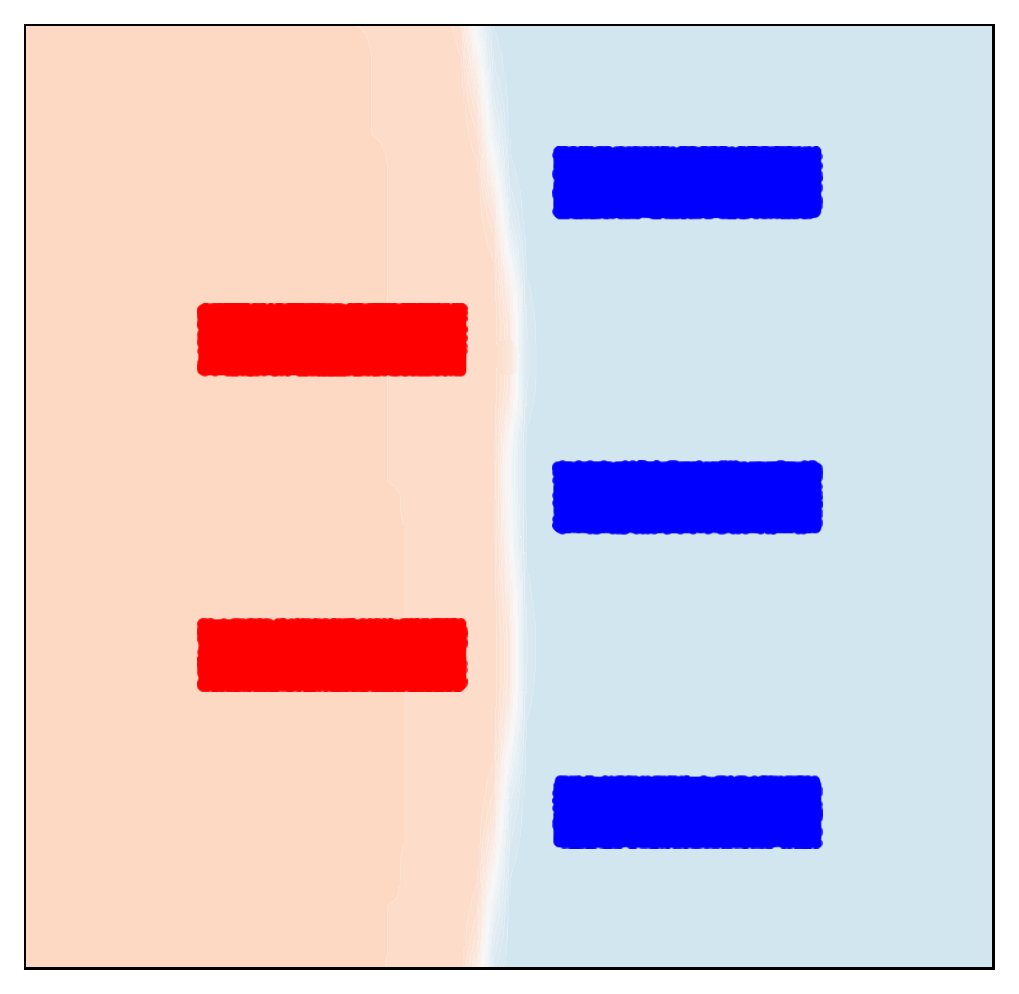}
  }\hspace{.2cm}
  \subfigure[TRADES]{\label{fig:toy_stairs_trades}
  \includegraphics[width=.2\linewidth]{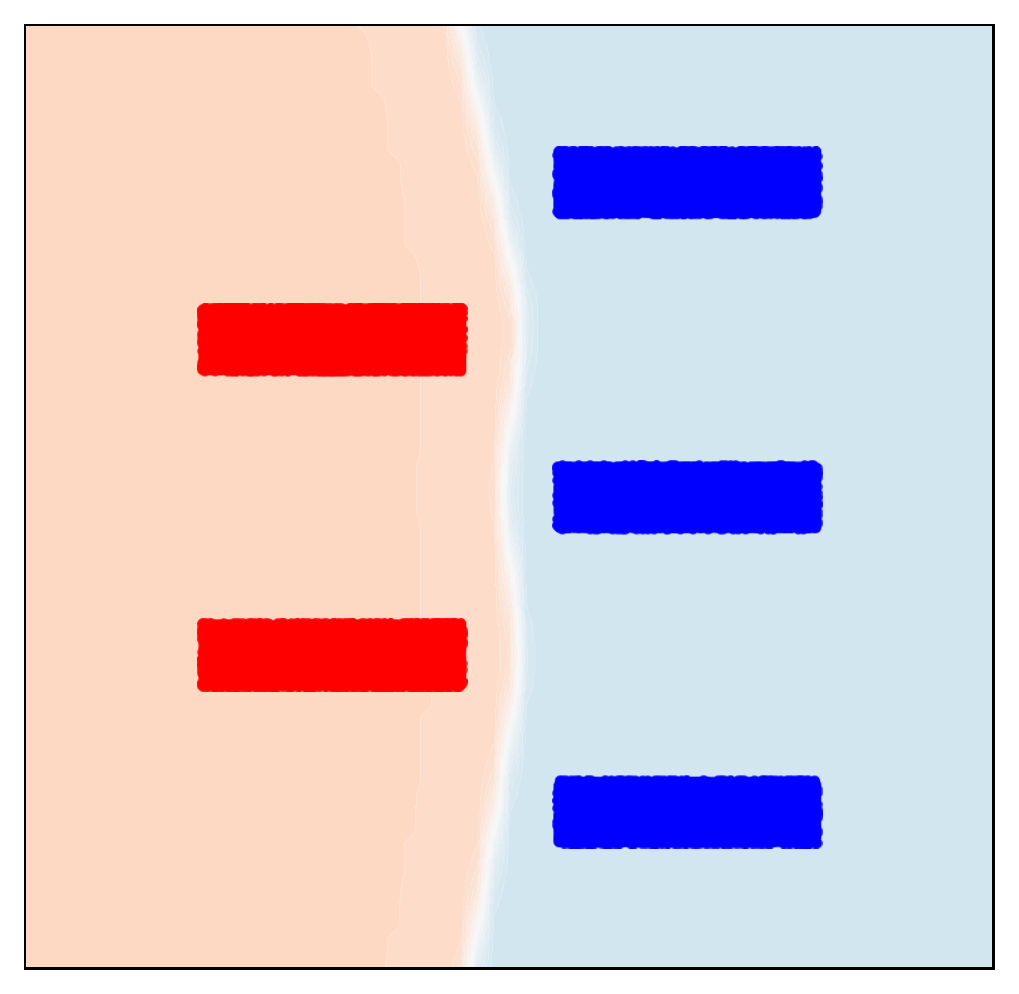}
  }\hspace{.2cm}
  \subfigure[UDP-PGD (ours)]{\label{fig:toy_stairs_udp}
  \includegraphics[width=.2\linewidth,clip]{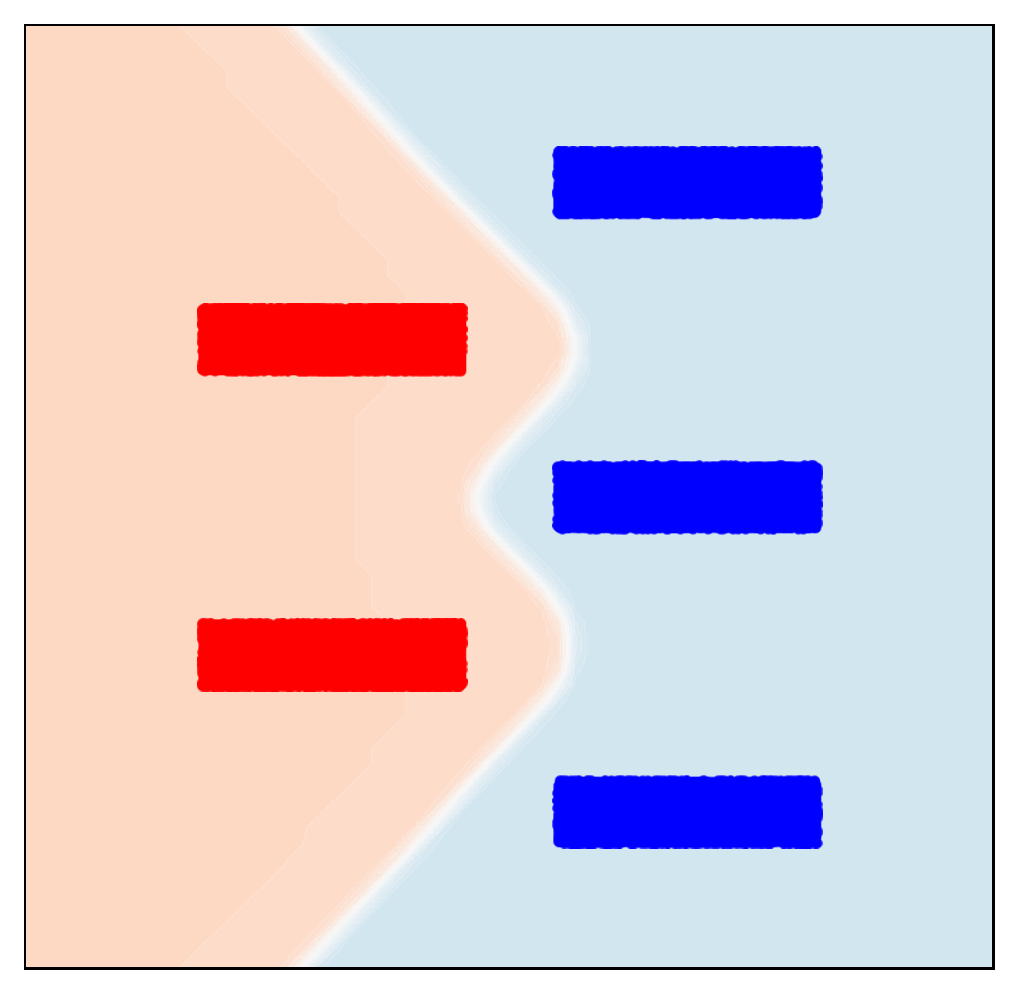}
  }
 \vspace{-.3cm}
 \caption{
 Standard, LDP-PGD, TRADES 
 and UDP training shown in first to fourth column, resp.;  on the \textit{Narrow Corridor} and the \textit{LMS-5} datasets--see \S~\ref{sec:example}, where their color represents the class, and the background the decision boundary after converging. 
 \textbf{Top row.}
 On the NC dataset we observe that 
 \begin{enumerate*}[series = tobecont, itemjoin = \quad, label=(\roman*)]
 \item standard training obtains decision boundary which passes  close to the training data points, 
 \item LDP-PGD and TRADES fail in regions where $\eps_{train}$ violates the \ref{eq:at_assumption} assumption locally 
 \item UDP-PGD despite using an $\eps$ twice as large, can recover a boundary close to the max-margin. 
 \end{enumerate*}
 \textbf{Bottom row.}
 On \textit{LMS-5}, training with LDP-PGD or TRADES using the largest possible $\eps$ under assumption \ref{eq:at_assumption} is not enough to recover the max-margin boundary decision. UDP with a large $\eps$, however, retrieves the optimal boundary. See \S~\ref{sec:example} for discussion.
 }\label{fig:toy_nc_lms5}
\end{figure*}

\section{Motivating examples}\label{sec:example}

We consider two binary classification datasets in 2D, depicted in Fig.~\ref{fig:toy_nc_lms5}, see App.~\ref{app:toy} for additional datasets.
Inspired from that of~\citep{ZhangTRADES}, we design a dataset that has non-isotropic distances between the classes, i.e., the distance of the training samples to the optimal decision boundary increases from bottom left to top-right,
which we name as the \textit{Narrow Corridor} (NC) dataset.
The other dataset is that of~\citep{shah2020pitfalls} called \textit{linear \& multiple 5-slabs} (LMS-5), which is challenging because a single feature (the $x$-coordinate) suffices to obtain $0$ training loss, but the margin of such boundary is sub-optimal.      
The two datasets have in common that the standard training \textit{fails} to learn a decision boundary with a good margin--see first column in Fig.~\ref{fig:toy_nc_lms5}, which motivates training with input perturbations.
From the top row of Fig.~\ref{fig:toy_nc_lms5} we observe that satisfying~\ref{eq:at_assumption} ``globally'' is restrictive for the NC dataset and LDP training both for TRADES and PGD: 
\begin{enumerate*}[series = tobecont, itemjoin = \quad, label=(\roman*)]
\item choosing an $\eps_{train}$ that does not violate this assumption gives sub-optimal decision boundary in the top-right corner, and on the other hand
\item selecting value for $\eps_{train}$ that violates ~\ref{eq:at_assumption} only in some small regions (in bottom-left) causes miss-classifying the \textit{training} samples in that region.
\end{enumerate*}
In the presence of strong tendency toward the simplicity bias phenomenon--bottom row of Fig.~\ref{fig:toy_nc_lms5} we observe LDP has limited capacity to improve the decision boundary--as shown in~\citep{shah2020pitfalls}.

\noindent\textbf{Relevance for real-world datasets.}
Fig.~\ref{fig:historgram_distances_cifar10} shows the distances between the closest data samples of opposite class on CIFAR-10 in input space, where we observe that these distances are non-isotropic.  
In particular, choosing an $\eps_{train}$ that satisfies the~\ref{eq:at_assumption} is restrictive: there exist a large mass of data samples between which we can not guarantee the margin will be satisfactory after training with  perturbations. On the other hand, when training with a larger $\eps$--as demonstrated in this section--\ref{eq:ldp} can have undesirable behavior such as low accuracy in the region where~\ref{eq:at_assumption} is violated, or oscillations--see also Fig.~\ref{fig:toy_oscil}--making the training inefficient. 
Since the margin and data points distances in input space are positively correlated to those in the latent space--see~\citep{sokolic2017margin}, this indicates that the $\eps$-sensitivity of LDP may be relevant for real-world datasets. 

\section{Uncertainty Driven Perturbations}\label{sec:method}

We propose \textit{Uncertainty Driven Perturbations} (UDP) which method, as the name implies, aims at finding perturbations 
that \emph{maximize the model's uncertainty estimate} as follows: \vspace*{-.1cm}
\begin{equation}\tag{UDP}\label{eq:udp}
\begin{aligned}
    \vdelta_u = \argmax_{\vdelta\in \Delta} \mathcal{H}( \mathcal{E}(\vx) + \vdelta, \vomega) \,,
\end{aligned}
\end{equation}
where we use same notation as for~\eqref{eq:ldp}, to include the possibility for latent-space perturbations--see~\S~\ref{sec:preliminaries}.  
When using a \textit{single} model ($M=1$),~\ref{eq:udp} can be seen as \textit{maximum entropy} perturbations.

\subsection{Uncertainty Driven Perturbations (UDP)}\label{sec:method-details}

Similar to loss-based~\ref{eq:at_erm}, the inner maximization of~\ref{eq:udp}---for both image and latent-space perturbations---can be implemented analogously to~\ref{eq:pgd} and~\ref{eq:fgsm} as follows (with $\vdelta_{\text{PGD}}^{0} \triangleq \mathbf{0}$): 
\vspace*{-.2cm}
\begin{equation}\tag{UDP-PGD}\label{eq:udp-pgd}
    \vdelta_{\text{u}}^{i} \triangleq  \mathop{\Pi}\limits_{\|\cdot\|_{\infty} \leq \epsilon} \Big( \vdelta_{\text{UPD}}^{i-1} + \alpha \cdot \text{Sign}\big(\mathop{\nabla}\limits_{\vx} \mathcal{H}(\mathcal{E}(\vx)+ \vdelta_{\text{UPD}}^{i-1}, \vomega)\big)
    \Big) \,.
\end{equation}
Inspired from~\citep{ZhangTRADES}, we also define a variant of UDP which does UDP training with regularization term (UDPR) as follows:
\begin{equation}\tag{UDPR}\label{eq:udpr}
\begin{aligned}
\min_{\omega} 
\displaystyle \mathop{\mathbb{E}}_{(\vx,\vy)\sim p_d} 
\big[ \mathcal{L}(\mathcal{C}_\vomega(\vx), \vy) + \lambda \mathcal{L}( \mathcal{C}_\omega(\vx+\vdelta_u),\vy)  \big]\,,
\end{aligned} 
\end{equation}
with $\lambda \in (0,1)$, and $\vdelta_u$ as in ~\ref{eq:udp}. 
To show convergence on linear models, it is necessary that the perturbed samples $\tilde\vx_i$ are distributed between the original data points $\vx_i$, see \S~\ref{sec:udp_conv}. Rather then running the maximization for $k$ steps--as per~\eqref{eq:pgd} and then project back in a way that makes $\tilde\vx_i$ uniformly distributed over the iterations, it is more computationally efficient to instead sample the number of steps uniformly in the interval $(1,k-1)$ at each iteration, and for each sample.
Alg.~\ref{alg:upd} summarizes the details of the two variants, where for simplicity, we use a single model $M=1$. 

\begin{algorithm}[htb]
    \begin{algorithmic}[1]
       \STATE {\bfseries Input:} 
                  Classifier $C_{\vomega_0}$ with \textit{logits} output and initial weights $\vomega_0$,
                  stopping time $T$, data distribution $p_d$,
                  learning rate $\eta$,  its loss $\LL$, 
                  $L_\infty$ ball radius $\eps$, perturbation step size $\alpha$, number of attack iterations $K$,
                  Boolean flag--train with a regularizer $reg$, regularizer weight $\gamma$.

       \FOR{$t \in 0, \dots, T{-}1$}
            \STATE \textbf{Sample} $\vx, \vy \sim p_d$
            \STATE $\vdelta_{u}^0 \leftarrow \mathbf{0}$
            \STATE \textbf{sample} $k \sim \mathcal{U}(1, K{-}1)$ 
            \FOR{$i \in 0, \dots, k$} 
                \STATE $\vp \leftarrow \text{Softmax}(\mathcal{C}_{\vomega_t}(\vx+\vdelta_u^i))$
                \STATE $\mathcal{H} \leftarrow - \sum_c \vp_c \log(\vp_c)$ \hfill\emph{(Compute entropy)} 
                \STATE $\vdelta_{u}^{i+1} = \vdelta_{u}^i + \alpha \sign(\nabla_{\vdelta_u^i} \mathcal{H})$  
                \STATE \textbf{Projection} $\vdelta_{u}^{i+1} \leftarrow \mathop{\Pi}\limits_{\|\cdot\|_{\infty} \leq \eps} \vdelta_{u}^{i+1}$
            \ENDFOR
            
            \IF{$reg$ is \texttt{True}}
                \STATE $\vomega_{t+1} {=} \vomega_t - \eta  \nabla_{\vomega}\mathcal{L}(\mathcal{C}_{\vomega_t}(\vx), y) 
                - \gamma \nabla_{\vomega}\mathcal{L}(\mathcal{C}_{\vomega_t}(\vx+ \vdelta_{u}^K), y)$
            \ELSE
                \STATE $\vomega_{t+1} = \vomega_t - \eta  \nabla_{\vomega}\mathcal{L}(\mathcal{C}_{\vomega_t}(\vx+ \vdelta_{u}^K), y)$ 
            \ENDIF

       \ENDFOR 
       \STATE {\bfseries Output:} $\vomega_T$   
    \end{algorithmic}
   \caption{UDP pseudocode (in input space). 
   }
   \label{alg:upd}
\end{algorithm}

\subsection{On the convergence of UDP}\label{sec:udp_conv}

\noindent\textbf{Setting.}
Consider a setting of binary classification of $x\in \R$ data samples, sampled from the data distribution $p_d$  which represents a mixture of two Gaussians with means $\mu^1, \mu^2$ (with $\mu^1 < \mu^2$), 
with corresponding label $y^1=-1$ and $y^2=+1$. 
For this problem, the optimal boundary is at $\omega^\star = 0$. Since SGD yields the max-margin classifier for this setting (and so does regular training--see \S~\ref{sec:intro}), to represent more closely non-linear systems we consider a weaker assumption for the optimizer:  at each iteration $n$, given data points $\tilde x^1_n$ and $\tilde x^{(2)}_n$, an oracle returns a \textit{direction} $\tilde\omega_n$ that belongs in the interval between $\tilde x^1_n$ and $\tilde x^{(2)}_n$, that is:
\vspace*{-.1cm}
\begin{equation}\tag{S1}\label{eq:s1}
    \exists \alpha_n \in (0,1), \quad \text{s.t.}\quad  \tilde\vomega_n = \alpha_n \tilde\vx^1_n + (1-\alpha_n)\tilde\vx^{(2)}_n \,,
\end{equation}
where we assume: \vspace*{-.2cm}
\begin{equation}\tag{A1}\label{eq:a1}
   \alpha_n \sim \mathcal{U} (0,1), \quad \text{i.e.} \quad
    \E[\alpha] = \frac{1}{2}\,,
\end{equation}
Given $\vomega_n$ and $\tilde\vomega_n$, the update rule is as follows:
\begin{equation}\tag{S2}\label{eq:s2}
    \vomega_{n+1} = \vomega_n + \eta (\tilde\vomega_n-\vomega_n) \,, 
\end{equation}
where $\eta\in(0,1)$ is the step-size. \looseness=-1

\begin{theorem}[Convergence towards the max-margin without $\eps$ dependence] 
\label{thm:udp_simple_lin_separable}
Given an oracle that takes as input two data points $\tilde\vx^{(1)},\tilde\vx^{(2)} \in \R$ and returns a point that lies between these points sampled uniformly at random, the UDP method as per Alg.~\ref{alg:upd} in expectation tends to move towards the max-margin classifier on a linearly separable dataset. 
\end{theorem}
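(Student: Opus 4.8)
The plan is to reduce the stochastic dynamics \eqref{eq:s1}--\eqref{eq:s2} to a scalar recursion on the \emph{expected} iterate $\E[\omega_n]$ and exhibit that recursion as a contraction toward $\omega^\star = 0$. First I would unfold the update \eqref{eq:s2} as $\omega_{n+1} = (1-\eta)\omega_n + \eta\,\tilde\omega_n$ and take the conditional expectation of the oracle output \eqref{eq:s1} over $\alpha_n$. Assumption \eqref{eq:a1} gives $\E[\alpha_n] = \tfrac12$, so that
\begin{equation}
\E\big[\tilde\omega_n \mid \tilde{x}^{(1)}_n, \tilde{x}^{(2)}_n\big] = \tfrac12\big(\tilde{x}^{(1)}_n + \tilde{x}^{(2)}_n\big),
\end{equation}
i.e. in expectation the oracle returns the midpoint of the two perturbed samples, and the expected iterate obeys $\E[\omega_{n+1}\mid\omega_n] = (1-\eta)\omega_n + \tfrac{\eta}{2}\,\E[\tilde{x}^{(1)}_n + \tilde{x}^{(2)}_n\mid \omega_n]$.

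Next I would characterize the perturbed samples, which is where the defining property of UDP enters. Since the entropy $\mathcal{H}(\cdot,\omega)$ is maximized \emph{at} the decision boundary $\omega_n$ and increases monotonically from each class toward it, a \eqref{eq:udp-pgd} perturbation drives a point toward $\omega_n$ without ever crossing it. Combined with the uniform sampling of the step count $k\sim\mathcal{U}(1,K{-}1)$ (Alg.~\ref{alg:upd}), the perturbed sample $\tilde{x}^{(j)}_n$ is (approximately) uniformly distributed on the segment joining the clean sample $x^{(j)}$ to $\omega_n$; hence $\E[\tilde{x}^{(j)}_n \mid x^{(j)},\omega_n] = \tfrac12\big(x^{(j)} + \omega_n\big)$ for $j\in\{1,2\}$. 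Averaging over the data, $\E[x^{(1)}] = \mu^1$ and $\E[x^{(2)}] = \mu^2$ with $\tfrac12(\mu^1+\mu^2) = \omega^\star = 0$, so that $\E[\tilde{x}^{(1)}_n + \tilde{x}^{(2)}_n] = \E[\omega_n] + \tfrac12(\mu^1+\mu^2) = \E[\omega_n]$.

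Substituting back yields the closed recursion
\begin{equation}
\E[\omega_{n+1}] = \Big(1 - \tfrac{\eta}{2}\Big)\,\E[\omega_n],
\end{equation}
and since $\eta\in(0,1)$ the factor $1-\tfrac{\eta}{2}\in(\tfrac12,1)$, so $\E[\omega_n]\to 0 = \omega^\star$ geometrically; each step strictly decreases $|\E[\omega_n]|$, which is the claimed ``tends to move towards the max-margin classifier.'' The main obstacle is the second step: rigorously justifying the distribution of the perturbed samples---that the entropy landscape guarantees monotone, boundary-respecting motion so the perturbation stays on the correct side, and that the uniform step-count sampling (together with the projection at the boundary) places $\tilde{x}^{(j)}_n$ with conditional mean at the segment midpoint. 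One must also verify the regime is self-consistent, namely that the boundary remains between the class means ($\omega_n\in(\mu^1,\mu^2)$) so the two perturbations push in opposite directions; this is preserved in expectation because the recursion keeps $\E[\omega_n]$ inside $(\mu^1,\mu^2)$, but an almost-sure statement would additionally require controlling the fluctuations of $\alpha_n$ and of the sampled $x^{(j)}$.
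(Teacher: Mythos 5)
Your proposal is correct and follows essentially the same route as the paper's own proof: you model the UDP perturbation as uniformly distributed on the segment from the clean sample to the current boundary (so its conditional mean is the midpoint, matching the paper's assumption $\E[\beta]=\E[\gamma]=\tfrac12$), combine it with the oracle's midpoint property $\E[\alpha_n]=\tfrac12$, and obtain the same contraction $\E[\omega_{n+1}\mid\omega_n]=(1-\tfrac{\eta}{2})\omega_n+\tfrac{\eta}{2}\omega^\star$, here specialized to $\omega^\star=0$. The caveats you flag (boundary-respecting motion, uniformity of the perturbed samples) are precisely what the paper also treats as modeling assumptions rather than proving, so your argument matches the paper's both in substance and in rigor.
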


The proof is provided in App.~\ref{app:proofs}.
Although the problem is simple and standard training converges to the max-margin solution, it is worth noting that LDP fails to converge if $\eps>|\frac{\mu_1 - \mu_2}{2}|$, in which case the trained classifier can have $0$ accuracy, as the labels of the perturbed training samples are flipped. 
This shows that stopping at the current boundary is beneficial.
Our illustrative examples with non-isotropic distances between  different class samples, further illustrate why such sensitivity to $\eps$ is relevant.  

\noindent\textbf{Revisiting the 2D motivating examples.}
We observe from Fig.~\ref{fig:toy_nc_lms5}-\subref{fig:toy_stairs_udp} that by replacing the perturbation objective with the model's estimated uncertainty we are able to use larger $\eps_{train}$ for the NC dataset, and interestingly we obtain a decision boundary (close to) the max-margin.
Similarly, for the challenging high simplicity bias dataset in the second row,~\ref{eq:udp} achieves a decision boundary (close to) that of the max-margin--see Fig.~\ref{fig:lms-max-margin}.
App.~\ref{app:toy} provides the reader with additional insights how~\ref{eq:udp} works using additional datasets as well as a different uncertainty estimation model, in particular one based on Gaussian Process~\citep{rasmussen2005gp}.

\noindent\textbf{Additional advantages of UDP.}
\ref{eq:udp} is \textit{unsupervised}, as it does not use the ground truth labels $\vy$ to compute the perturbations.
As such, it can be used for unsupervised methods that output probability estimates.
On the other hand,~\ref{eq:ldp} relies on the class label of the starting training sample $\vx_i$, thus the loss increases towards the decision boundary and the further we are from $\vx_i$ the larger it increases, see Fig.~\ref{fig:illustration_udp_ldp}.

\begin{table*}[htb]
\caption{Results on CIFAR-10 and SVHN \emph{without} data augmentation including best robust accuracy for $l_\infty$ and $l_2$ as given by the AutoAttack standard pipeline, as well as the best trade-off between generalization and robustness given by averaging the clean and robust accuracy. For each method, the set of hyperparameters giving the best average $l_\infty$ robustness between $\eps_\text{train} = 0.01$ and $\eps_\text{train} = 0.03$ is selected. The best scores along with all scores within standard deviation reach for each column are in bold. We observe how large $\eps_\text{train}$ tend to be selected for UDP methods. }
\label{tab:cifar10-svhn-table}
\vspace*{-.2cm}
\begin{center}
\begin{small}
\begin{tabular}{clcccc}
\toprule
 & & \multicolumn{4}{c}{CIFAR-10} \\
\midrule
  & & \multicolumn{2}{c}{$l_\infty$} & \multicolumn{2}{c}{$l_2$} \\
  & method & $\eps_{\text{test}}=0.01$ & $\eps_{\text{test}}=0.03$ & $\eps_{\text{test}}=0.5$ & $\eps_{\text{test}}=1.0$ \\
\midrule
              & TRADES ($\eps_\text{train}=0.03$)   & $61.8\pm0.8$ & $31.0\pm0.3$ & $46.9\pm1.2$ & $18.3\pm0.7$ \\
  Best Robust & PGD ($\eps_\text{train}=0.02$)   & $62.7\pm0.5$ & $31.2\pm0.3$ & $47.4\pm0.6$ & $17.9\pm0.5$ \\
  Accuracy    & UDPR ($\eps_\text{train}=0.05$)  & $\mathbf{64.9\pm0.4}$ & $31.1\pm0.3$ & $\mathbf{50.2\pm0.5}$ & $\mathbf{19.1\pm0.4}$ \\
              & UDP-PGD ($\eps_\text{train}=0.06$)      & $64.2\pm0.4$ & $\mathbf{33.4\pm0.3}$ & $\mathbf{50.1\pm1.8}$ & $\mathbf{19.5\pm1.4}$ \\ \midrule
              & TRADES ($\eps_\text{train}=0.02$)   & $71.3\pm0.4$ & $54.9\pm0.4$ & $63.4\pm0.8$ & $47.6\pm0.2$ \\
  Best        & PGD ($\eps_\text{train}=0.01$)   & $\mathbf{72.7\pm0.4}$ & $52.6\pm0.2$ & $64.4\pm0.3$ & $47.7\pm0.4$ \\
  Trade-off   & UDPR ($\eps_\text{train}=0.05$)  & $72.3\pm0.4$ & $55.4\pm0.2$ & $65.0\pm0.1$ & $\mathbf{49.4\pm0.3}$ \\
              & UDP-PGD ($\eps_\text{train}=0.04$)      & $\mathbf{73.1\pm0.6}$ & $\mathbf{55.7\pm0.8}$ & $\mathbf{65.8\pm0.3}$ & $48.9\pm0.3$ \\ \midrule
 & & \multicolumn{4}{c}{SVHN} \\ \midrule
              & TRADES ($\eps_\text{train}=0.05$)   & $72.9\pm1.5$ & $\mathbf{44.6\pm1.6}$ & $\mathbf{37.7\pm2.5}$ & $\mathbf{6.7\pm0.8}$ \\
  Best Robust & PGD ($\eps_\text{train}=0.04$)      & $75.7\pm0.2$ & $43.3\pm0.4$ & $30.5\pm1.0$ & $5.5\pm0.1$ \\
  Accuracy    & UDPR ($\eps_\text{train}=0.05$)     & $\mathbf{79.7\pm0.1}$ & $40.3\pm1.4$ & $34.4\pm1.5$ & $\mathbf{6.1\pm0.8}$ \\
              & UDP-PGD ($\eps_\text{train}=0.04$)  & $79.3\pm0.1$ & $39.7\pm0.7$ & $33.4\pm0.6$ & $5.8\pm0.6$ \\ \midrule
              & TRADES ($\eps_\text{train}=0.05$)   & $79.4\pm1.2$ & $65.2\pm0.3$ & $61.8\pm0.8$ & $46.3\pm0.1$ \\
  Best        & PGD ($\eps_\text{train}=0.02$)      & $85.6\pm0.3$ & $64.6\pm0.3$ & $61.9\pm0.2$ & $48.9\pm0.1$ \\
  Trade-off   & UDPR ($\eps_\text{train}=0.05$)     & $\mathbf{86.4\pm0.0}$ & $\mathbf{67.1\pm0.4}$ & $\mathbf{63.8\pm0.7}$ & $\mathbf{49.6\pm0.2}$ \\
              & UDP-PGD ($\eps_\text{train}=0.04$)  & $\mathbf{86.5\pm0.1}$ & $\mathbf{66.7\pm0.4}$ & $\mathbf{63.6\pm0.2}$ & $\mathbf{49.8\pm0.2}$ \\
\bottomrule
\end{tabular}
\end{small}
\end{center}
\vskip -0.1in
\end{table*}

\section{Experiments}\label{sec:exp}

As~\ref{eq:udp} is a general framework where the objective for finding perturbations is estimated uncertainty, 
we conduct several types of experiments to empirically verify if such an approach is promising for real world datasets as well.
However, we do not aim to provide new state of art results in related fields,  and in \S~\ref{sec:discussion} we list some open directions. 
Primarily,
\begin{enumerate*}[series = tobecont, itemjoin = \quad, label=(\roman*)]
\item given a fixed model capacity we compare the generalization-robustness trade-off of UDP in input and in latent space---see \S~\ref{exp:tradeoff},
\item the model's generalization with increasingly larger capacity--\S~\ref{sec:model_capacity}, and finally 
\item we estimate the simplicity bias by analyzing the tranferability of the learned features from a source (CIFAR-10) to a different target domain (CIFAR-100) in \S~\ref{sec:simplicity_bias}. 
\end{enumerate*}

\noindent\textbf{Datasets \& models.} 
We evaluate on Fashion-MNIST~\citep{fashionmnist}, SVHN~\citep{svhn}, CIFAR-10~\citep{cifar10}  and CIFAR-100~\citep{Krizhevsky09cifar100}. 
Throughout our experiments, we use two different architectures: LeNet~\citep{Lecun98gradient} for Fashion-MNIST, as well as ResNet-18~\citep{resnet} for CIFAR-10 and SVHN. For all experiments in the main paper, we use a single model for UDP, for fair comparison with the baselines. See App.~\ref{app:implementation} for details on the implementation.

\begin{table}[htb]
\caption{AutoAttack robustness (top) and trade-off (bottom) on Fashion-MNIST, where for the latter we average the clean and robust accuracy. For each method, one set of hyperparameters is selected according to its mean $l_\infty$-robust accuracy over $\eps_\text{test}\in \{0.05,0.1,0.2\}$. 
Notice that the best scores for UDP are obtained for large $\eps_\text{train}$--often $0.5$--which clearly violates~\ref{eq:at_assumption}.
Results are averaged over $3$ seeds.}
\label{tab:robust-acc-fmnist}
\vspace*{-.2cm}
\begin{center}
\begin{small}
\begin{tabular}{lccccc}
\toprule
\multicolumn{6}{c}{Best Robust Accuracy} \\ \midrule
 & clean & \multicolumn{3}{c}{$l_\infty$}  & \multicolumn{1}{c}{$l_2$} \\ 
$\eps_{\text{test}}$.            & acc    & $0.05$ & $0.1$  & $0.2$  &  $1.5$ \\ \midrule
PGD ($\eps_\text{train}.2$)      &  $78.9$& $73.9$ & $69.3$ & $58.3$ & $2.8$  \\
TRADES ($\eps_\text{train}.2$)  &  $82.1$ & $77.1$ & $70.6$ & $55.1$ & $5.6$  \\
UDPR ($\eps_\text{train}.5$)     &  $84.8$& $72.5$ & $60.9$ & $40.0$ & $34.1$ \\
UDP-PGD ($\eps_\text{train}.5$)  &  $82.3$& $74.9$ & $66.8$ & $49.6$ & $27.5$ \\ \midrule
\multicolumn{6}{c}{Best Trade-off} \\\midrule
$\eps_{\text{test}}$.            & acc    & $0.05$ & $0.1$  & $0.2$  &  $1.5$ \\ \midrule
PGD ($\eps_\text{train}.2$)      &  $78.9$& $75.5$ & $73.5$ & $67.9$ & $40.2$  \\
TRADES ($\eps_\text{train}.2$)  &  $82.1$& $79.7$ & $76.5$ & $68.7$ & $44.0$  \\
UDPR ($\eps_\text{train}.5$)     &  $84.8$& $78.6$ & $72.8$ & $62.3$ & $59.4$ \\
UDP-PGD ($\eps_\text{train}.3$)  &  $84.2$& $80.2$ & $75.6$ & $65.4$ & $53.7$ \\ 
\bottomrule
\end{tabular}
\end{small}
\end{center}
\vskip -0.05in
\end{table}

\subsection{Robustness-generalization trade-off for fixed model}\label{exp:tradeoff}
As UDP aims to increase the classification margin, $l_p$ robustness may be a suitable test bench.Indeed, if the margin for a classifier $\mathcal{C}_\vomega$ is more than $\eps$, then there are no perturbations $x+\delta$ constrained to $\|\delta\|_2<\eps$ that can be found such that $\mathcal{C}_\vomega(x+\delta)$ crosses the boundary. 
Thus, we evaluate the $l_2$ as well as the $l_\infty$ robustness for several datasets and $\eps_{\text{test}}$. We also compute the robustness-generalization trade-off by averaging robust and clean accuracy.
Results on catastrophic overfitting are available in App.~\ref{app:results}. 

\noindent\textbf{Methods \& robustness evaluation.} 
We compare:  
\begin{enumerate*}[series = tobecont, itemjoin = \quad, label=(\roman*)]
\item \textbf{\ref{eq:udp}},
\item \textbf{\ref{eq:pgd}}~\citep{madry2018towards}, and
\item \textbf{\ref{eq:trades}}~\citep{ZhangTRADES},
\end{enumerate*}
the latter two are described \S~\ref{sec:preliminaries}.
For all our $l_\infty$ and $l_2$-robustness values, we use the standard pipeline of \textit{AutoAttack} \citep{autoattack}, which considers four different attacks: $\text{APGD}_{\text{CE}}$, $\text{APGD}_{\text{DLR}}^{\text{T}}$, $\text{FAB}^{\text{T}}$, and Square.

\subsubsection{Input space}\label{sec:input_exp}

\begin{table}[t]
\caption{Results on CIFAR-10 \emph{with} data augmentation. For each method, we trained models with $l_\infty$-norm and $\eps_{\text{train}}$ ranging from $0.01$ to $0.08$. We pick one model to display based on the best average $l_\infty$ robustness on $\eps_\text{test} = 0.01$ and $\eps_{\text{test}} = 0.03$. We can observe how the best $\eps_\text{train}$ for UDP methods tends to be large. All robust accuracies are computed using the AutoAttack standard pipeline.}
\label{table:cifar10-da}
\begin{center}
\begin{small}
\begin{tabular}{lccccc}
\toprule
 & clean & \multicolumn{2}{c}{$l_\infty$}  & \multicolumn{2}{c}{$l_2$} \\ 
$\eps_{\text{test}}$&        acc                   & $0.01$  & $0.03$ & $0.5$  &  $1.0$ \\ \midrule
PGD ($\eps_\text{train}.03$)      & $83.8$ & $72.9$ & $47.6$ & $58.5$ & $25.5$ \\
UDPR ($\eps_\text{train}.05$)     & $83.4$ & $74.1$ & $45.3$ & $58.5$ & $25.5$ \\
UDP-PGD ($\eps_\text{train}.08$)  & $87.5$ & $76.9$ & $37.1$ & $59.5$ & $21.4$ \\
\bottomrule
\end{tabular}
\end{small}
\end{center}
\vskip -0.1in
\end{table}

\begin{table}[thb]
\caption{
Feature transferability: features of CIFAR-10 pre-trained models (from Tab.\ref{tab:cifar10-svhn-table}) are fixed, and we train a classifier on CIFAR-100 using their embeddings. See \S~\ref{sec:simplicity_bias}.
}
\label{table:cifar10-transfer}
\begin{center}
\begin{small}
\begin{tabular}{clc}
\toprule
&method & transfer accuracy\\ \midrule
 &baseline                            & $42.5\pm0.3$ \\ \midrule
 Best  &TRADES ($\eps_\text{train} = .03$)  & $42.7\pm0.1$ \\
Robust     &PGD ($\eps_\text{train} = .02$)     & $42.6\pm0.1$ \\
 Accuracy           &UDP-PGD ($\eps_\text{train} = .06$) & $\mathbf{45.3\pm0.3}$ \\ \midrule
Best        &TRADES ($\eps_\text{train} = .02$)  & $43.1\pm0.3$ \\
Trade-off   &PGD ($\eps_\text{train} = .01$)     & $43.3\pm1.2$ \\
            &UDP-PGD ($\eps_\text{train} = .04$) & $\mathbf{45.7\pm0.6}$ \\ 
\bottomrule
\end{tabular}
\end{small}
\end{center}
\vskip -.1in
\end{table}

\begin{figure}[htb]
  \centering
  \subfigure[SVHN]{\label{subfig:svhn-single_clean}
  \includegraphics[width=.51\linewidth]{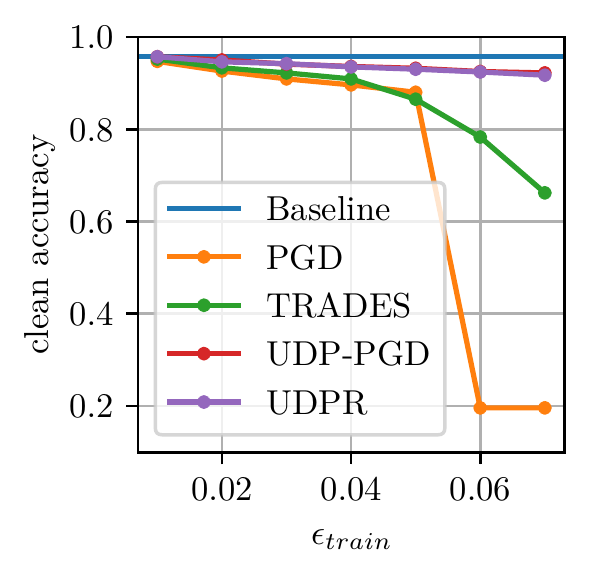}}
  \subfigure[CIFAR-10]{\label{subfig:cifar10-single_clean}
  \includegraphics[width=.45\linewidth,trim={0.9cm .0cm 0cm .0cm},clip]{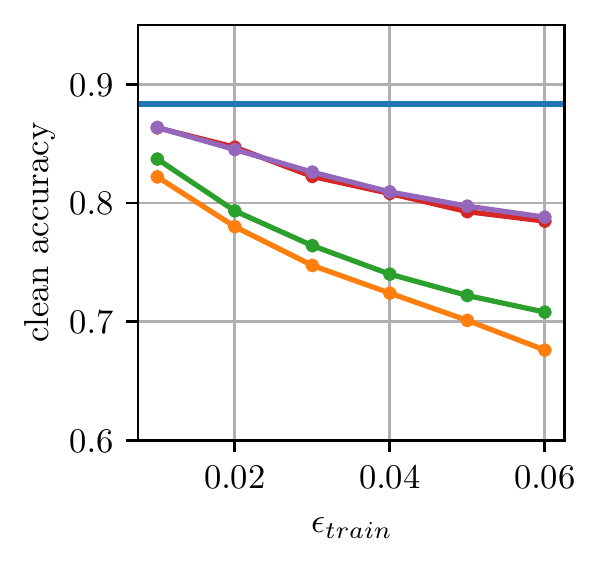}}
  \vspace*{-.2cm}
\caption{ \textit{Clean accuracy} ($y$-axis) comparison between PGD, TRADES, UDP-PGD, and UDPR on \textbf{SVHN} and \textbf{CIFAR-10}, for varying $\eps_\text{train}$ used for training ($x$-axis). Results are averaged over multiple runs. See \S~\ref{exp:tradeoff}.}\label{fig:clean-acc}
\vskip -.5em
\end{figure} 

\noindent\textbf{Setup.} 
For TRADES, UDP, and PGD, we train several models with $\eps_{\text{train}}$ ranging from $0.01$ to $0.08$, all the perturbation at train time are obtained using the $l_\infty$ norm. To find the perturbations, we used step sizes $\alpha$ ranging from $0.001$ to $0.012$ and a number of steps from $10$ to $40$. For the \textit{AutoAttack} evaluation, we use 
$\eps_{\text{test}} \in \{0.01, 0.03\}$ for $l_\infty$-robustness and $\eps_{\text{test}} \in  \{0.5,1.0\}$ for $l_2$-robustness.

\noindent\textbf{Results.} In Fig. \ref{fig:clean-acc}, we compare the clean test accuracy of \ref{eq:pgd}, \ref{eq:trades} and \ref{eq:udp-pgd} on SVHN and CIFAR-10, see App.~\ref{app:results} for Fashion-MNIST. 
We \textit{consistently} observe in all our experiments that the clean accuracy of UDP is is \emph{consistently superior} to the one of LPD-PGD and TRADES, across varying $\eps_\text{train}$. 
Tab.\ref{tab:cifar10-svhn-table}, \ref{tab:robust-acc-fmnist} and~\ref{table:cifar10-da} show that UDP methods offer competitive and often improved trade-off between robustness and accuracy, relative to LPD-PGD and TRADES, in terms of their robust accuracy as given by AutoAttack. We also show UDPR and UDP-PGD provide a better trade-off between robustness and clean accuracy. Those improvements are typically obtained for larger values of $\eps_\text{train}$.

\begin{table}[ht]
\caption{Low-data regime comparison, with $5\%$ of CIFAR-10 training set, and  \textit{latent-space} perturbations  (\textit{LS--} prefix).
}
\label{table:cifar10-ldr-ls}
\begin{center}
\begin{small}
\begin{tabular}{lc}
\toprule
method & clean accuracy\\ \midrule
baseline & $48.2\pm0.9$ \\
LS-PGD ($\eps_\text{train} = 2$)     & $37.3\pm0.3$ \\
LS-TRADES ($\eps_\text{train} = 2$)  & $46.9\pm0.3$ \\
LS-UDPR ($\eps_\text{train} = 2$)    & $\mathbf{53.1\pm0.9}$ \\
LS-UDP-PGD ($\eps_\text{train} = 2$) & $\mathbf{52.5\pm0.3}$ \\
\bottomrule
\end{tabular}
\end{small}
\end{center}
\vskip -0.2in
\end{table}

\begin{figure}[ht]
  \centering
  \includegraphics[width=.6\linewidth]{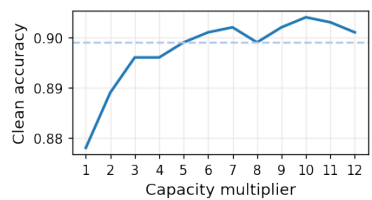}
  \vspace{-.2cm}
\caption{\textit{Clean accuracy} ($y$-axis) comparison between models trained with UDP using $\eps=0.1$, for varying capacity of the model ($x$-axis). Dashed line represents the clean accuracy of standard training and corresponds to model capacity 1--however, has $\sim0\%$ robustness accuracy.
}\label{fig:cap-vs-clean-acc}
\end{figure} 

\subsubsection{Latent space \& low-data regime}\label{sec:latent_exp}

We consider the low-data regime, to study if fewer samples suffice to learn a decision boundary that has a good margin between the considered training samples. As in~\citep{Yuksel2021SemanticPW} we take $5\%$ of the training data of CIFAR-10. We apply the perturbations in the latent space of a ResNet-18 model by splitting it in an encoder part $\mathcal{E}$ (up to the last convolutional layer) and a classifier part $\mathcal{C_{\omega}}$ (the rest of the network)--see \ref{eq:at_erm}. 
Both the encoder and the classifier are trained jointly with perturbations, using $\eps=0.2$, $\alpha=0.01$ and $20$ attack iterations.
Tab.~\ref{table:cifar10-ldr-ls} indicates that the UDP methods lead to better clean accuracy, relative to LDP and TRADES.

\subsection{Increasingly large model capacity}\label{sec:model_capacity}

In the previous section, we observed that the clean accuracy decreases for UDP as $\eps_\text{train}$ increases--although notably less than LDP. As pointed out by~\citet{madry2018towards}, and recently shown by~\citet{bubeck2021law} larger model capacity may be necessary for robustness. 
Thus, in this section, we increase the model capacity to observe if the test accuracy of UDP increases, see App.~\ref{app:model_capacity_implementation} for implementation. 
From Fig.~\ref{fig:cap-vs-clean-acc} we observe that this is indeed likely the case: with increased capacity, we outperform the clean training in terms of clean test accuracy, while having significantly improved robustness--see Tab.~\ref{tab:robust-acc-fmnist} for the corresponding robustness.

\subsection{Estimating simplicity bias}\label{sec:simplicity_bias}
To estimate if the simplicity bias is reduced---if the model maintains more exhaustive set of features, we use the embedding (while keeping it fixed) of the previously trained models on CIFAR-10, to train a model on CIFAR-100. Tab.~\ref{table:cifar10-transfer} indicates that UDP has reduced simplicity bias.

\section{Discussion}\label{sec:discussion}

We proposed using the model's estimated uncertainty as an objective to iteratively find perturbations of the input training samples called UDP.
Moreover, we provided some geometrical explanations on the difference between loss-driven perturbations (LDP) around the decision boundary, and we showed that such an advantage allows UDP to use a potentially larger magnitude of perturbation.
Our preliminary, yet diverse, set of experiments showed that this approach is promising for real-world datasets as well.

We hope to initiate the study of the inherent trade-offs for varying perturbation schemes using online learning tools, to provide more rigorous bounds on the worst-case performance to any adversary. This tackles generalization in a broader sense, as relevant per safety-critical applications. On the empirical side, while we focused on single-model UDP for a fair comparison, it remains an interesting direction to explore UDP with different uncertainty estimation methods. 

\subsubsection*{Acknowledgements}
TC thanks the support by the Swiss National Science Foundation (SNSF), grant P2ELP2\_199740.
The authors thank Yixin Wang for insightful discussions and feedback.
\FloatBarrier
\bibliography{main}
\bibliographystyle{icml2022}
\newpage
\appendix
\onecolumn

\section{Extended Overview of Related Works and Discussions}\label{app:related}

The proposed method is inspired by several different lines of works, such as for example uncertainty estimation, works showing that low robustness of a data sample is correlated with high uncertainty estimate for that sample, training with input perturbations (or using these as a regularizer), among others. This section gives a brief overview of some relevant works, in addition to those listed in \S~\ref{sec:intro} and~\ref{sec:preliminaries} of the main paper. 

\subsection{Brief background on uncertainty estimation}\label{app:uncertainty_related}

In  this section, we give an overview of the most popular uncertainty estimation (UE) methods.

\paragraph{Gaussian Processes (and extensions).}
As the gold-standard UE method is considered the Gaussian processes~\citep[GPs, ][]{rasmussen2005gp} initially developed for regression tasks. 
GPs are a flexible Bayesian non-parametric models where the similarity between data points is encoded by a distance aware kernel function.
More precisely, GPs model the similarity between data points
with the kernel function, and use the Bayes rule to model a \textit{distribution over functions} by maximizing the marginal likelihood~\citep{rasmussen2005gp} of the Bayes formula.
Thus, GPs require access to the full datasest at inference time. 
Accordingly, this family of methods and their approximations are computationally expensive and do not scale with the dimension  of the data.

To use the advantages of both worlds:
\textit{(i)} the fast inference time of neural networks, and
\textit{(ii)} the good UE performances of GPs,
a separate line of works focuses on combining neural networks with GPs. 
\citeauthor{amersfoort2020}, \citeyear{amersfoort2020} combine Deep Kernel Learning~\citep{wilson2015deep} framework and GPs by using NNs to learn low dimensional representation where the GP model is jointly trained, resulting in a method called \textit{Deterministic Uncertainty Quantification} (DUQ).
More recently, \citeauthor{van2021feature},~\citeyear{van2021feature} showed that the DUQ method based on DKL \textit{can} map OOD data close to training data samples, referred as ``feature collapse''.
The authors thus propose the \textit{Deterministic Uncertainty Estimation} (DUE) which in addition to DUQ method ensures that the encoder mapping is bi-Lipschitz.
It would be interesting to explore if these methods perform well on similar OOD experiments as those considered in this work.
In App.~\ref{app:toy_example_gp} we use the DUE model for UDP.

\paragraph{Bayesian Neural Networks (BNNs).}
BNNs are stochastic neural networks trained using a
Bayesian approach.
Following the same notation as in the main paper, given $N$ training datapoints $\mathcal{D} = \{(\bm{x}_i, y_i)\}_{i=1}^N$, 
training BNNs extends to posterior inference by estimating the posterior distribution:
\begin{equation}
    p(\bm{\omega} | \mathcal{D}) = \frac{p(\mathcal{D}|\bm{\omega}) p(\bm{\omega})}{p(\mathcal{D})} \,,
\end{equation}
where $p(\bm{\omega})$ denotes the prior distribution on a parameter vector $\bm{\omega} \in \Omega$.
Given a new  sample $\bm{x}',y'$, the predictive distribution is then:
\begin{equation}
    p(y'|\bm{x}', \mathcal{D}) =  \int_{\Omega} p(y'|\bm{x}', \bm{\omega}) p(\bm{\omega}|\mathcal{D}) d\bm{\omega}\,.
\end{equation}
As GPs, BNNs are also often computationally expensive, what arises due to the above integration with respect to the whole parameter space $\Omega$.
The two most common approximate methods to train BNNs are
\textit{(i)} Mean-Field Variational Inference~\citep[MFVI, ][]{blei2017}, and 
\textit{(ii)} Hamiltonian Monte Carlo~\citep[HMC, ][]{neal2012}, see~\citep{jospin2021handson}.

\paragraph{Ensembles \& Monte Carlo Dropout (MCDropout).}
Popular epistemic UE method is \textit{deep ensembles} \citep{lakshminarayanan2017}, which trains a large number of models on the dataset and combines their predictions to estimate a predictive distribution over the weights.  
\citeauthor{gal2016mcdropout},~\citeyear{gal2016mcdropout} further argue that \textit{Dropout}~\citep{srivastava2014dropout} when applied to a neural network approximates Bayesian inference of a Gaussian processes~\citep{rasmussen2005gp}. The proposed \textit{Monte Carlo Dropout} (MC Dropout)---which applies Dropout at inference time---allows for a more computationally efficient uncertainty estimation relative to deep ensembles.
~\citeauthor{ovadia2019lldrop} further reduce the computational cost by applying Dropout only to the last layer.

\subsection{Connecting robustness and simplicity bias with the margin of the model}
\citet{vapnik95} showed that the
VC dimension---that measures the capacity of the  model---of linear classifiers restricted to a particular data set can be bounded in
terms of their margin, which measures how much they separate the data.
\citet{ElsayedKMRB18} argue on the difference between the theory of large margin and deep networks, and a novel training loss to increase the margin of the classifier.
Inspired by the fact that the training error of the neural networks is $0$, recently~\citep{nakkiran2021the} empirically showed that ``improved ideal-world accuracy is achieved by models  which do not reduce the training loss too quickly'', where ``ideal-world accuracy'' refers to that of a having the infinite data-samples of the true generative model that generated the limited training samples.  This is inline with our discussion in \S~\ref{sec:intro} on the \textit{simplicity bias}.

\subsection{Adversarial training in input and in latent space}\label{app:related_at}
In the context of computer vision, small perturbations in image space that are not visible to the human eye can fool a well-performing classifier into making wrong predictions. 
However, further empirical studies showed that such training reduces the training accuracy, indicating the two objectives---robustness and generalization---are competing~\citep{tsipras2019robustness,su2018robustness}. \cite{ZhangTRADES} characterize this trade-off by decomposing the robust error as the sum of the natural error and the boundary error. They further introduce~\ref{eq:trades}--see \S~\ref{sec:preliminaries}, which adds a regularizer encouraging smoothness in the neighborhood of samples from the data distribution. 
Interestingly, both TRADES and UDP are unsupervised, in a sense that they do not rely on the label $\vy_i$ of the starting data point $\vx_i$ to obtain $\tilde\vx_i$.
However, TRADES is orthogonal to UDP, as \textit{given} a perturbation $\tilde\vx_i$ it forces that the model's output is smooth between  $\tilde\vx_i$  and $\vx_i$.
Thus, given a perturbation obtained from UDP, the same method can be applied.
We leave it as a future direction to further explore combining it with the herein proposed uncertainty driven perturbations.
~\citet{wong2020fast} further pointed out a phenomenon referred to as \textit{catastrophic overfitting} where the robustness of fast~\ref{eq:ldp} methods rapidly drops to almost zero, within a \textit{single} training epoch, and in \S~\ref{sec:cat_overfit} we provide results with UDP.

To improve the catastrophic overfitting of~\ref{eq:fgsm}, \citet{tramer2018ensemble} propose adding a random vector $\vxi$ to~\ref{eq:fgsm} as follows: 
\noindent
\begin{equation}\tag{R-FGSM}\label{eq:r-fgsm}
    \vdelta_{\text{R-FGSM}} \triangleq \mathop{\Pi}\limits_{\|\cdot\|_{\infty} \leq \epsilon} \Big(\vxi + \alpha\cdot \text{Sign}\big(\mathop{\nabla}\limits_{\vx}  \LL(\mathcal{C}_\vomega (\vx), \vy) \big)\Big) \,,
\end{equation}
where $ \vxi \sim U([-\varepsilon,\varepsilon]^d)$, $\alpha\in[0,1]$ is selected step size, and $\Pi$ is projection on the  $\ell_\infty$--ball.

Most similar to ours, \cite{ZhangXH0CSK20} argue that ``friendly'' attacks--attacks which limit the distance from the boundary when crossing it through early stopping--can yield competitive robustness. Their approach differs from ours as they do not consider the notion of uncertainty which also more elegantly handles the ``manual'' stopping.

\citet{stutz2019disentangling} 
postulate that the observed drop in clean test accuracy appears because the adversarial perturbations leave the data-manifold, and that `on-manifold adversarial attacks' will hurt less the clean test accuracy. 
The authors thus propose to use perturbations in the latent space of a VAE-GAN~\citep{larsen2016,rosca2017variational}, and recently~\cite{Yuksel2021SemanticPW} used Normalizing Flows for this purpose due to their exactly reversible encoder-decoder structure. 
Our approach is orthogonal to these works as it can also be applied in latent space---see \S~\ref{sec:latent_exp}, and interestingly, we show that our method does not decrease notably the clean test accuracy relative to common~\ref{eq:ldp} methods, while it notably improves the model's robustness relative to standard training.

\subsection{Empirical results on the correlation between calibration and robustness}\label{app:confidence_robustness}

Recently,~\citeauthor{qin2021improving},~\citeyear{qin2021improving} studied the connection between adversarial robustness and
calibration, where the latter indicates if the model's predicted confidence correlates well with the true likelihood of the model being correct~\citep{Guo2017calibration}.
The authors find that the inputs for which the model is sensitive to small perturbations are more likely to have poorly calibrated predictions.
To mitigate this, the authors propose  an adaptive variant of label smoothing, where the smoothing parameter is determined based on the miscalibration, and the authors report that this method improves the calibration of the model even under distributional shifts.
The above empirical observation on the correlation between the calibration and robustness motivates the proposed method here: using perturbations with poor calibration during training may improve both the robustness and calibration at inference.

\subsection{Maximum entropy line of works}
As one way to estimate the model's uncertainty is using the entropy--see \S~\ref{sec:preliminaries}, and moreover since we define a UDP variant that implements the UDP objective as a regularizer--Eq.~\ref{eq:udpr}, the approaches based on the \textit{maximum entropy} principle~\citep{jaynes57} can be seen as relevant to ours. 
In particular, in the context of standard classification,~\citet{pereyra2017RegularizingNN} penalize \textit{confident} predictions by adding a regularizer that \textit{maximizes} the entropy of the output distribution, since confident predictions correspond to output distributions that have low entropy. Training is formulated as follows:
\begin{equation}\tag{MaxE}\label{eq:max_entropy}
    \min_\vomega \displaystyle \mathop{\mathbb{E}}_{(\vx,\vy)\sim p_d} 
    \big[  \LL( \mathcal{C}_\vomega (\mathcal{E}(\vx)\!+\!\vdelta), \vy)  
    - \gamma \cdot \mathcal{H} (\vx, \vomega)
    \big] \,,
\end{equation} 
where $\mathcal{H}(\cdot)$ is defined in Eq.~\eqref{eq:entropy}.
~\citeauthor{pereyra2017RegularizingNN} point out that the above training procedure has connections with label smoothing~\citep{inceptionmodel} for uniform prior
label distribution~\citep[see \S 3.2 of ][]{pereyra2017RegularizingNN}, and the latter is also shown to improve the generalization. See also App.~\ref{app:confidence_robustness} above that relates to label smoothing.
Moreover, in the context of adversarial training, the results of~\citep[][ \S 3.2]{cubuk2017intriguing} indicate that adding such a regularizer improves the model robustness to PGD attacks.
While both~\ref{eq:udpr} and~\ref{eq:max_entropy} involve the  maximum-entropy principle and modify the training objective with a regularizer, the difference is that~\ref{eq:udpr} uses the max-entropy principle to obtain the perturbations $\tilde\vx$ of the input data points.

Entropy regularization is also widely used in reinforcement learning (RL) to improve the exploration by encouraging the policy to have an output distribution with high entropy~\citep{WilliamsPeng91}. It remains an open interesting direction to find connections of UDP and the max-entropy principle in RL.

\subsection{Discussion}
The presented idea is connection between several lines of works, some of which are listed above.
A high level summary of the above works is that:
(i) the robustness to perturbations is tied to the value of the  uncertainty estimate,
(ii) epistemic uncertainty defines the objective we would like to target in training with perturbations, as we ultimately care about generalization and improving the margin of the classifier.

Finally, it is worth noting that uncertainty estimation for neural networks is an active line of research, since many popular methods fail to detect OOD data points~\citep{ovadia2019lldrop,liu2021peril}.
However, such methods work well around the decision boundary, and we showed that this property is particularly useful, and it provides promising results for training with perturbations.
Nonetheless, it remains an interesting direction to explore UDP with newly proposed UE methods.


\clearpage
\section{Proof of Theorem~\ref{thm:udp_simple_lin_separable} }\label{app:proofs}

Recall that we consider a setting of binary classification $x\in\R$ data samples for simplicity, sampled from the data distribution $p_d$  which is linearly separable, and each cluster  has mean $\mu_1, \mu_2$ ($\mu_1 < \mu_2$), respectively, 
with label $y^{(1)}=-1$ and $y^{(2)}=+1$. 
For this problem, the optimal boundary is at
$\omega^\star = \frac{\mu_1+\mu_2}{2}$.
Since SGD yields the max-margin classifier for this setting (and so does regular training--see \S~\ref{sec:intro}), we consider a weaker requirement for the optimizer:  at each iteration $n$, given data points $\tilde x^{(1)}_n$ and $\tilde x^{(2)}_n$, an oracle returns a \textit{direction} $\tilde\omega_n$ that belongs in the interval between $\tilde x^{(1)}_n$ and $\tilde x^{(2)}_n$, that is:
\begin{equation}\tag{S1}\label{eq:s1-dup}
    \exists \alpha_n \in (0,1), \quad \text{s.t.}\quad  \tilde\omega_n = \alpha_n \tilde x^{(1)}_n + (1-\alpha_n)\tilde x^{(2)}_n \,,
\end{equation}
where we assume: \vspace*{-.1cm}
\begin{equation}\tag{A1}\label{eq:a1-dup}
   \alpha_n \sim \mathcal{U} (0,1), \quad \text{i.e.} \quad
    \E[\alpha] = \frac{1}{2}\,,
\end{equation}
Given $\omega_n$ and $\tilde\omega_n$, the update rule is as follows:
\begin{equation}\tag{S2}\label{eq:s2-dup}
    \omega_{n+1} = \omega_n + \eta (\tilde\omega_n-\omega_n) \,, 
\end{equation}
where $\eta\in(0,1)$ is the step-size. \looseness=-1

\begin{theorem}[Convergence of UDP] 
\label{thm:udp_simple_lin_separable-dup}
Given an oracle that takes as input two data points $\tilde x^{(1)},\tilde x^{(2)} $ and returns a point that lies between these points sampled uniformly at random, the UDP method as per Alg.~\ref{alg:upd} converges to the max-margin classifier on a linearly separable dataset.
\end{theorem}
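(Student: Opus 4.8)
The plan is to reduce the whole dynamics to a one-dimensional stochastic recursion in the boundary parameter $\omega_n$ and to show that, after taking expectations, this recursion is a strict contraction toward $\omega^\star=\tfrac{\mu_1+\mu_2}{2}$. The randomness enters only through (i) the locations of the two UDP-perturbed points $\tilde x^{(1)}_n,\tilde x^{(2)}_n$, and (ii) the oracle coefficient $\alpha_n$ of~\eqref{eq:s1-dup}. Conditioning on $\omega_n$ and using the tower property, it suffices to compute $\E[\tilde\omega_n\mid\omega_n]$, feed it into the update~\eqref{eq:s2-dup}, and read off a linear recursion for the error $e_n\defas\omega_n-\omega^\star$.

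First I would pin down the expected positions of the perturbed points, which is where the defining property of UDP is used. Because the entropy~\eqref{eq:entropy} is maximized exactly at the current boundary $\omega_n$ and decreases on either side, the uncertainty-ascent iterates move the negative-class representative from $\mu_1$ toward $\omega_n$ without crossing it, and symmetrically move the positive-class representative from $\mu_2$ toward $\omega_n$; this non-crossing is the qualitative distinction from LDP. Sampling the number of ascent steps uniformly, as in Alg.~\ref{alg:upd}, makes each perturbed sample uniform along its trajectory, so that
\begin{equation}
\E[\tilde x^{(1)}_n\mid\omega_n]=\frac{\mu_1+\omega_n}{2},\qquad \E[\tilde x^{(2)}_n\mid\omega_n]=\frac{\omega_n+\mu_2}{2}.
\end{equation}
In particular both perturbed points remain in the compact interval $[\mu_1,\mu_2]$, i.e.\ ``distributed between the original data points''.

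Next, combining~\eqref{eq:s1-dup} with $\E[\alpha_n]=\tfrac12$ from~\eqref{eq:a1-dup}, and using that the oracle draw is independent of the perturbation, gives $\E[\tilde\omega_n\mid\omega_n]=\tfrac12\big(\E[\tilde x^{(1)}_n\mid\omega_n]+\E[\tilde x^{(2)}_n\mid\omega_n]\big)=\tfrac14(\mu_1+\mu_2+2\omega_n)$. Substituting into~\eqref{eq:s2-dup} and subtracting $\omega^\star$, the affine terms simplify, using $\mu_1+\mu_2-2\omega_n=2(\omega^\star-\omega_n)$, into
\begin{equation}
\E[e_{n+1}\mid\omega_n]=\Big(1-\tfrac{\eta}{2}\Big)\,e_n,
\end{equation}
so $\E[e_n]=(1-\eta/2)^n e_0\to 0$ geometrically whenever $\eta\in(0,1)$, since then $1-\eta/2\in(\tfrac12,1)$. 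This establishes that the boundary moves toward the max-margin classifier $\omega^\star$ in expectation; an $L^2$ statement follows by additionally bounding the conditional variance of $\tilde\omega_n$, which is finite because all perturbed points lie in $[\mu_1,\mu_2]$, and iterating.

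The step I expect to be the main obstacle is the second one: rigorously justifying the expected perturbed positions, that is, arguing from the shape of the entropy that the uncertainty-driven iterates genuinely halt at the boundary instead of overshooting it, and that the uniform-step device yields the claimed uniform law along each trajectory. This is precisely the property that fails for LDP, whose loss keeps increasing past the boundary and flips labels once $\eps>\lvert\tfrac{\mu_1-\mu_2}{2}\rvert$, so the crux is to derive non-crossing rather than to assume it. Everything after this geometric reduction is the routine linear-contraction computation sketched above.
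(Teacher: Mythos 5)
Your proposal is correct and follows essentially the same route as the paper's proof: you model the UDP-perturbed points as lying halfway (in expectation) between each class mean and the current boundary---exactly the paper's $\E[\beta]=\E[\gamma]=\tfrac12$ assumption---then compute $\E[\tilde\omega_n\mid\omega_n]$ and substitute into~\eqref{eq:s2-dup} to obtain the affine contraction $(1-\tfrac{\eta}{2})\omega_n+\tfrac{\eta}{2}\omega^\star$. Your explicit error recursion $\E[e_{n+1}\mid\omega_n]=(1-\tfrac{\eta}{2})e_n$ with its geometric decay is a mild sharpening of the paper's ``tends to move towards $\omega^\star$'' conclusion, and the obstacle you flag (rigorously deriving non-crossing and the uniform law along the trajectory) is likewise not derived in the paper, which simply posits these properties as the description of the UDP perturbations.
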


\begin{proof}
Following Alg.~\ref{alg:upd}, the UDP perturbations are described with: \vspace*{-.15cm}
\begin{equation}\tag{UDP-x}\label{eq:udp-x-dup}
\begin{aligned}
  \text{given } x_n^{(1)}\sim p_d ( x|y=-1), \qquad
  \exists \beta_n &\in (0,1), \quad \text{s.t.}\qquad 
    \tilde x_n^{(1)} = \beta_n  x_n^{(1)} + (1-\beta_n)\omega_n \\
    \text{given } x_n^{(2)}\sim p_d( x|y=1), \qquad 
    \exists \gamma_n &\in (0,1), \quad \text{s.t.}\qquad 
    \tilde x_n^{(2)} = \gamma_n  x_n^{(2)} + (1-\gamma_n)\omega_n \,,
\end{aligned}
\end{equation}
where $\omega_n$ is the current iterate. Moreover, from Alg.~\ref{alg:upd} we have that $\E[\beta]=\frac{1}{2}$ and $\E[\gamma]=\frac{1}{2}$.

Given $ x_1, x_2 \sim p_d$, and $\omega_n$, we are interested in $\omega_{n+1}$ in expectation, and how it relates to the optimal max margin boundary $\omega^\star = \frac{\mu_1+\mu_2}{2}$.
We have: 

\begin{align*}
    \E[\omega_{n+1}|\omega_n] &= 
    \E[\omega_n + \eta (\tilde{\omega}-\omega_n)|\omega_n] \\
    &= \omega_n + \eta (\mathbb{E}[\tilde{\omega}]-\omega_n) \\
    &= (1-\eta) \omega_n + \eta \E \big[\alpha \tilde{ x}^{(1)}_n + (1-\alpha) \tilde{ x}^{(2)}_n \big] \\
    &= (1-\eta) \omega_n + \eta \mathbb{E}[\alpha \big(\beta  x^{(1)}_n + (1-\beta) \omega_n\big) + (1-\alpha) \big(\gamma  x_n^{(2)} + (1-\gamma) \omega_n\big)] \\
    &= (1-\eta) \omega_n + \frac{\eta}{4}(\mu_1 + \omega_n + \mu_2 + \omega_n) \\
    &= (1-\frac{\eta}{2} ) \omega_n + \frac{\eta}{2} (\frac{\mu_1+\mu_2}{2}) \\
    &= (1-\frac{\eta}{2} ) \omega_n + \frac{\eta}{2} \omega^\star\,,
\end{align*}
where in fifth line we used~\eqref{eq:a1}.
Since $\eta\in(0,1)$, in expectation  $\omega_{n+1}$ lies on a line between $\omega_n$ and $\omega^\star$.
This shows the boundary decision $\omega_n$ will have a tendency to move towards the optimal max margin boundary $\omega^\star$ for $\eta\in(0,1)$. 
\end{proof}

In the case of LDP-PGD, the position of the perturbations ``ignores'' the position of the boundary decision (it does not stop at it), as the loss is larger the further the perturbation moves following the direction from it to the decision boundary.
Thus, for large $\eps>1$ the perturbations will switch the label, and  for $\eps=\infty$, the perturbations go to $\pm\infty$.

\section{Pictorial Representation: \ref{eq:udp} Vs. \ref{eq:ldp}}\label{app:pictorial}

In Fig.~\ref{fig:illustration_udp_ldp} of the main paper we considered two samples per class, whose distances to the optimal decision boundary---denoted with $\eps_1$ and $\eps_2$---largely differs, that is, we have $\eps_1 << \eps_2$. 
For completeness, in Fig.~\ref{fig_app:illustration_udp_ldp} we depict all the three cases: 
\textit{(i)} $\epsilon_{train} \leq \epsilon_1$,
\textit{(ii)} $\epsilon_{train} {\in} (\epsilon_1, \epsilon_2) $, and
\textit{(iii)} $\epsilon_{train} \geq \epsilon_2 $ 
While for the first case when $\eps_{train}< \eps_1$, \ref{eq:udp} and \ref{eq:ldp} perturbations coincide, we observe that for the latter two these differ. 
\ref{eq:udp} can be seen as it ``locally adapts'' the effective $\eps_i$, making it less likely to cross the optimal decision boundary $\theta^\star$, avoiding training with a $\tilde\vx$ whose label is semantically incorrect.

\begin{figure*}[ht]
  \centering
  \subfigure[$1^{\circ}: \epsilon_{train} \leq \epsilon_1$]{
  \label{subfig_app:illust_c1}
  \includegraphics[width=.24\linewidth]{illustration_UDPvsLDP_case1.pdf}} \hspace{.4cm}
  \subfigure[$2^{\circ}:  \epsilon_{train} {\in} (\epsilon_1, \epsilon_2) $]{
  \label{subfig_app:illust_c2}
  \includegraphics[width=.265\linewidth,trim={0 .4cm 0 .4cm},clip]{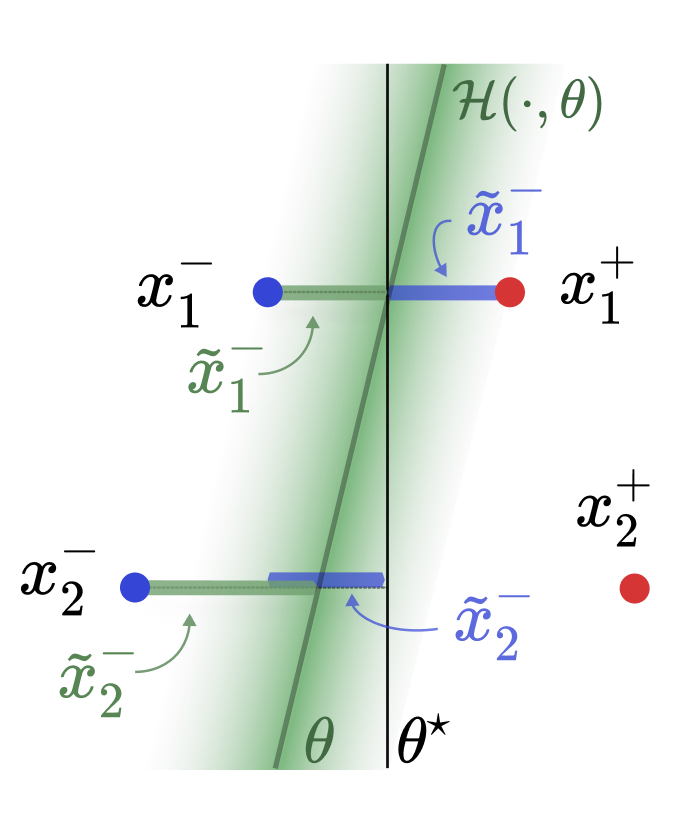}}
  \hspace{.4cm}
  \subfigure[$3^{\circ}:  \epsilon_{train} \geq \epsilon_2 $]{
  \label{subfig_app:illust_c3}
  \includegraphics[width=.29\linewidth,trim={0 .4cm 0 .4cm},clip]{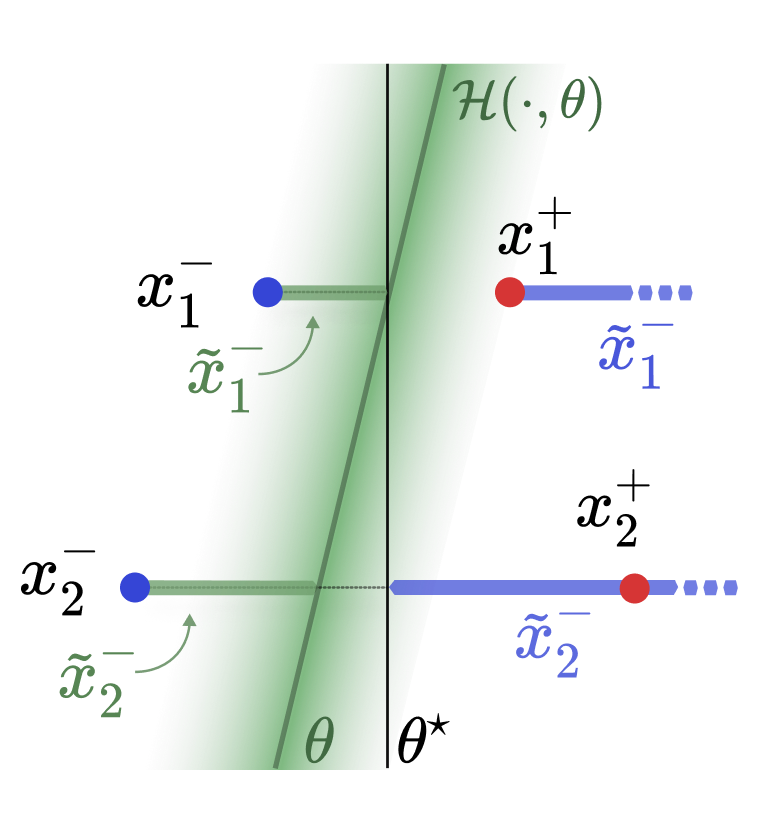}}
\vspace{-.2cm}
\caption{
Complementary to Fig.~\ref{fig:illustration_udp_ldp}---
pictorial representations of~\ref{eq:ldp} and~\ref{eq:udp}---where in contrast to Fig.~\ref{fig:illustration_udp_ldp} we depict the three cases: $\eps_{train} \leq \eps_1$, $\eps_{train} \in (\eps_1, \eps_2)$ and $\eps_{train} \geq \eps_2$. 
Circles represent training samples, and color their class $y\in\{-1,1\}$. 
$\theta$ and $\theta^\star$ denote the current decision boundary and the optimal one, resp.
For clarity, only the perturbations $\tilde\vx^-$ of the negative (blue) class are shown.
\textbf{Fig.~\subref{subfig_app:illust_c1}.}
Given \textit{ideal} data-perturbation training procedure using $\epsilon_{train}<\epsilon$ the perturbed data points (either with UDP or LDP) will belong to the shaded blue area (assuming the boundary $\theta$ has $0$ training loss).
Thus, the final decision boundary will belong to the shaded grey area, and we \textit{cannot} guarantee the margin is maximized--see~\S~\ref{sec:intro}.
\textbf{Fig.~\subref{subfig_app:illust_c2} and~\subref{subfig_app:illust_c3}}--explanation of \textit{why UDP's property of not crossing the boundary is advantageous}.
The green shade depicts the uncertainty, 
where darker is higher -- starting from a point with negative label the uncertainty increases, reaches its maximum, and then decreases.
On the contrary, the loss used for LDP perturbations (of negative samples) keeps increasing the further we go on the right of $\theta$. 
Thus, the LDP and UDP perturbed samples---obtained by iteratively following the direction that maximizes these quantities---differ, and their possible regions are shown in blue and green shade, resp., for different choices of $\epsilon_{train}>\epsilon_1$.
When the perturbed samples from the negative class pass $\theta^\star$, LDP yields training pair $(\tilde x_i^-, y_i)$ with a wrong label.
This indicates that to avoid oscillations (caused by wrongly labeling input space regions), for LDP we are restricted to using $\eps_{train}\leq \eps$, but as Fig.~\subref{subfig_app:illust_c1} shows, this can be restrictive.
On the other hand, for UDP we can have relatively larger $\eps$ which  ``locally adapts'' and is effectively smaller in some region and larger in others.
\textit{Remark:} for easier comprehensiveness on the $\eps$ distances, we draw the perturbed samples region orthogonal to $\theta^\star$, whereas these are orthogonal to the current boundary $\theta$.
}\label{fig_app:illustration_udp_ldp}
\end{figure*}

\clearpage
\section{Additional Results on Simulated Datasets}\label{app:toy}

\subsection{Max-margin decision boundary on the LMS-5 dataset}

The maximum margin boundary for the LMS-5 dataset from \S \ref{sec:example} is shown in Fig. \ref{fig:lms-max-margin}.

\begin{figure}[htb]
    \centering
    \includegraphics[width=.34\linewidth]{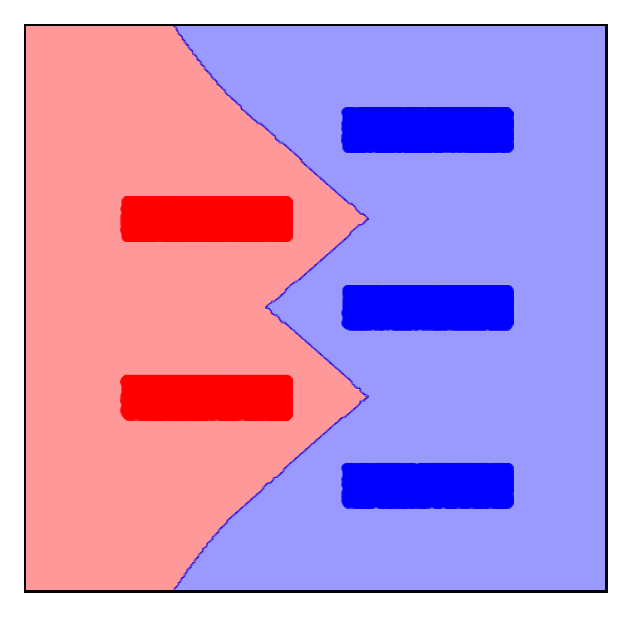}
    \caption{Complementary to Fig.~\ref{fig:toy_nc_lms5} (bottom row): The boundary with the maximum margin on the LMS-5 dataset.}
    \label{fig:lms-max-margin}
\end{figure}

\subsection{Simple dataset with two different distances}\label{app:toy_setup_two_diff_distances}

In the context of computer vision, adversarial training aims at finding ``imperceptible'' perturbation which does not change the label of the input data-point--see~\ref{eq:at_assumption}. Thus the selected $\epsilon$ value for training--herein denoted with $\epsilon_{train}$--is typically small.
In this work, we argue that satisfying~\ref{eq:at_assumption} ``globally'' can be in some cases very restrictive.
As an additional simplified example to those presented in \S~\ref{sec:example}, Fig.~\ref{fig:toy_pgd_uta} depicts a toy experiment with non-isotropic distances between samples of opposite classes in $\R^2$. 
In particular, there are two regions with notably different distances between samples from the opposite class, let us denote the respective optimal $\epsilon$ values for each region with $\epsilon_1$ and $\epsilon_2$, and from Fig.~\ref{fig:toy_pgd_uta}, $\epsilon_1 << \epsilon_2$.
In this section, we focus on~\ref{eq:pgd}---top row, and below we discuss two \textit{potential} issues of loss-based attacks.

~\ref{eq:at_assumption} enforces that we select $\epsilon_{train}=\epsilon_1$. In cases when $\epsilon_1<< \epsilon_2$ adversarial training only marginally improves the model's robustness in the second region where $\epsilon_2$ can be used locally during training as such perturbations will not modify the class label of samples of that region. 
Moreover, if the data is disproportionately concentrated---e.g. small mass of the ground truth data distribution is concentrated in the first region, and the remaining is concentrated in the second region---one still needs to select $\epsilon_{train}=\epsilon_1$, as per~\ref{eq:at_assumption}.

To later contrast the behavior of loss-based PGD with uncertainty-based PGD, in Fig.~\ref{fig:toy_1_step}-\ref{fig:toy_20_steps} we consider loss-based \ref{eq:pgd} with $\epsilon_{train}$ which \textit{violates}~\ref{eq:at_assumption} (thus this value would not be used in practice by practitioners).
From Fig.~\ref{fig:toy_pgd_uta} we observe that \ref{eq:pgd} can perturb the sample by moving it on the opposite side of the boundary---resulting in \textit{mislabeled} samples.
See Fig.~\ref{fig:compare_imgs_1000steps_cifar} for such analysis on CIFAR-10.
While Fig.~\ref{fig:toy_1_step}-\subref{fig:toy_20_steps} illustrates the difference between attacks for a \textit{fixed model}, Fig.~\ref{fig:toy_dec_boundary} illustrates the decision boundaries \textit{after} adversarial training with ~\ref{eq:pgd}. Similarly, Fig.~\ref{fig:toy_oscil} shows that for this ``violating'' case~\ref{eq:pgd} training is characterized with large oscillations of the decision boundary, resulting in inefficient training. 
See App.~\ref{app:toy_example_gp} for results on the same dataset but using Gaussian Process as a model.

In the general case, the largest $\epsilon$ value which does not break the~\ref{eq:at_assumption} assumption \textit{locally} would \textit{differ} among $m$ non-overlapping local regions of the input data space, and let us denote these with $\epsilon_1, \epsilon_2, \dots, \epsilon_m$. Note that, as such none of the $\epsilon_i, i \in 1, \dots, m$ when applied to the corresponding $i$\textit{-th} region changes the label class. 
Let without loss of generality $\epsilon_1 < \epsilon_2 < \dots < \epsilon_m$. 
\ref{eq:at_assumption} forces that for training we select $\epsilon_{train} = min(\epsilon_1, \epsilon_2, \dots , \epsilon_m) = \epsilon_1$. 
The decision boundary of a model $f$ trained adversarially with $\epsilon_{train}$ lies within a margin that is far by \textit{at least} $\frac{\epsilon_{train}}{2}$ from any training data point.

On the other hand, UDP achieves the above desired goal in an elegant way: as ~\ref{eq:udp} does not cross the decision boundary, it effectively uses varying $\epsilon_i \leq \epsilon_{train}$, for different local regions of the input data space, where $\epsilon_{train}$ is selected beforehand.
From Fig.~\ref{fig:toy_pgd_uta}, we observe that~\ref{eq:udp}-based training or more precisely~\ref{eq:udp-pgd} is relatively less sensitive to the choice of $\epsilon_{train}$.
Importantly, inline with the above discussion, using large $\varepsilon$ for~\ref{eq:udp-pgd} during training does not deteriorate its final clean accuracy, as Fig.~\ref{fig:toy_dec_boundary} depicts.  
Finally, from Fig.~\ref{fig:toy_oscil} we observe that even for large selected values of $\epsilon_{train}$--which value breaks ~\ref{eq:at_assumption} solely \textit{globally}, the boundary decision of~\ref{eq:udp} training oscillates relatively less to that of PGD and mostly close to the ground-truth decision boundary.
By being able to use larger values for $\epsilon$,~\ref{eq:udp} allows for larger ``exploration'' of the input space and in turn learn ``better'' latent representations and decision boundary, characterized by better clean-accuracy and PGD robustness trade-off, as we shall see in \S~\ref{sec:exp}.
This is in sharp contrast to PGD, which is forced to use the smallest $\epsilon_1$ even in cases when $\epsilon_1 << \epsilon_m$, as using $\epsilon_{train}$ value which violates~\ref{eq:at_assumption} performs poorly.
Moreover, we argue that such non-isotropic margins as in Fig.~\ref{fig:toy_pgd_uta} are more likely to occur in real-world datasets.

\begin{figure*}[ht] 
  \centering
  \subfigure{
  \includegraphics[width=.235\linewidth]{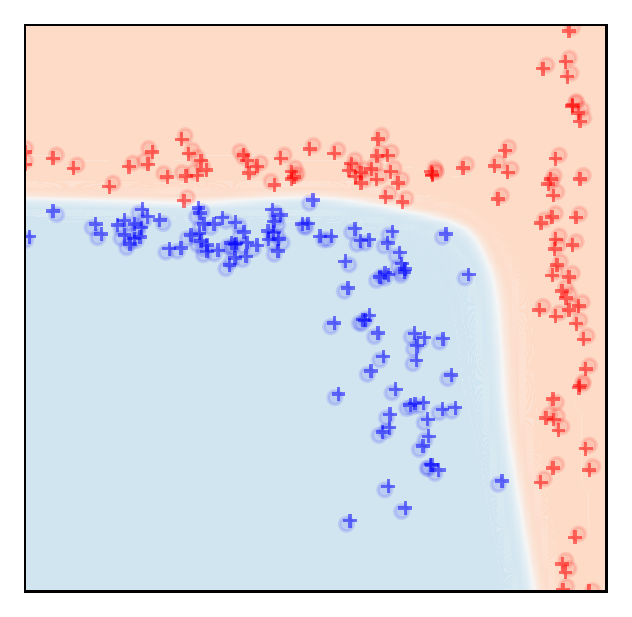}
  }
  \subfigure{
  \includegraphics[width=.235\linewidth]{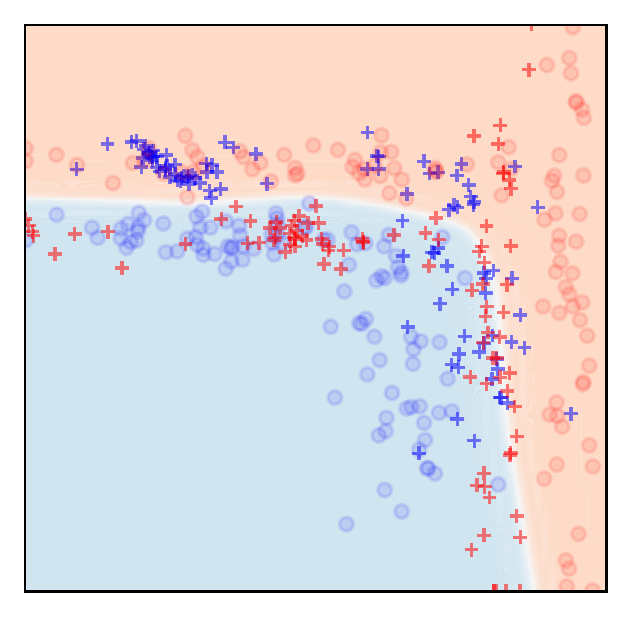}
  }
  \subfigure{
  \includegraphics[width=.235\linewidth]{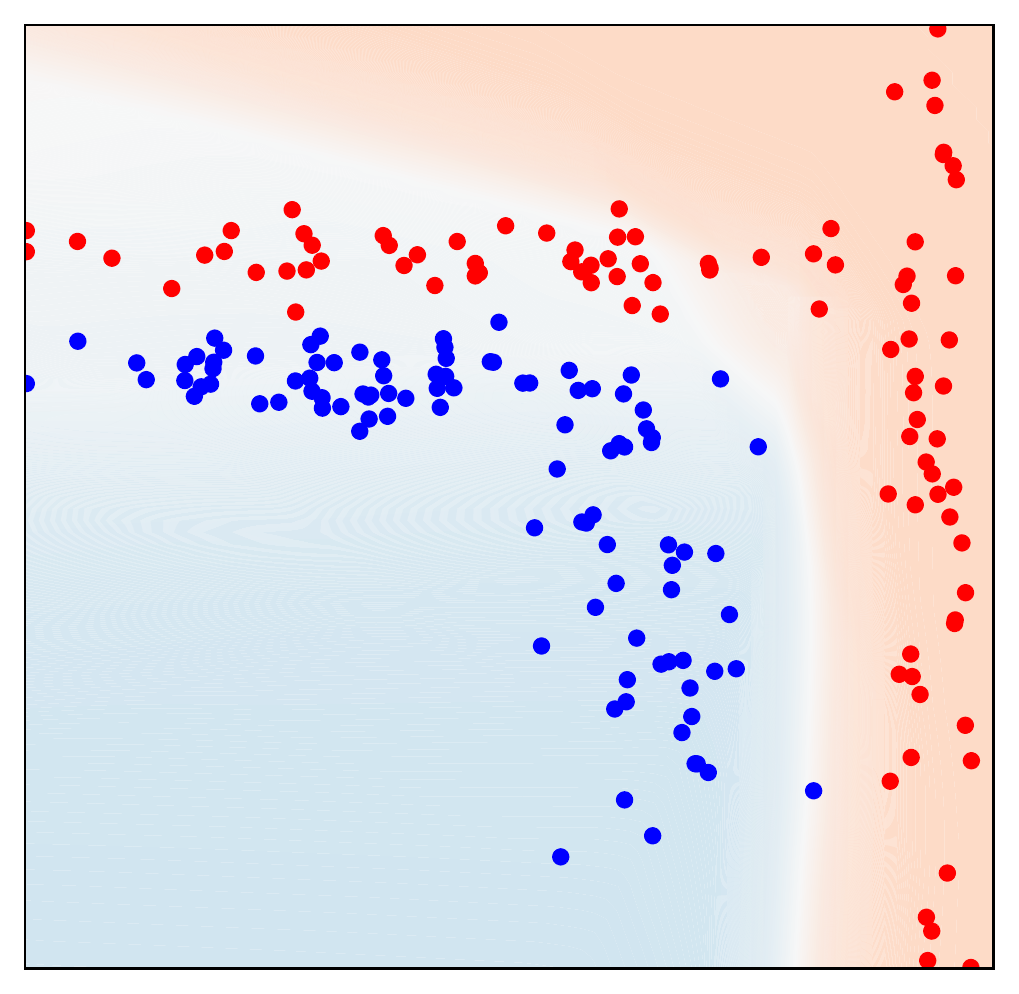}
  }
  \subfigure{
  \includegraphics[width=.235\linewidth]{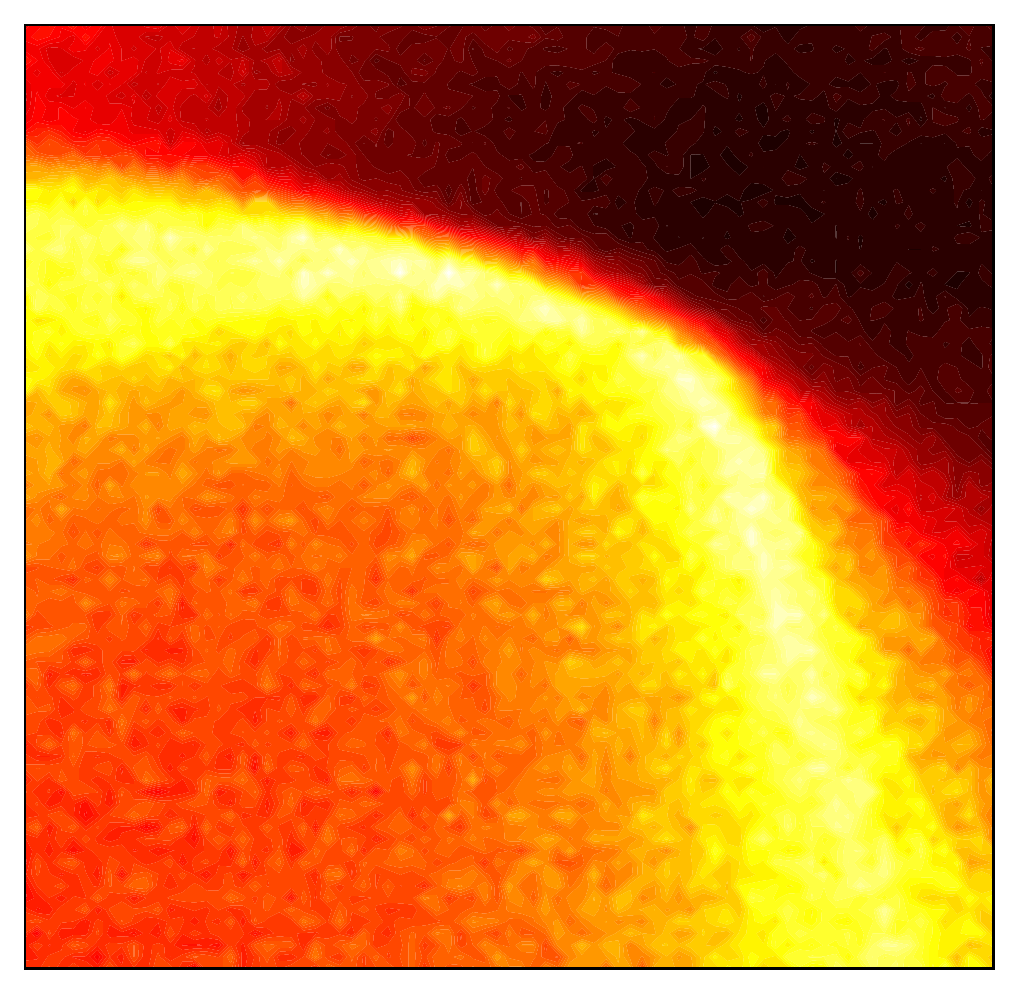}
  } 
    \subfigure[1 step]{\label{fig:toy_1_step}
  \includegraphics[width=.235\linewidth,clip]{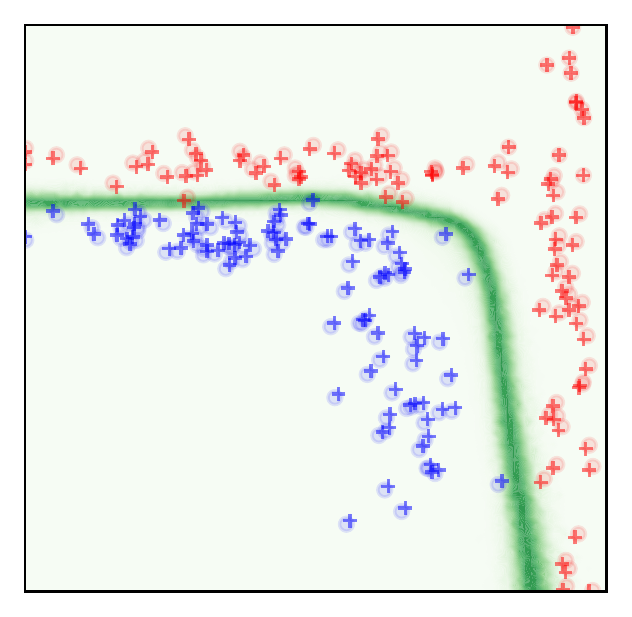}\addtocounter{subfigure}{-4} 
  }
  \subfigure[20 steps]{\label{fig:toy_20_steps}
  \includegraphics[width=.235\linewidth,clip]{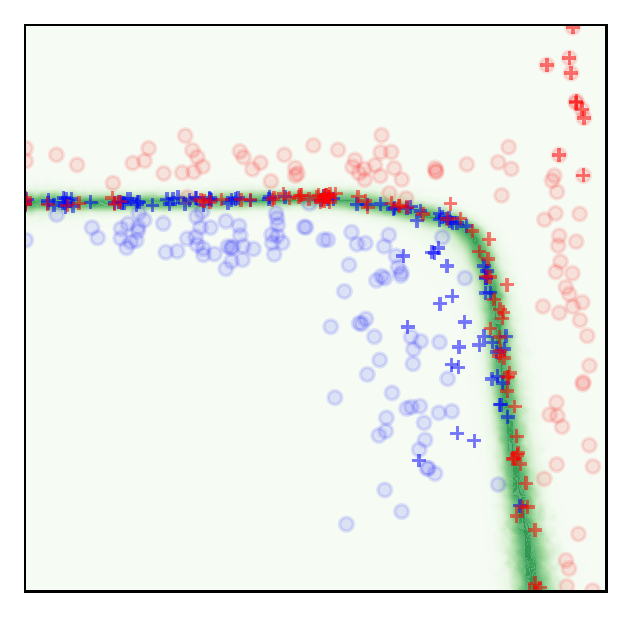}
  }
  \subfigure[Decision boundary]{\label{fig:toy_dec_boundary}
  \includegraphics[width=.235\linewidth,clip]{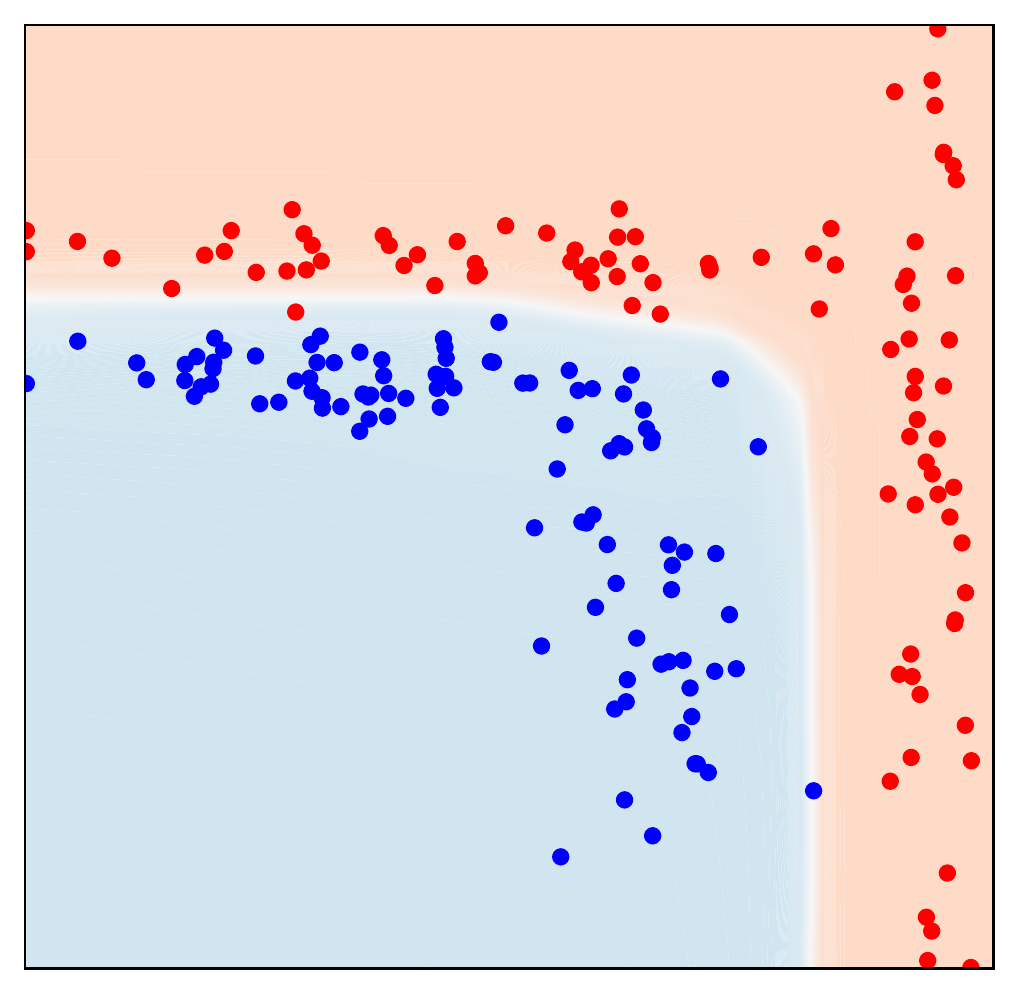}
  }
  \subfigure[Boundary oscillations]{\label{fig:toy_oscil}
  \includegraphics[width=.235\linewidth,clip]{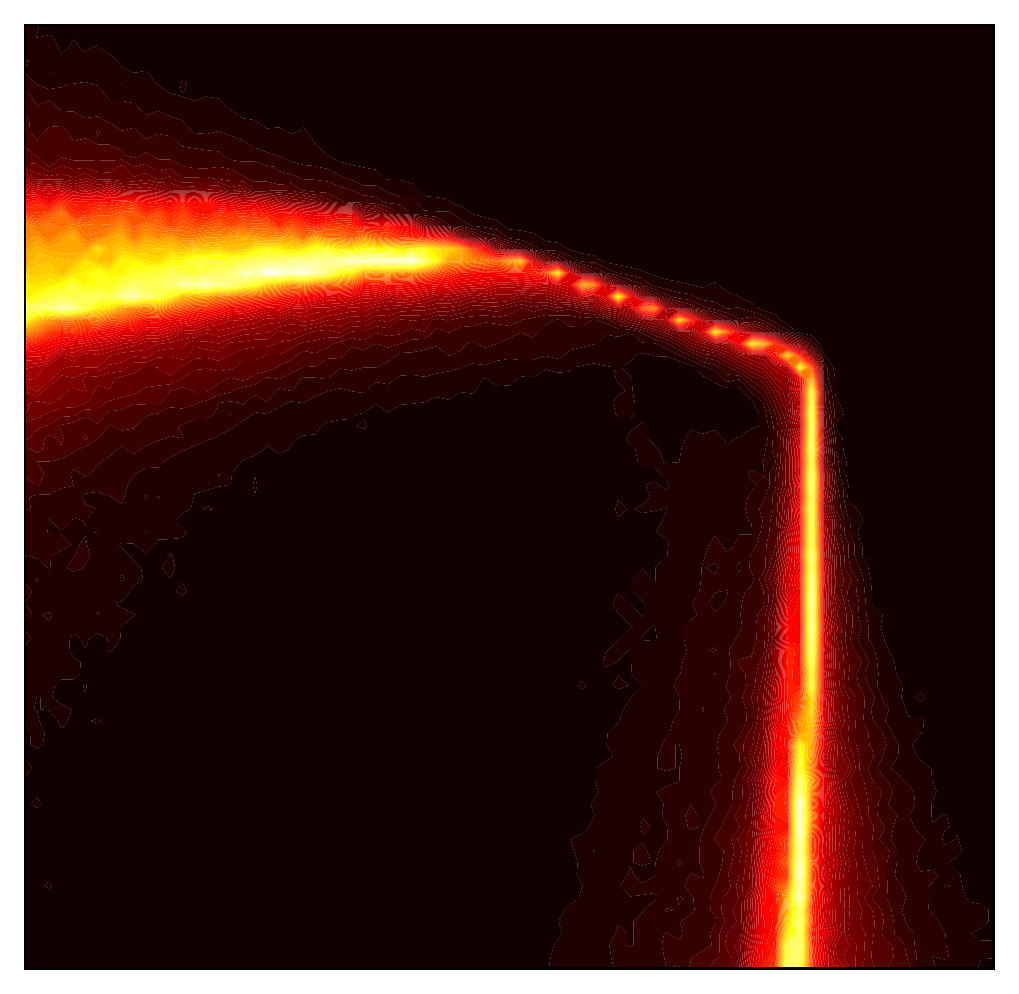}
  }
 \caption{2D experiment---performance of \textbf{LDP-PGD} (left) Vs. \textbf{UDP-PGD} (right) using $\epsilon$ which \textit{violates} the~\ref{eq:at_assumption}. 
 \textbf{Columns \subref{fig:toy_1_step}-\subref{fig:toy_20_steps}.} Attacks to \textit{a fixed trained classifier}, with $1$ and $20$ steps (columns), resp. $\circ$ and $+$ depict the clean and perturbed samples, resp., and their color blue/red depicts their class.
 \textit{Top row:} background depicts the assigned probability to each class of the attacked model, including its decision boundary.
 \textit{Bottom row:} background depicts the uncertainty estimates of a $10$ model ensemble, where darker green is higher. 
 \textbf{Column \subref{fig:toy_dec_boundary}.} 
 Decision boundaries \textit{after} adversarial training with \textit{large} $\epsilon$ (and $k=15$, $\alpha\!=\!0.05$). 
 \textbf{Column ~\subref{fig:toy_oscil}.} Summary of the boundary oscillations over the updates, for a fixed mini-batch size for both \ref{eq:pgd} and \ref{eq:udp}. For each point $\vx \in \R^2$ we count how many times the classifier changed its prediction, and depict with darker to lighter color zero to many changes, resp. 
 See App.~\ref{app:toy_setup_two_diff_distances} for discussion.
 }\label{fig:toy_pgd_uta}
\end{figure*}

\subsection{Additional results on the \textit{Narrow Corridor} dataset}\label{app:additional_results_nc}
Fig.~\ref{fig:nc_varying_ps} is complementary to that of Fig.~\ref{fig:toy_nc_lms5}, and shows the methods' performance for varying $\eps$  values.

\begin{figure}[ht]
    \centering
    \begin{tabular}{cccccc}
    \rotatebox{90}{\parbox{2mm}{\multirow{3}{*}{\qquad LDP-PGD}}} 
    & \includegraphics[width=.16\linewidth]{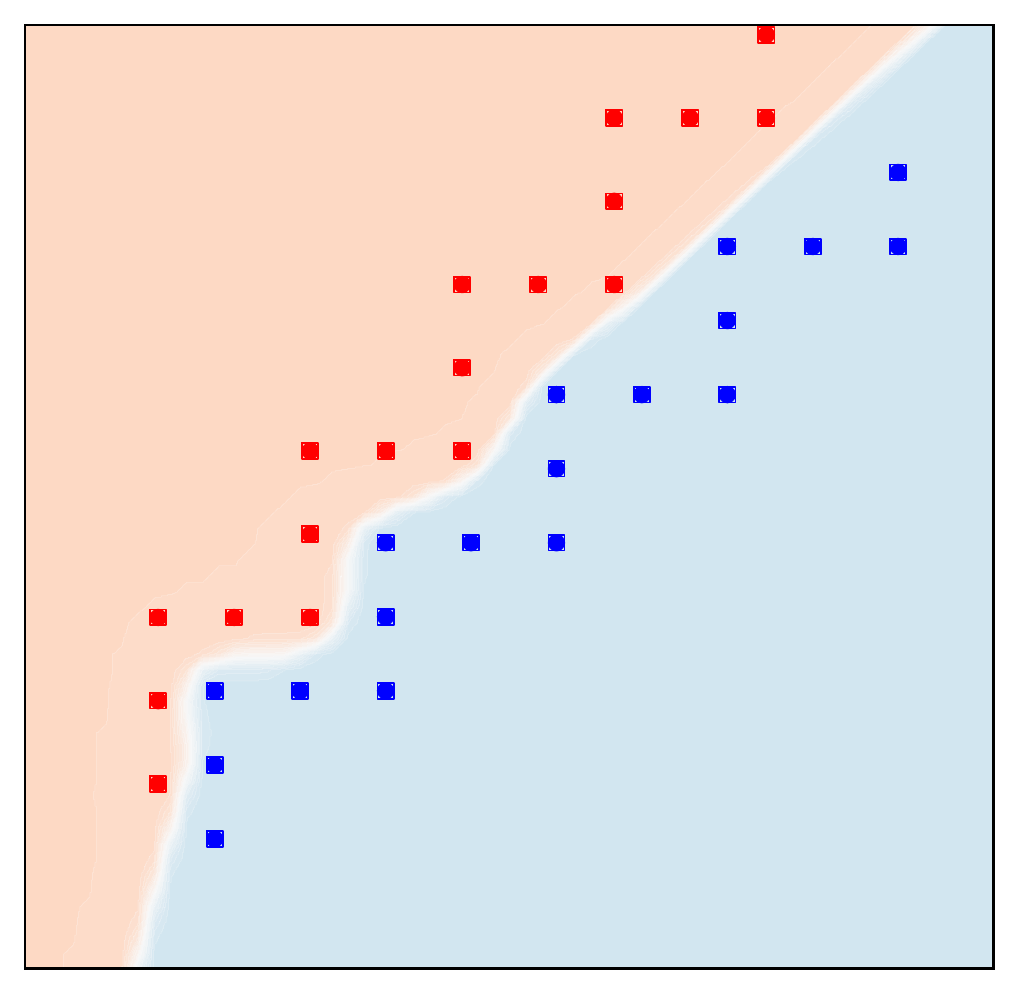}
    & \includegraphics[width=.16\linewidth]{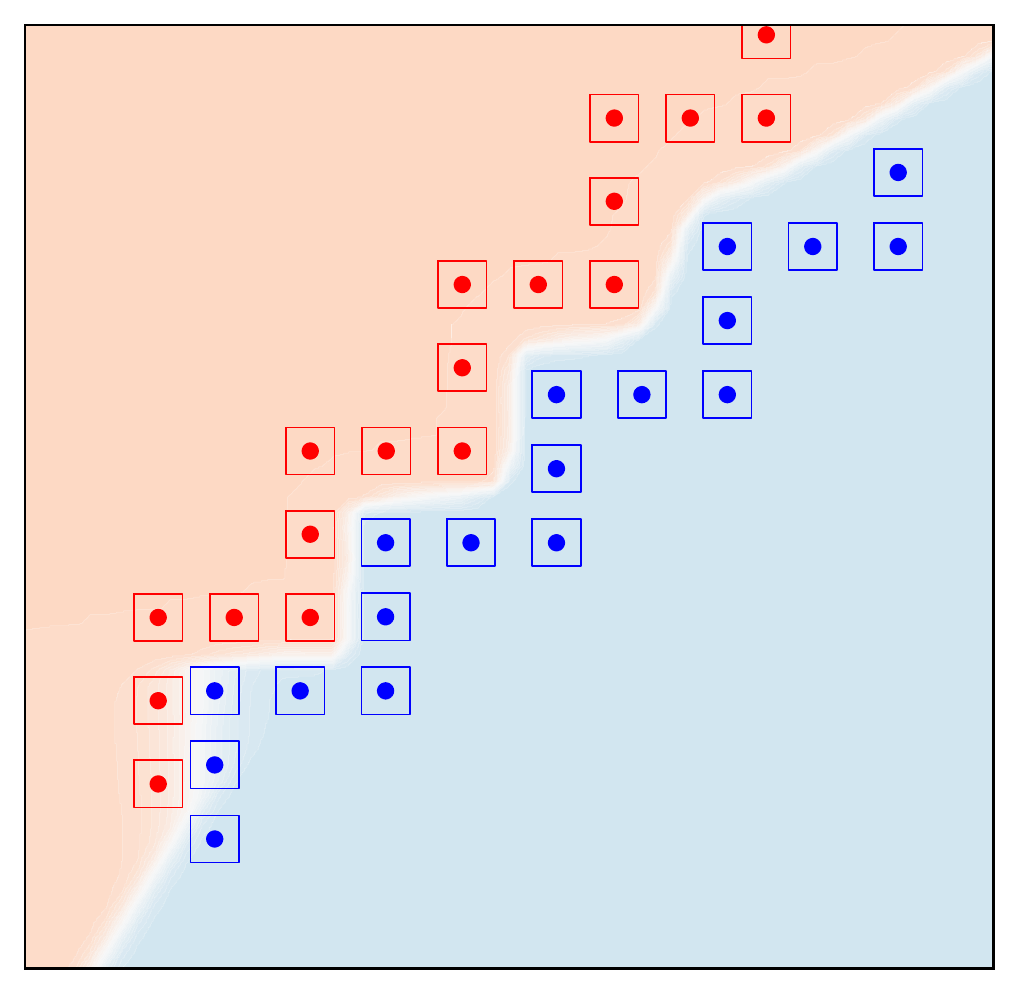}
    & \includegraphics[width=.16\linewidth]{converging_stairs_PGD_0.25_w_box.pdf}
    & \includegraphics[width=.16\linewidth]{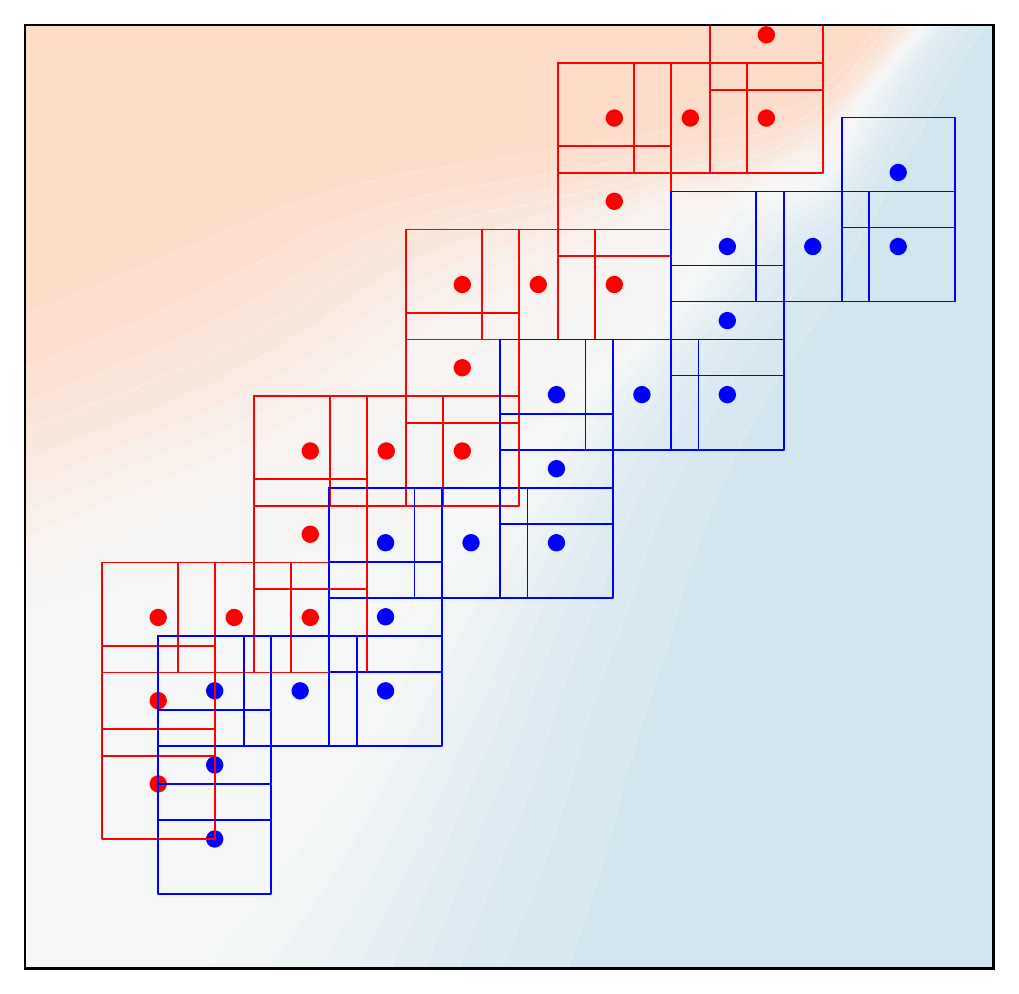}
    & \includegraphics[width=.16\linewidth]{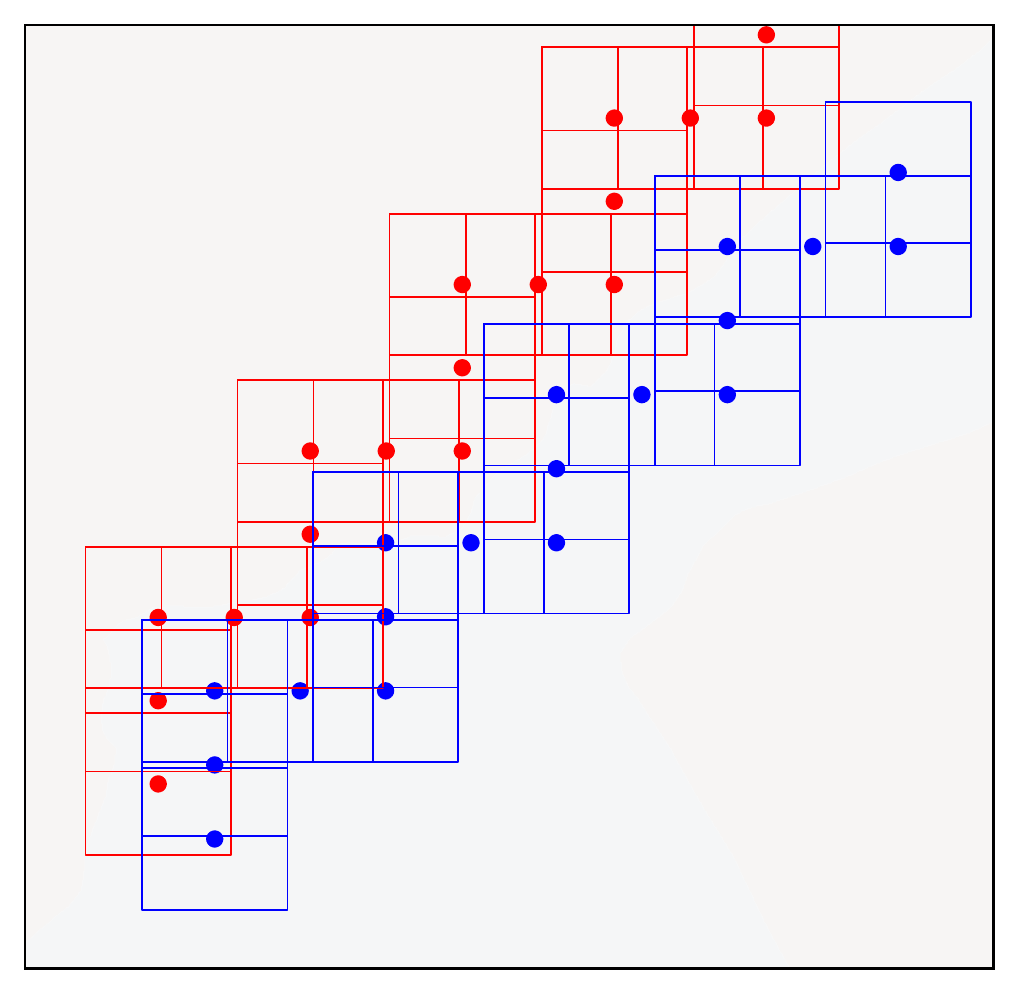}
    \\
    \rotatebox{90}{\parbox{2mm}{\multirow{3}{*}{\quad LDP-Trades}}}
    & \includegraphics[width=.16\linewidth]{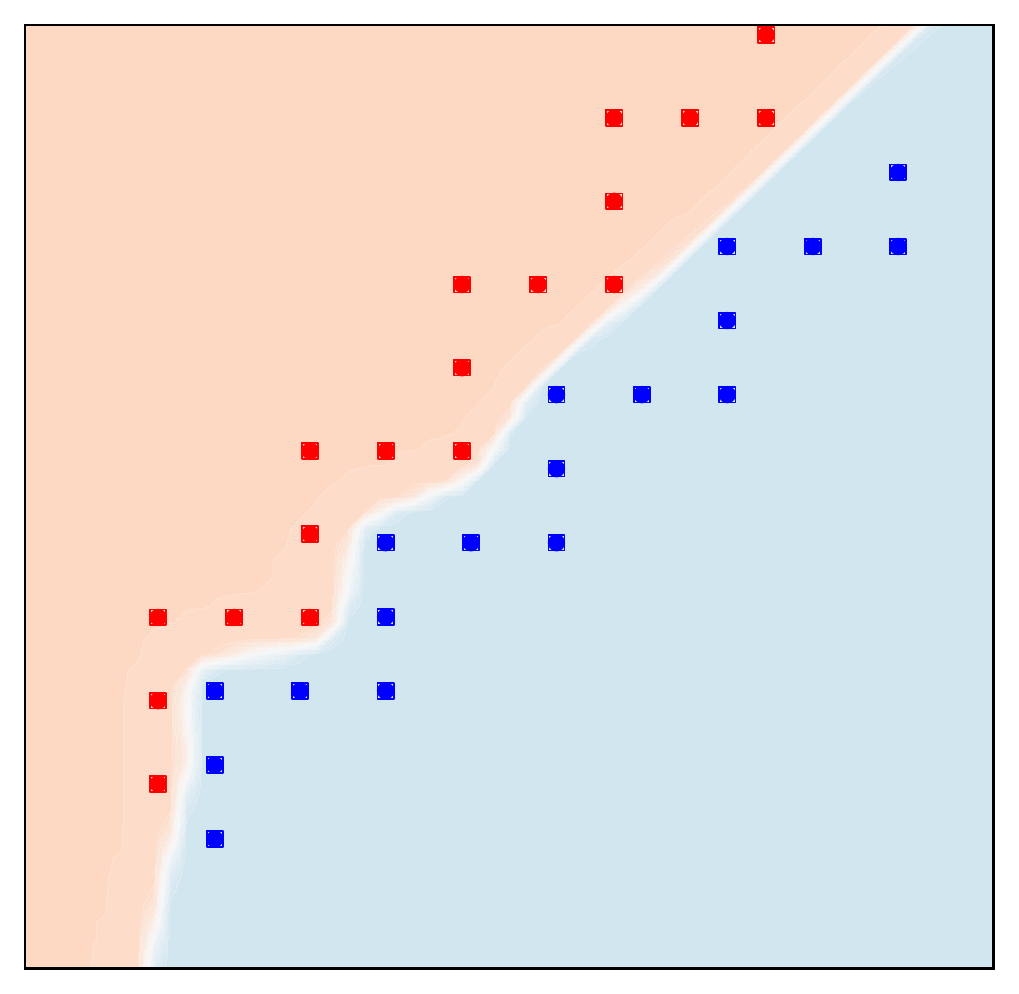}
    & \includegraphics[width=.16\linewidth]{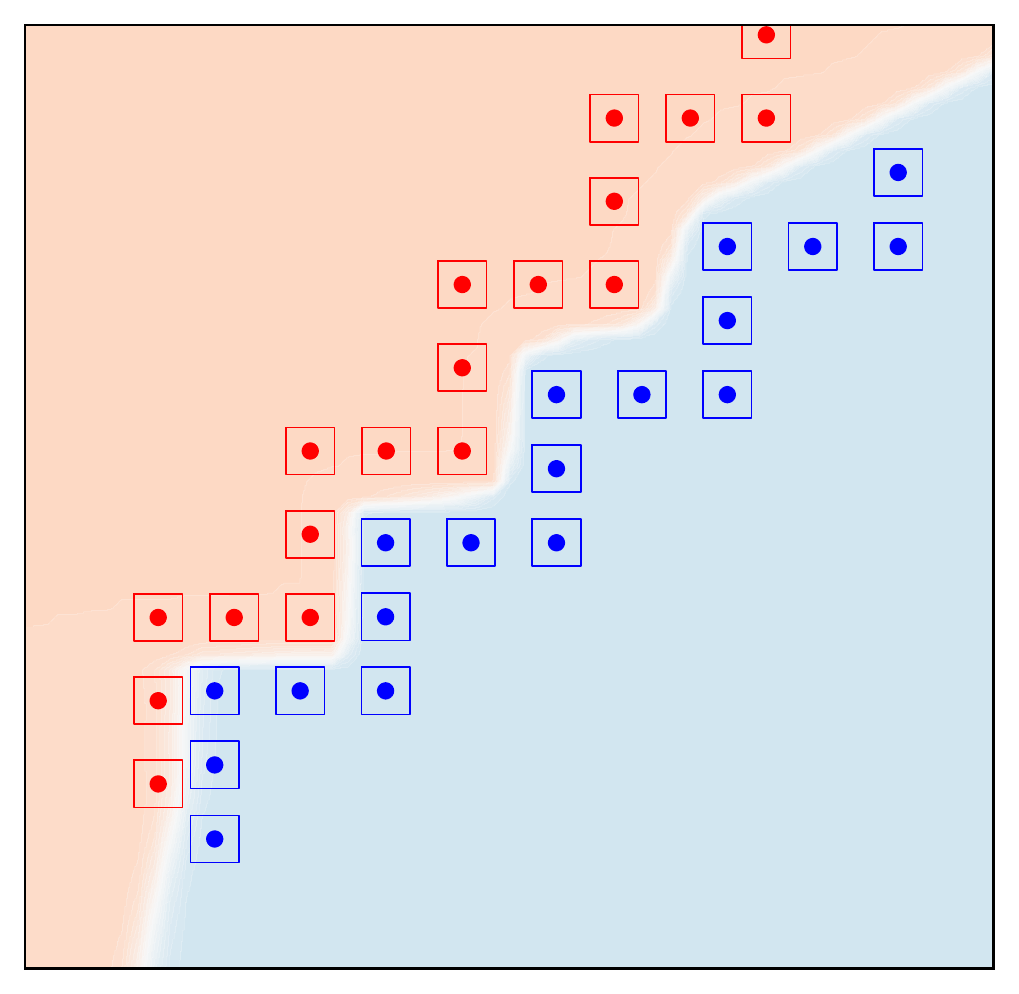}
    & \includegraphics[width=.16\linewidth]{converging_stairs_TRADES_0.25_w_box.pdf}
    & \includegraphics[width=.16\linewidth]{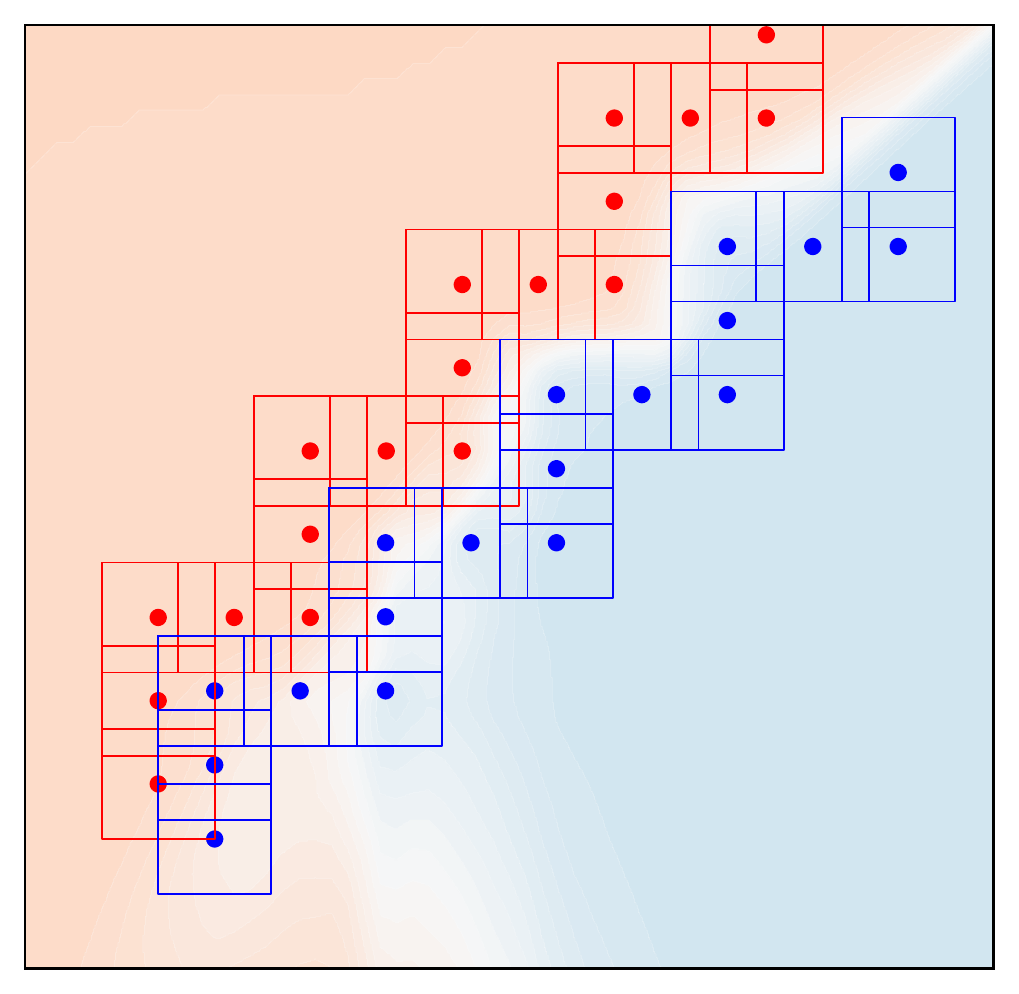}
    & \includegraphics[width=.16\linewidth]{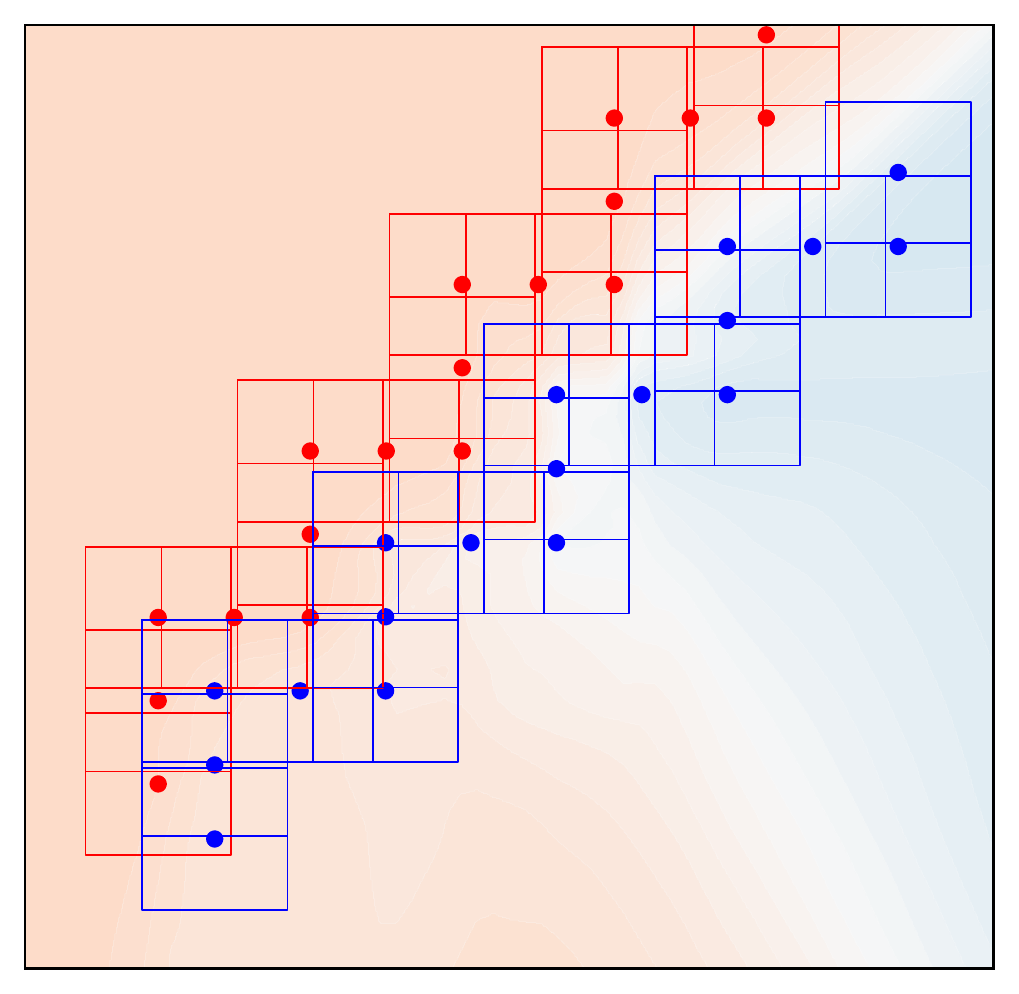}
    \\
    \rotatebox{90}{\parbox{2mm}{\multirow{3}{*}{ UDP-REG-PGD}}} 
    & \includegraphics[width=.16\linewidth]{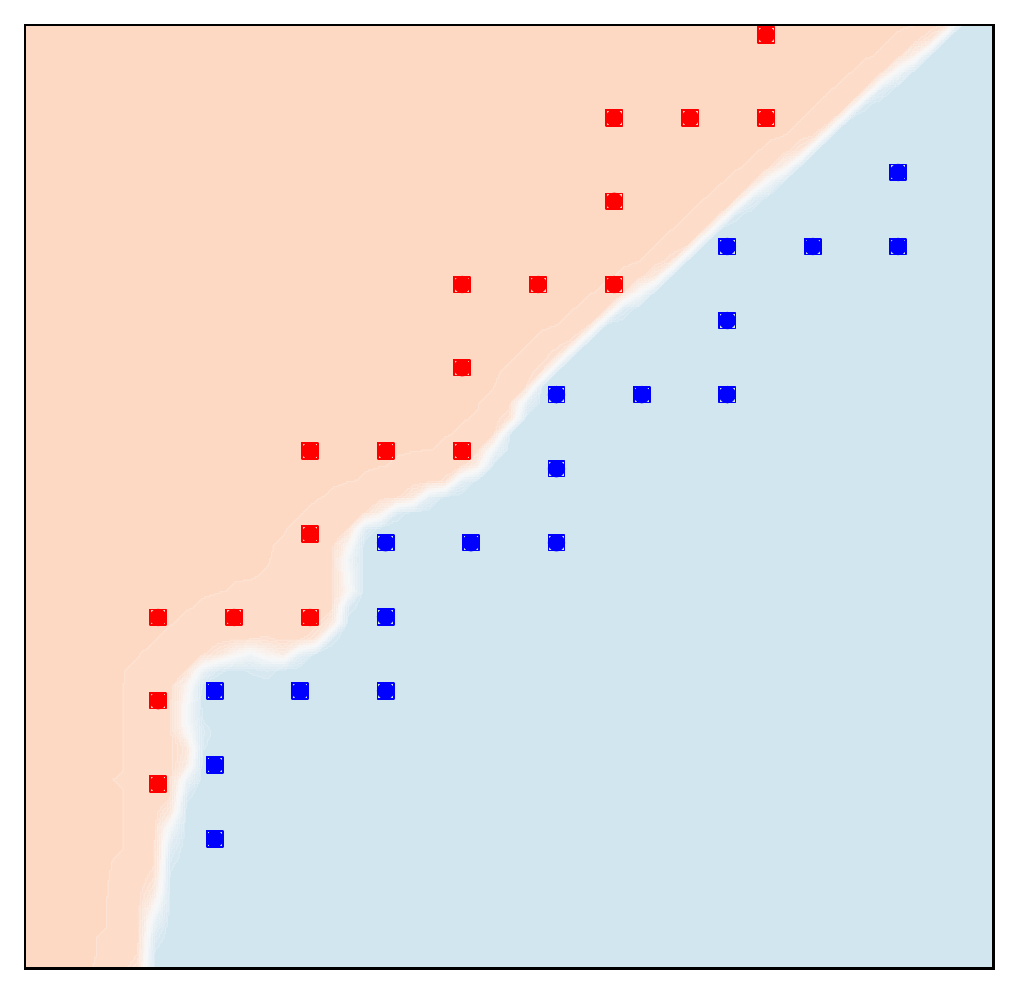}
    & \includegraphics[width=.16\linewidth]{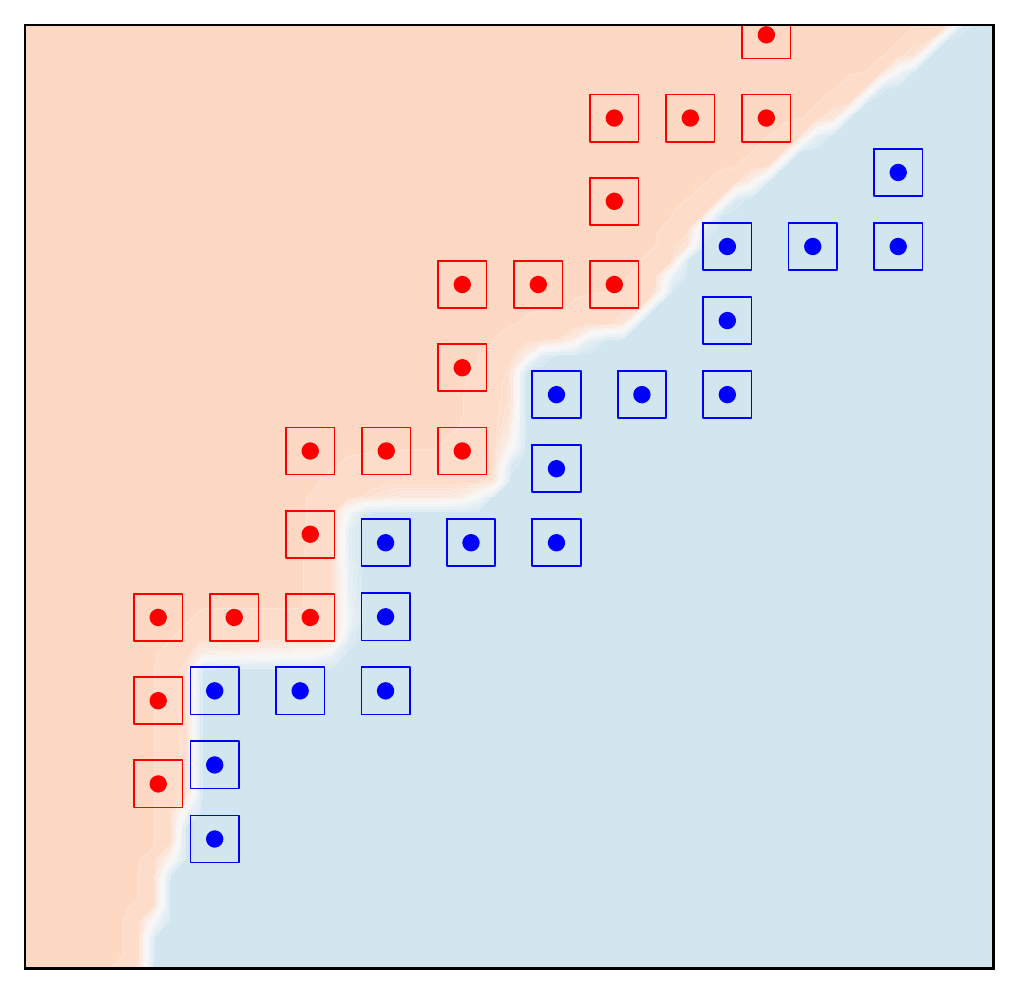}
    & \includegraphics[width=.16\linewidth]{converging_stairs_uta-reg_0.5_a01_ai50_w_box.pdf}
    & \includegraphics[width=.16\linewidth]{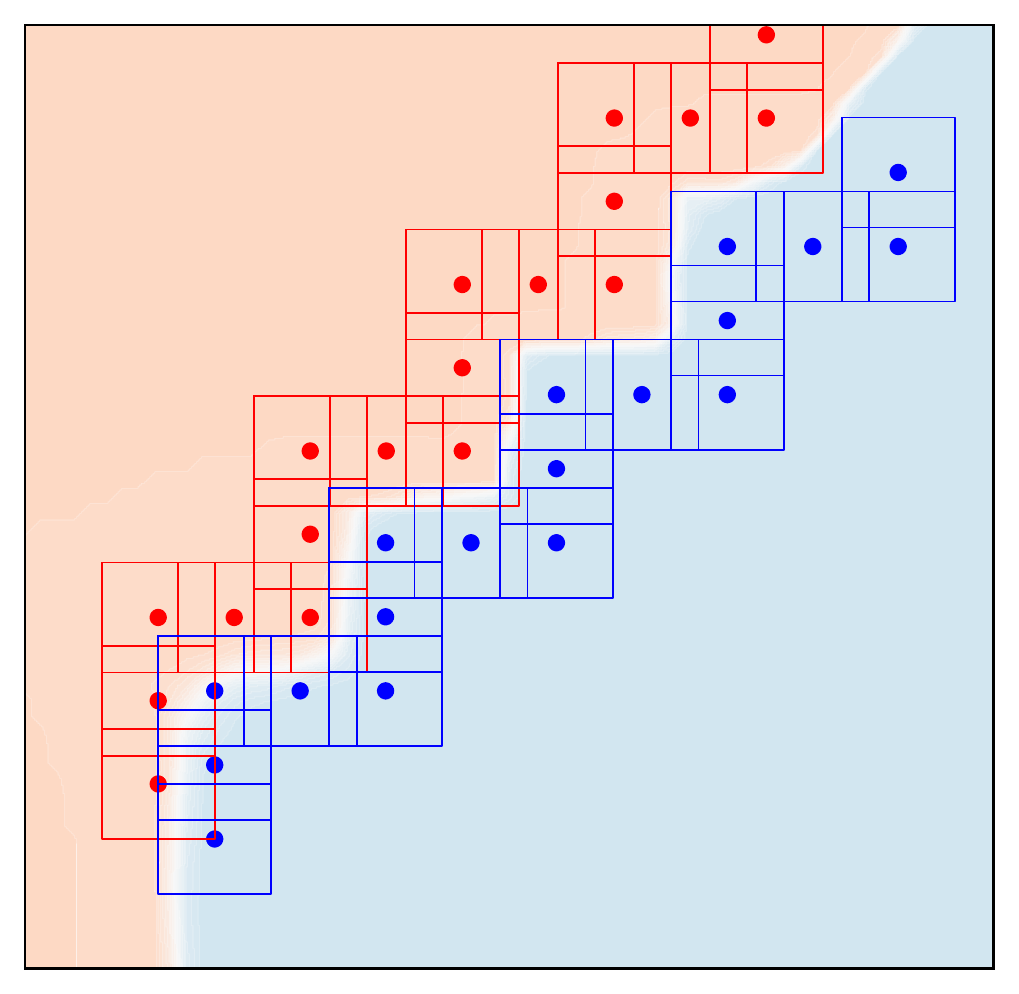}
    & \includegraphics[width=.16\linewidth]{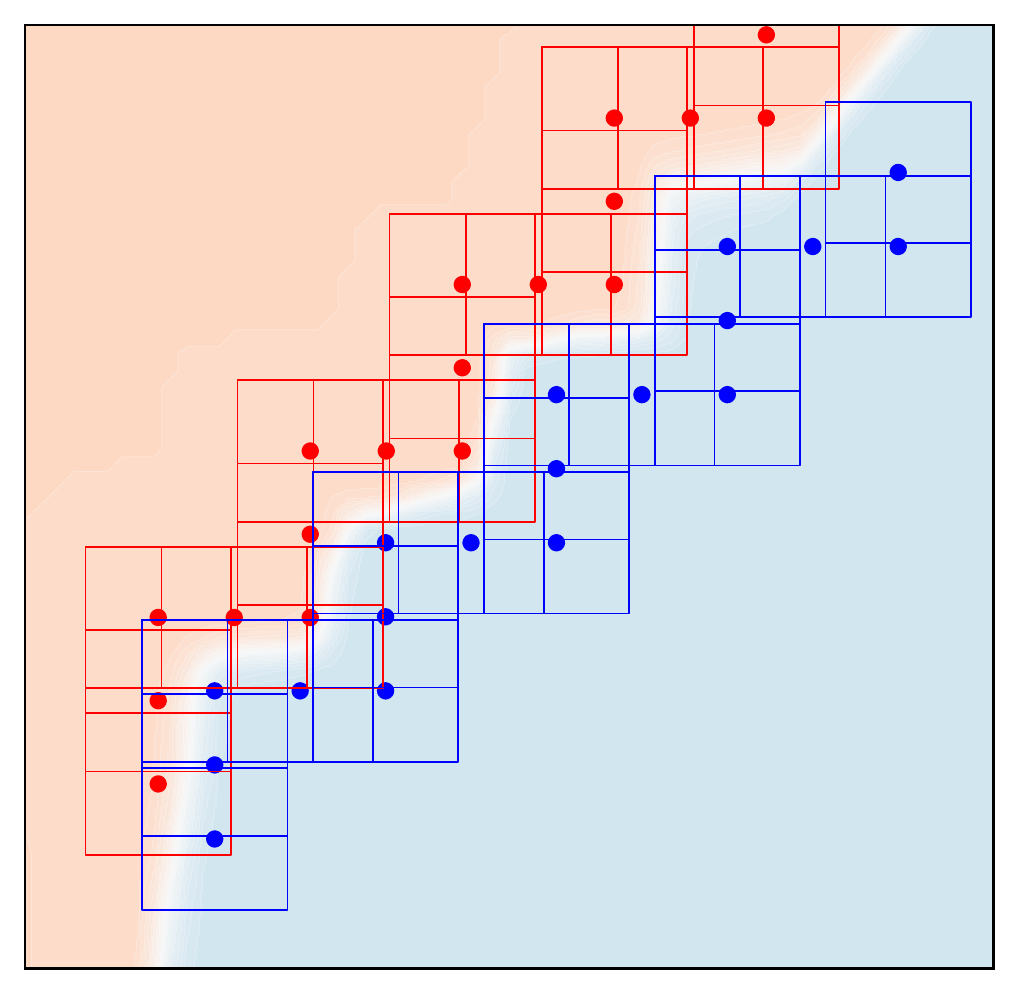}
    \\
    & $\eps=0.05$
    & $\eps=0.15$
    & $\eps=0.25$
    & $\eps=0.35$
    & $\eps=0.45$
    \\
    \end{tabular}
    \caption{Complementary results to that in Fig.~\ref{fig:toy_nc_lms5}, for varying $\eps \in {0.05, 0.15, 0.25, 0.35, 0.45}$.
    The hyperparameters step size and number of steps are fixed for all methods to $\alpha=0.05$, and  $k=10$
    }
    \label{fig:nc_varying_ps}
\end{figure}

\subsection{UDP with Gaussian Process-based model}\label{app:toy_example_gp}

In this section, we consider additional uncertainty estimation methods, and we show that UDP can be implemented with these too. 
As an uncertainty estimation model we use Gaussian Processes, which are considered a the gold standard in uncertainty estimation, see details in App.~\ref{app:related}. 
In particular, we use the \textit{Deterministic Uncertainty Estimation} (DUE) method \citep{van2021feature}. 
This method aims at estimating epistemic uncertainty using a single forward pass through the network by exploiting Gaussian Processes applied to deep architectures \citep{wilson2015deep}. We choose this particular method because it is known to give higher uncertainty estimates to outliers relative to other popular methods~\citep[see Fig. 1][]{van2021feature}. 

\subsection{Visualizations of the PGD and UDP Perturbations}

In  this section, we first train a model using DUE on the clean (unperturbed) 2D dataset, and then we depict the different attacks.

In Fig.~\ref{fig:2d_toy_due_attacks} we can see the decision boundary and the uncertainty, measured in terms of entropy of the model in classifying the clean dataset (without adversarial training). In this case, as in  \ref{sec:example}, we observe that while the PGD-$50$ samples cross notably the boundary, the UDP perturbations do not.
Moreover, UDP perturbations push the samples both towards the decision boundary and towards unexplored regions of the dataset. 
Both the PGD and UDP perturbations are applied using $\epsilon\!=\!1$ and $\alpha\!=\!0.1$.

\begin{figure}[ht] 
    \centering
    \subfigure[PGD-$50$ attack on DUE]{\label{fig:2d_toy_due_pgd}\includegraphics[width=.4\linewidth]{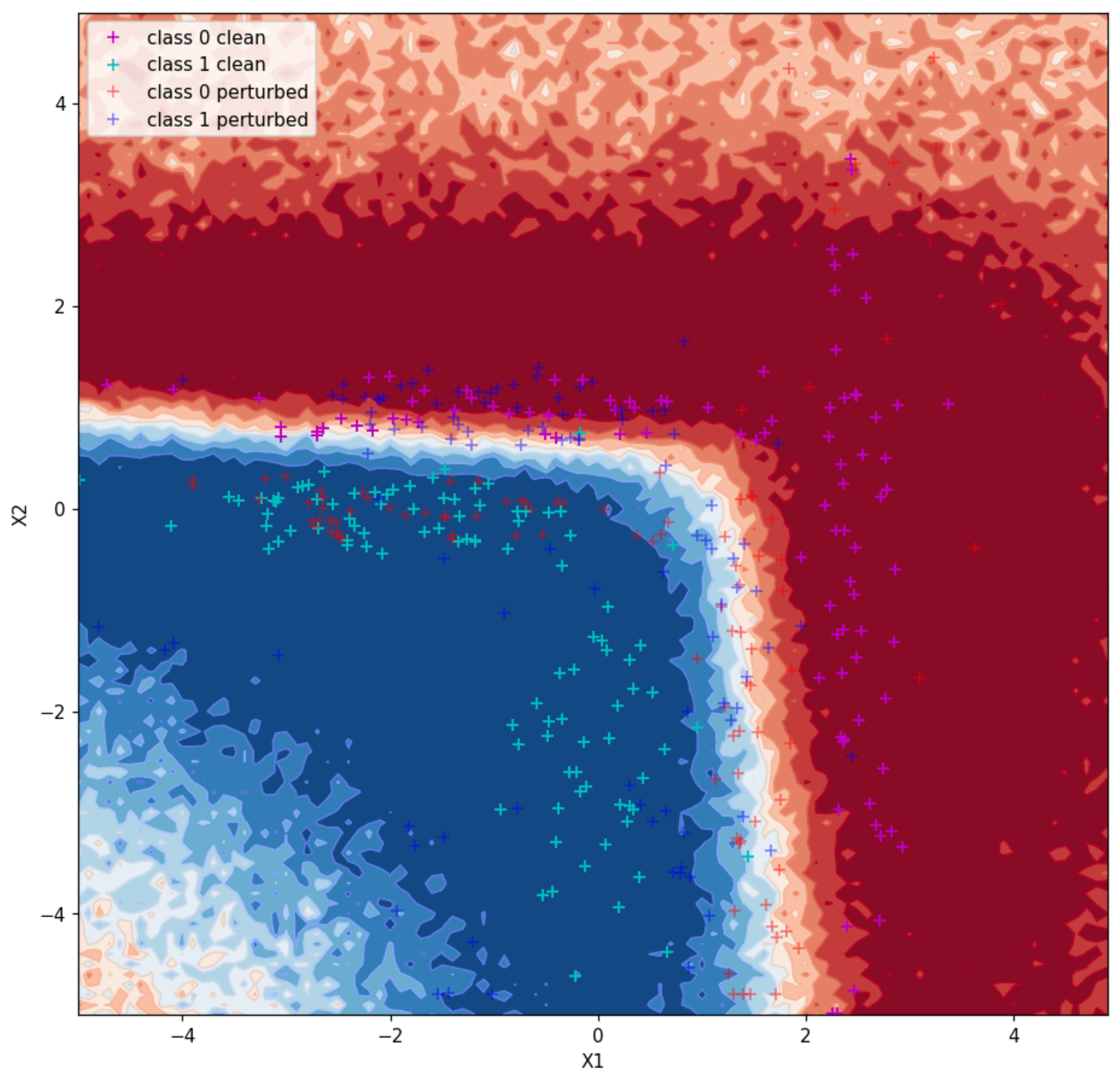}}
    \subfigure[UDP-$50$ perturbations on DUE]{\label{UTA-$50$ attack on DUE}\includegraphics[width=.4\linewidth]{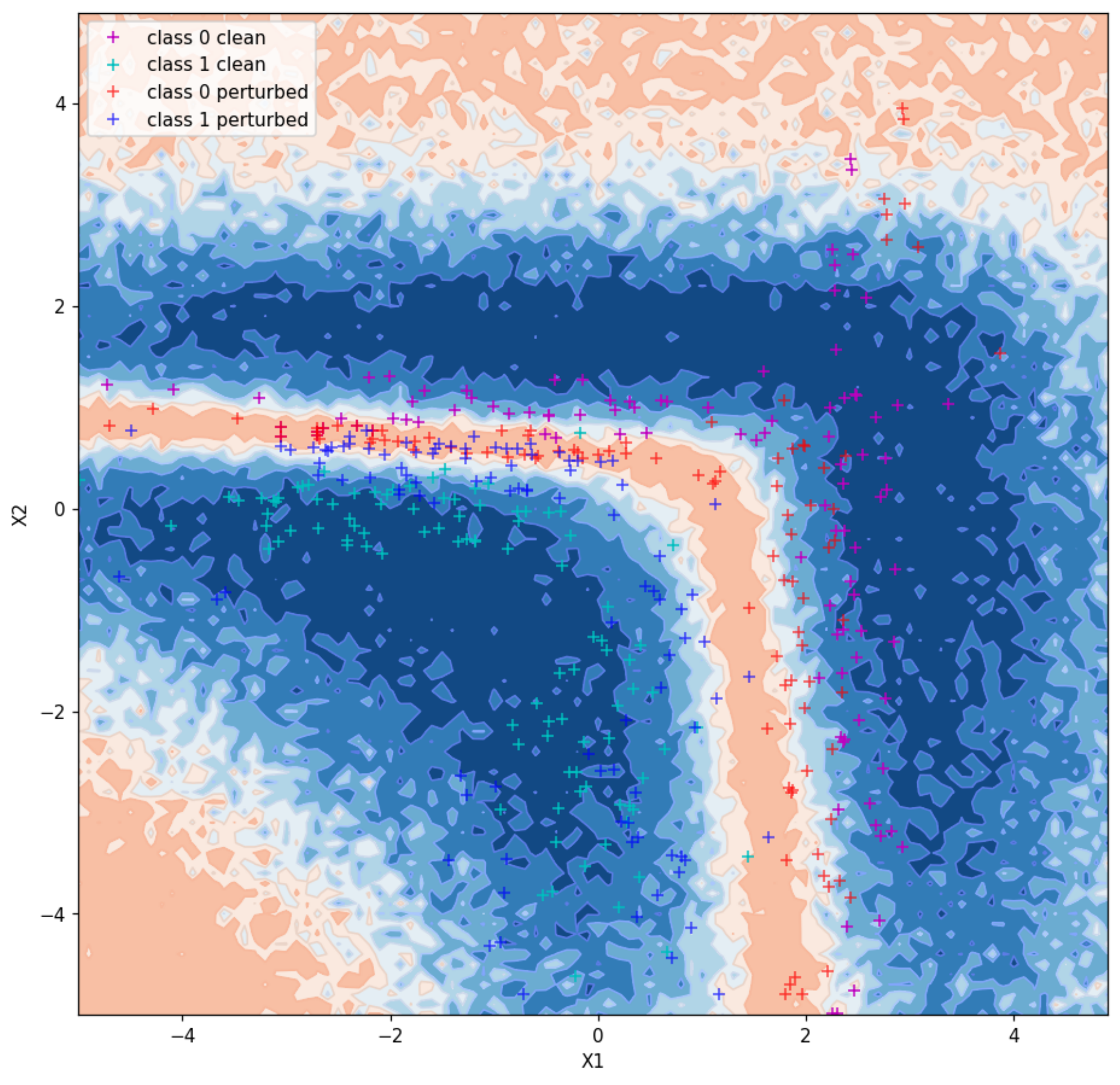}}
    \caption{PGD and UDP attacks on a \textit{fixed} DUE~\citep{van2021feature} model trained solely with clean data on the toy example from \S~\ref{sec:example}. 
    Light and dark blue crosses denote clean and perturbed samples of class 1, and similarly, pink and red crosses denote clean and perturbed samples of class 0.
    \textbf{Left}: the background depicts the loss landscape of the model  (and its decision boundary), clean dataset and perturbed samples using the PGD-$50$ attack. Note how the PGD samples cross the boundary and go to the other class' region. 
    \textbf{Right}: entropy of the model, clean dataset and UDP-$50$ perturbed samples. UDP samples do not cross the boundary and are able to go towards unexplored regions of the dataset.}
    \label{fig:2d_toy_due_attacks}
\end{figure}

\subsection{Adversarial Training using the DUE Classifier}

In this section, we use the same setup as in the previous section, however, we train adversarially the DUE model using PGD and UDP perturbations. For both PGD and UDP we use: $\epsilon\!=\!1$ and $\alpha\!=\!0.1$.

 Fig.~\ref{fig:2d_toy_due_pgd_uta_at} depicts the results. We observe that PGD training led to a very different decision boundary compared to the one obtained by training only with clean samples and leads to a model that is not able to classify the dataset with good performances. On the other hand, UDP-PGD preserves a good decision boundary, very similar to the one shown in Figure \ref{fig:2d_toy_due_attacks}.

\begin{figure}[ht]
    \centering
    \subfigure[Decision boundary of a PGD AT model]{\label{fig:2d_toy_due_pgd_at_db}\includegraphics[width=.4\linewidth]{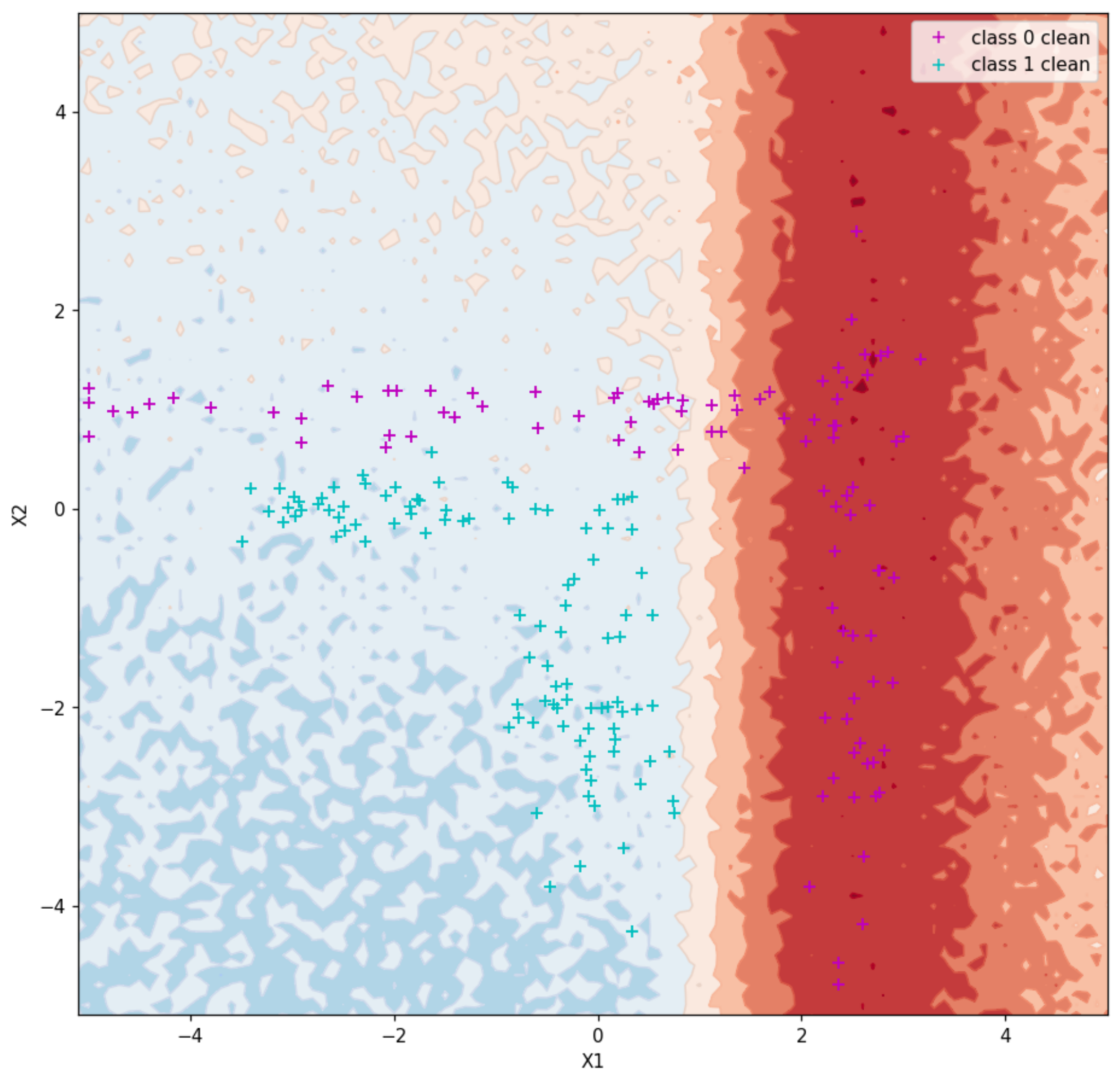}}
    \subfigure[Decision boundary of a UDP AT model]{\label{fig:2d_toy_due_pgd_at_unc}\includegraphics[width=.4\linewidth]{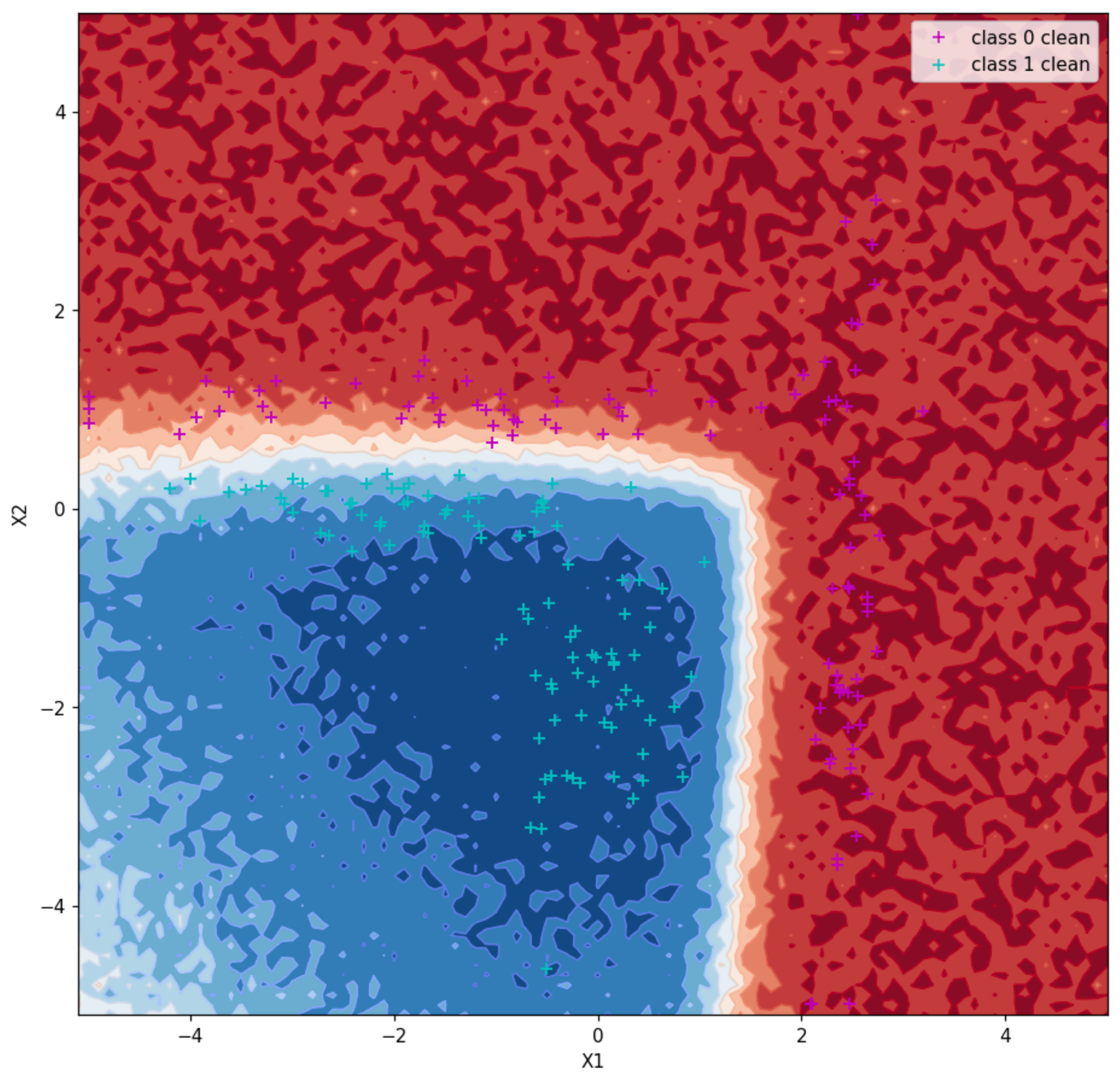}}
\caption{PGD and UDP-PGD on a DUE model trained on a non-isotropic toy example dataset. \textbf{Left}: decision boundary obtained by training adversarially with PGD-$50$ perturbed samples. The model is not able to correctly classify the dataset. \textbf{Right}: decision boundary obtained by training adversarially with UDP-$50$ perturbed samples. The model is not is now able to classify correctly the dataset and the decision boudnary is much more similar to the one obtained by training with clean samples only (shown in Figure \ref{fig:2d_toy_due_attacks}).}
\label{fig:2d_toy_due_pgd_uta_at}
\end{figure}

\clearpage
\section{Additional Results on Real-world datasets}\label{app:results}

\subsection{Omitted Results}\label{app:omitted_results}
In addition to Fig.~\ref{fig:clean-acc} of the main paper, Fig.~\ref{fig:clean-acc-app} depicts the clean test accuracy for \textbf{Fashion-MNIST}.

\begin{figure}[htb]
  \centering
  \subfigure[Fashion-MNIST]{\label{subfig:fmnist-single_clean-app}
  \includegraphics[width=.31\linewidth]{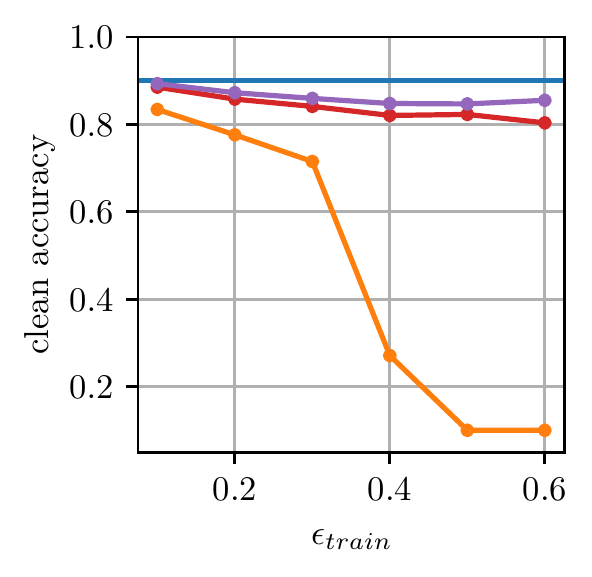}}
  \subfigure[SVHN]{\label{subfig:svhn-single_clean-app}
  \includegraphics[width=.31\linewidth]{clean-acc-SVHN_smaller.pdf}}
  \subfigure[CIFAR-10]{\label{subfig:cifar10-single_clean-app}
  \includegraphics[width=.31\linewidth,trim={0cm .0cm 0cm .0cm},clip]{clean-acc-CIFAR10_smaller.pdf}}
  \vspace*{-.1cm}
\caption{ Complementary figure to that of Fig.~\ref{fig:clean-acc} that includes \textbf{Fashion-MNIST}: \textit{Clean accuracy} ($y$-axis) comparison between PGD, TRADES, UDP-PGD, and UDPR, for varying $\eps_\text{train}$ used for training ($x$-axis). 
Results are averaged over multiple runs.}\label{fig:clean-acc-app}
\end{figure} 

\subsection{Catastrophic overfitting}\label{sec:cat_overfit}

\noindent\textbf{Fashion-MNIST.} 
We used a LeNet model trained on the Fashion-MNIST dataset. Both for FGSM and UDP-PGD \textit{with one step}  we used during training $\eps\!=\!0.2$ and $\alpha=\eps$; and for testing we used PGD with $20$ iterations, $\eps\!=\!0.2$ and $\alpha\!=\!0.01$. See App.~\ref{app:implementation} for further details.

\noindent\textbf{CIFAR-10.} 
For the experiments on CIFAR-10 we used UDP combined with an ensemble of five-models sampled with MC Dropout, using a ResNet-18~\citep{resnet} architecture, see App.~\ref{app:implementation} for details.

\begin{figure}[!htbp]
  \centering
  \subfigure[\ref{eq:fgsm} training]{\label{subfig:co_fgsm}\includegraphics[width=.35\linewidth]{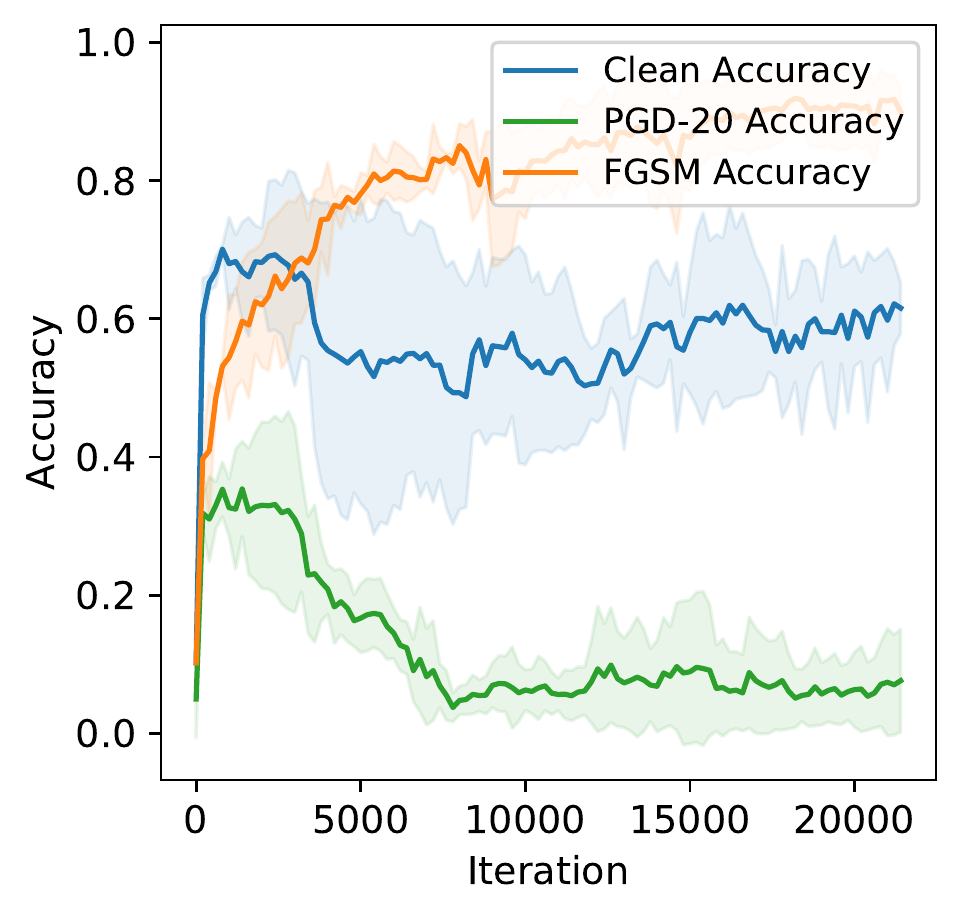}}
  \subfigure[single model UDP-PGD training]{\label{subfig:co_uta}\includegraphics[width=.35\linewidth]{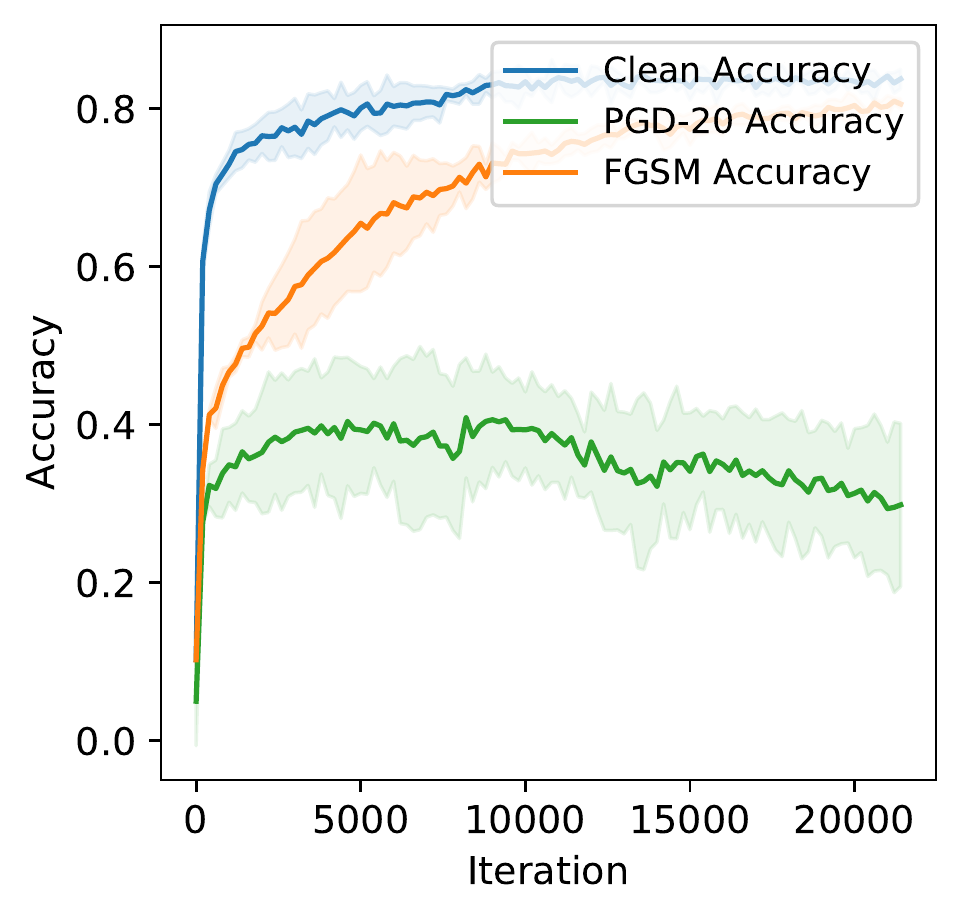}}
 \caption{Catastrophic Overfitting (CO) on the \textbf{FashionMNIST} dataset using a LeNet model; results are averaged over $3$ runs. \textbf{Left}: training with FGSM using a step size $\alpha\!=\!0.2$, $\epsilon=\alpha$; and testing against PGD-$20$ with $\epsilon\!=\!0.2$ and $\alpha\!=\!0.01$. Soon after the beginning of the training the FGSM accuracy suddenly jumps to very high values while the PGD-$20$ accuracy approaches $0$. Note also how the clean accuracy decreases and becomes lower than FGSM after CO. \textbf{Right}: training with UDP-PGD (with 1 single step) using a step size $\alpha\!=\!0.2$, $\epsilon=\alpha$; and testing against PGD-$20$ with $\epsilon\!=\!0.2$ and $\alpha\!=\!0.01$. 
 While the PGD robustness for UDP-PGD decreases to some extent, the drop is not as large as for FGSM AT.
 Note also how training with UDP-PGD with a single step leads to higher clean accuracy.
}\label{fig:co}
\end{figure}

\begin{figure}
\centering
     \includegraphics[width=.4\textwidth]{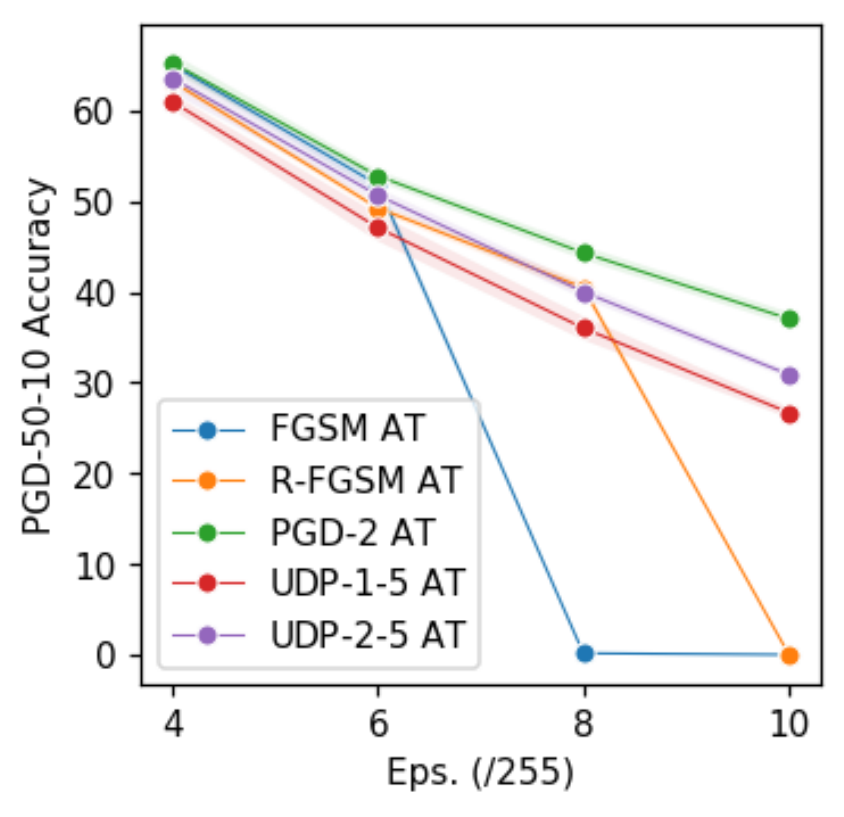}
     \caption{\textit{Catastrophic overfitting} (CO) on \textbf{CIFAR-10} using ResNet 18, averaged over $3$ runs. PGD robustness (50 iterations, 10 restarts) evaluated for multiple $\eps_{\text{test}}$ for FGSM, Random-FGSM, PGD with two steps, UDP-PGD with $1$ step and an ensemble of $5$ models, and UDP-PGD with 2 steps and and ensemble of 5 models.} \label{fig:cifar10_CO}
\end{figure}

\noindent\textbf{Results.} 
Fig. \ref{subfig:co_fgsm} and \ref{subfig:co_uta} depict the results on FashionMNIST which evaluate if the methods are prone to Catastrophic Overfitting (CO).
Relative to \ref{eq:fgsm}, single model \ref{eq:udp} notably improves CO, as although there is a downward trend in PGD-$20$ accuracy after a certain number of iterations, it does not reduce to~$0$.

\subsection{Visual Appearance of the Attacks}\label{app:visual}
In Fig.~\ref{fig:compare_imgs_1000steps_cifar} we use large $\epsilon$-ball to obtain clearly visible difference between the \ref{eq:pgd} and \ref{eq:udp-pgd} attacks, as well as to see if the insights from the experiment from Fig.~\ref{fig:toy_pgd_uta}, \S~\ref{sec:example} could extend to real world setups.
The topmost row depicts clean samples from CIFAR-10. Using an already trained classifier on CIFAR-10, we show how UDP and PGD perturbations look, in the middle and bottom rows, respectively.
We observe that for large $\epsilon$-ball PGD attacks \textit{can} make the content non-recognizable for human eye, whereas the class of UDP perturbed samples remains perceptible.

\begin{figure}[ht]
    \centering
    \includegraphics[width=.8\linewidth]{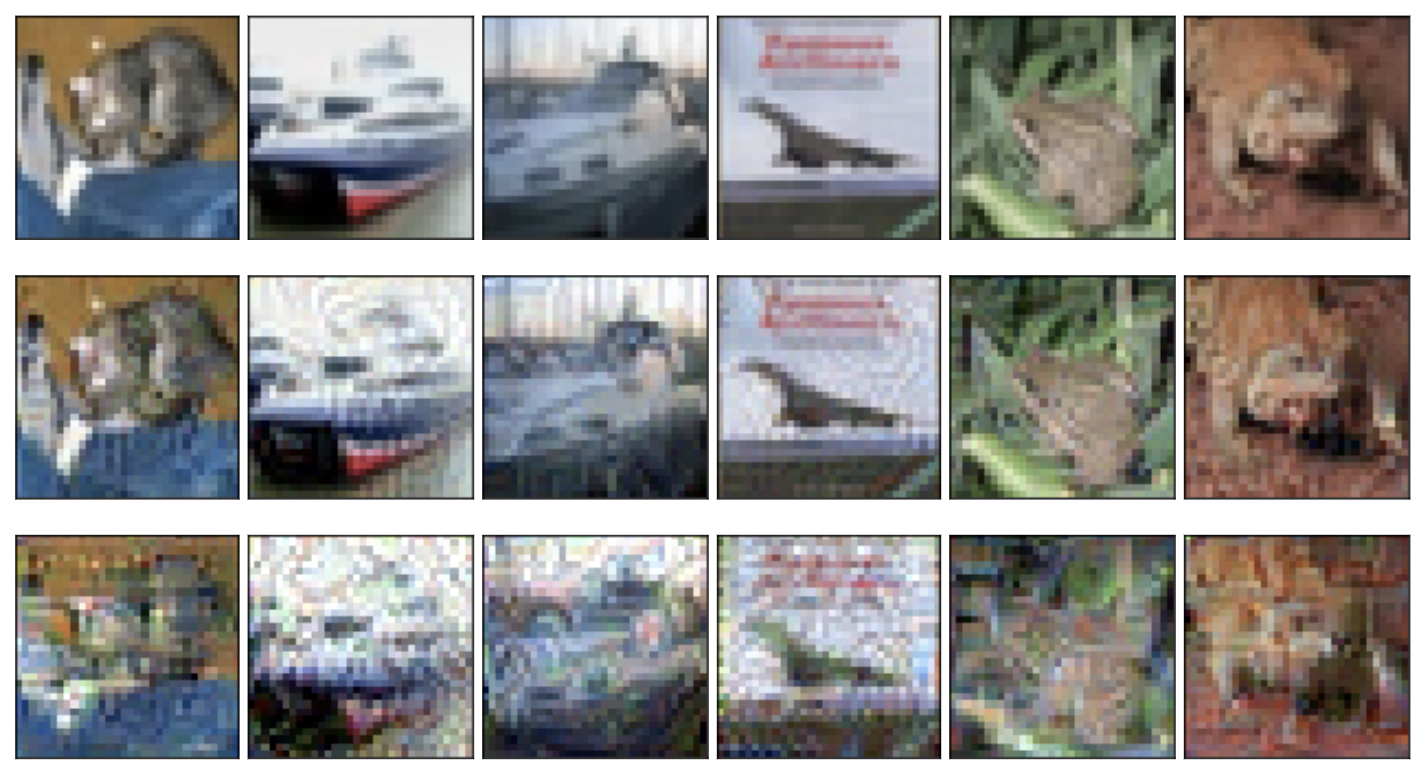}
    \caption{ Illustration of the difference between PGD and UDP perturbations for a classifier trained on CIFAR-10: 
    \textit{(i)} top row: clean samples,
    \textit{(ii}) middle row: UDP perturbations,
    \textit{(iii)} bottom row: PGD attacks, 
    where for UDP and PGD we use same setup (1000 steps, step size of $0.001$, $\epsilon = \infty$).
    We use large number of steps to verify empirically if the difference between UDP and PGD depicted in Fig.~\ref{fig:toy_pgd_uta} holds on real-world datasets as well. Contrary to the PGD-perturbed samples, the correct class of the UDP-perturbed ones remains perceptible. 
    See \S~\ref{app:visual}.
    }
    \label{fig:compare_imgs_1000steps_cifar}
\end{figure}

\clearpage
\section{Details on the implementation}\label{app:implementation}

\paragraph{Source code.} Our source code is provided in this anonymous repository:
~\href{https://github.com/mpagli/Uncertainty-Driven-Perturbations}{github.com/mpagli/Uncertainty-Driven-Perturbations}

For completeness, in this section we list the details of the implementation.

\subsection{Architectures \& Hyperparameters}\label{app:arch}
In this section, we describe in detail the architecture used for our experiments for the various datasets.

\subsubsection{Architecture for Experiments on Fashion-MNIST}\label{app:lenet_arch}

\paragraph{FashionMNIST.} 
We used a LeNet architecture as described in table \ref{tab:fashionmnist_arch}. 
This network has been trained for $100$ epochs using the Adam optimizer~\citep{kingma2014adam} with a learning rate of $0.001$.
The parameters of the models are initialized using PyTorch default initialization.

\begin{table}[h]\centering
\begin{tabular}{@{}c@{}}\toprule
\textbf{LeNet}\\\toprule
\textit{Input:} $x \in \mathds{R}^{28\times28}$ \\  \hdashline  
conv. (kernel: $5{\times}5$, $1 \rightarrow 6$;  padding: $2$; stride: $1$) \\
 ReLU  \\
 max pooling (kernel: $2\times2$; stride: 2)\\
 convolutional (kernel: $5{\times}5$, $6 \rightarrow 16$; stride: $1$)\\
 ReLU  \\
 max pooling (kernel: $2\times2$; stride: 2)\\
 Flattening \\
 fully connected ($16\times5\times5 \rightarrow 120$) \\
 ReLU  \\
 fully connected ($120 \rightarrow 84$) \\
 ReLU  \\
 fully connected ($84 \rightarrow 10$) \\ \hdashline 
 $Softmax(\cdot)$ \\ 
\bottomrule 
\end{tabular} 
\caption{LeNet architecture used for experiments on \textbf{FashionMNIST}.
With $h{\times}w$ we denote the kernel size.
With $ c_{in} \rightarrow y_{out}$ we denote the number of channels of the input and output, for the convolution layers, and the number of input and output units for fully connected layers.
}\label{tab:fashionmnist_arch}
\end{table}

\subsubsection{Architecture for Experiments on CIFAR-10 and SVHN}\label{app:arch_cifar10}
The ResNet-18 setup on CIFAR-10 and SVHN is as in~\citep{andriushchenko2020understanding}. We trained for $60$ epochs with a triangular learning rate scheduler and a peak learning rate of $0.15$. 
For the experiments in Sec.\ref{sec:model_capacity} we trained for $100$ epochs with a triangular learning rate scheduler and peak lerning rage of $0.05$. For both settings we use the SGD optimizer with a batch size of $256$.

Solely for some of the experiments in the appendix--see App.\ref{sec:cat_overfit}, we  modified the original ResNet architecture to accommodate the MC-dropout sampling procedure, see Tab.~\ref{tab:resblock}.  The modification consists of adding a dropout layer with dropout probability $p= 0.2$ after each convolutional layer.

\begin{table}\centering		 
\begin{minipage}[b]{0.5\hsize}\centering  
\resizebox{1.0\textwidth}{!}{
\begin{tabular}{@{}c@{}}\toprule
\textbf{ResBlock (part of the $\ell$--th layer)}\\\toprule
	\multicolumn{1}{l}{\textit{Bypass}:} \\
	conv. (ker: $1{\times}1$, $64 \rightarrow 64\times\ell $; str: $2$; pad: 1), if  $\ell \neq 1$\\
	Batch Normalization, if  $\ell \neq 1$\\
	\multicolumn{1}{l}{\textit{Feedforward}:} \\
	conv. (ker: $3{\times}3$, $64 \rightarrow 64\times\ell $; str: $1_{\ell=1} / 2_{\ell \neq 1}$; pad: 1) \\
	Batch Normalization \\
	$ReLU$ \\
	MCD ($p=0.2$)\\
	conv. (ker: $3{\times}3$, $64\times\ell \rightarrow 64\times\ell $; str: $1$; pad: 1) \\
	Batch Normalization \\
	\textit{Feedforward} + \textit{Bypass}\\
	$ReLU$ \\
	MCD ($p=0.2$)  \\
\bottomrule
\end{tabular}}
\end{minipage} \hspace{.1em}
\begin{minipage}[b]{0.49\hsize}\centering  
\resizebox{.8\textwidth}{!}{
\begin{tabular}{c}\toprule
 \textbf{ResNet Classifier}\\\toprule
\textit{Input:} $x \in \mathds{R}^{3{\times}32{\times}32} $  \\ \hdashline 
 conv. (ker: $3\times3$; $3 \rightarrow 64$; str: 1; pad: 1) \\
 Batch Normalization \\
 $ReLU$ \\
 MCD ($p=0.2$)\\
 $3\times$ResBlock ($\ell=1$)\\
 $6\times$ResBlock ($l \in [2, 3, 4]$) \\
 $ReLU$ \\
AvgPool (ker:$4{\times}4$ ) \\
Linear($512 \rightarrow 10$) \\
\bottomrule 
\end{tabular}
}
\end{minipage}
\caption{ResNet architectures for the experiments on catastrophic overfitting trained on \textbf{CIFAR-10}. Each ResNet block contains skip connection (bypass), and a sequence of convolutional layers, normalization, and the ReLU non--linearity. 
 For clarity we list the layers sequentially, however, note that the bypass layers operate in parallel with the layers denoted as ``feedforward''~\citep{resnet}. The ResNet block for the model (right) differs if it is the first block in the network (following the input to the model).}
\label{tab:resblock}
\end{table}

\subsection{Increasingly larger model capacity experiments}\label{app:model_capacity_implementation}

Although relatively less than LDP, we observe that when the robustness is improved, the clean test accuracy decreases for UDP too. 
Since the ``complexity'' of the dataset increases with training samples perturbations and while doing so we keep the same models that are used for the (clean) unperturbed dataset, a question arises if this decrease is due to insufficient model capacity. 
As in~\citep{madry2018towards} we run experiments with increasingly larger model capacity on Fashion-MNIST.
In particular, we keep the same architecture--in this case LeNet~\citep{Lecun98gradient}, but we increase the number of filters in the two convolutional layers and in the first fully connected layers. This is done via a multiplicative parameter. We tested with values of this parameter ranging from $2$ to $12$.

\end{document}